\documentclass[journal, twoside, final]{IEEEtran}

\usepackage{amssymb, amsmath, amsfonts}
\usepackage{bm, color, siunitx}
\usepackage{booktabs}
\usepackage{setspace}
\usepackage{url}
\usepackage{graphicx}
\usepackage{longtable, threeparttable, tabularx}
\usepackage{makecell}
\usepackage{multirow}
\usepackage{ragged2e}
\usepackage{fancyhdr}
\usepackage{bbding}
\usepackage{cite}
\usepackage{color}
\usepackage{dsfont}
\usepackage{datetime}
\usepackage{float}
\usepackage[caption=false, font=footnotesize]{subfig}
\newtheorem{lemma}{Lemma}
\newtheorem{corollary}{Corollary}
\newtheorem{proposition}{Proposition}
\newtheorem{assumption}{Assumption}
\newtheorem{remark}{Remark}
\newtheorem{theorem}{Theorem}
\allowdisplaybreaks[4]

\makeatletter

\newcommand{\Rmnum}[1]{\expandafter\@slowromancap\romannumeral #1@}
\makeatother

\usepackage{algorithmic}
\usepackage[ruled]{algorithm2e}

\begin{document}
\title{Energy-Efficient Federated Edge Learning For Small-Scale Datasets in Large IoT Networks}
	\author{Haihui Xie,~\IEEEmembership{Member,~IEEE}, Wenkun Wen,~\IEEEmembership{Member,~IEEE}, Shuwu Chen, Zhaogang Shu,~\IEEEmembership{Senior Member,~IEEE}, \\ and Minghua Xia,~\IEEEmembership{Senior Member,~IEEE}
 	\thanks{Received 28 August 2025; revised 12 February 2026; accepted 8 April 2026. The associate editor coordinating the review of this article and approving it for publication was X. Cao.  This work was supported in part by the Techphant-SYSU joint project under Grant SYSU-76120-20260126-0002, in part by the Fujian Industry-University-Institute Cooperation Project of China under Grants 2024H6007 and 2024H6030, and in part by the Overseas Research and Further Education Program for Young and Middle-aged Backbone Teachers from Universities in Fujian Province of China. \textit{(Corresponding authors: Wenkun Wen; Minghua Xia.)}}
	   \thanks{Haihui Xie, Shuwu Chen, and Zhaogang Shu are with the College of Computer and Information Sciences, Fujian Agriculture and Forestry University, and also with the Engineering Research Center of Smart Sensing and Agricultural Chip Technology, Fujian Province University, Fuzhou 350002, China (e-mail: xiehh@fafu.edu.cn; chenshuwu@fafu.edu.cn; zgshu@fafu.edu.cn).}
   	   \thanks{Wenkun Wen is with the R\&D Department, Techphant Technologies Company Ltd., Guangzhou 510310, China (e-mail: wenwenkun@techphant.net).}
   	   \thanks{Minghua Xia is with the School of Electronics and Information Technology, Sun Yat-sen University, Guangzhou 510006, China 	(e-mail: xiamingh@mail.sysu.edu.cn).}
	   
	\thanks{Digital Object Identifier 10.1109/TWC.2026.3683911}
}

\markboth{IEEE Transactions on Wireless Communications} {Xie \MakeLowercase{\textit{et al.}}: Energy-Efficient Federated Edge Learning For Small-Scale Datasets in Large IoT Networks}

\maketitle

\IEEEpubid{\begin{minipage}{\textwidth} \ \\[12pt] \centering 1536-1276 \copyright\ 2026 IEEE. All rights reserved, including rights for text and data mining, and training of artificial intelligence \\ and similar technologies. Personal use is permitted, but republication/redistribution requires IEEE permission. \\
See \url{https://www.ieee.org/publications/rights/index.html} for more information.\end{minipage}}
	
\maketitle
	
\begin{abstract}       
	Large-scale Internet of Things (IoT) networks enable intelligent services such as smart cities and autonomous driving, but often face resource constraints. Collecting heterogeneous sensory data, especially in small-scale datasets, is challenging, and independent edge nodes can lead to inefficient resource utilization and reduced learning performance. To address these issues, this paper proposes a collaborative optimization framework for energy-efficient federated edge learning with small-scale datasets. We first derive an expected learning loss to quantify the relationship between the number of training samples and learning objectives. A stochastic online learning algorithm is then designed to adapt to data variations, and a resource optimization problem with a convergence bound is formulated. Finally, an online distributed algorithm efficiently solves large-scale optimization problems with high scalability. Extensive simulations and autonomous navigation case studies with collision avoidance demonstrate that the proposed approach significantly improves learning performance and resource efficiency compared to state-of-the-art benchmarks.
    \end{abstract}
	
\begin{IEEEkeywords}
	\noindent Edge intelligence, federated learning (FL), Internet of Things (IoT), learning-oriented communication, small-scale datasets.
\end{IEEEkeywords}

\IEEEpubidadjcol

\section{Introduction} \label{Section1}
\IEEEPARstart{W}{ith} the rapid growth of the Internet of Things (IoT), applications such as smart cities, smart factories, intelligent transportation, and autonomous driving are generating massive volumes of data. Edge intelligence enables local data processing on IoT devices, improving communication efficiency and responsiveness. Wireless and cellular communications are particularly well-suited for such applications due to their ability to support high data rates and wide coverage. Achieving energy-efficient edge intelligence requires not only selecting the right communication technology but also optimizing resource allocation strategies~\cite{10258360}.  
		
Federated learning (FL) has emerged as a key paradigm for edge intelligence, enabling distributed edge devices to collaboratively train models without requiring centralized data aggregation. Despite extensive research on FL~\cite{10444714, 9605599, 9681911, 9252924, 9252927}, most existing designs prioritize learning performance while overlooking communication constraints. Since reliable communication is indispensable in distributed networks, it is crucial to jointly optimize learning objectives and resource allocation.  
		
\subsection{Related Works and Motivation}	
	To ensure reliable communication in energy-efficient and low-cost IoT networks, recent studies have explored resource allocation under constrained environments~\cite{9273233, 9328513, 10026196}. For example, the study \cite{10026196} proposed a sum-rate maximization (SRM) scheme to exploit wireless gains. However, such throughput-oriented schemes often fail to deliver sufficient learning performance~\cite{9606720}. To overcome these limitations, \emph{task-oriented} resource allocation has been investigated~\cite{9151375, 10120724, 10400772, 10084349, 9606667, 10217150, 9970330, 10716739, 9653664}. Specifically, the works~\cite{9151375, 10120724, 10400772} introduced learning-centric power allocation models to allocate limited energy resources more effectively, while~\cite{10084349, 9606667} developed quality-of-training maximization (QoT-Max) schemes to preserve task information under bandwidth-limited inference. Other works~\cite{10217150, 9970330, 10716739} jointly optimized sensing, communication, and computation to maximize accuracy. Additionally,~\cite{9653664} proposed semantic transmission methods that extract key task-related features from multimodal IoT data to accelerate learning.  
 	
	In addition, \emph{task-aware} communication schemes have been studied to improve resource efficiency in federated edge learning frameworks~\cite{9052677, 9163301, 9252954, 9982368, KouWZ0CNW23}. For instance,~\cite{9052677} proposed a federated edge learning model that minimizes energy costs by selecting optimal local data samples, which is solved via the distributed alternating direction method of multipliers (ADMM). The work in~\cite{9163301} introduced gradient norms as indicators of data importance, enabling joint data selection and communication allocation to improve efficiency. Similarly,~\cite{9252954} designed a sample-level importance metric based on maximum categorical probability, pruning less significant samples to reduce energy consumption. Furthermore, hierarchical FL frameworks~\cite{9982368, KouWZ0CNW23} optimized communication scheduling under budget constraints to improve scalability.  

 \IEEEpubidadjcol
  	
	Another challenge arises from the data acquisition process in distributed IoT networks. Heterogeneous sensory data enhances coverage and generality~\cite{10529137}, but most existing studies are designed under \emph{offline modes}, where models rely on static pre-collected datasets~\cite{9151375, 10120724, 10084349, 9606667, 10217150, 9970330, 9653664, 9052677, 9163301, 9252954, 9982368, KouWZ0CNW23}. Offline frameworks, including reinforcement learning methods~\cite{10816163}, can improve efficiency but often suffer from limited adaptability when datasets lack diversity. To address this, online reinforcement learning approaches~\cite{9738819, 10330753} have been proposed, which enable continuous data collection and model adaptation. Techniques such as prioritized experience replay and stochastic policies improve both sample efficiency and exploration, making online learning more robust to environmental variations.  

	Motivated by these insights, analyzing the theoretical relationship between learning performance and resource utilization is essential for federated edge learning. Existing \emph{task-oriented} schemes~\cite{9982368, KouWZ0CNW23, 9151375, 10120724} require large-scale deterministic datasets, making real-time optimization computationally expensive. Meanwhile, \emph{task-aware} methods~\cite{9052677, 9163301, 9252954} often lead to low resource utilization in distributed networks, where nodes operate independently. Moreover, the interplay between task-oriented and task-aware schemes is rarely studied, leading to redundant computation or degraded performance.  

	These challenges motivate the design of an energy-efficient federated edge learning framework that bridges task-oriented and task-aware paradigms, leverages expected learning loss to capture dataset–performance relationships, and adapts resource allocation to dynamic IoT environments.  

\subsection{Our Approach}	
	To address the limitations above, we propose an \emph{energy-efficient federated edge learning framework} designed for large-scale IoT networks with small-scale training datasets. The core idea is to jointly optimize learning objectives and wireless resource allocation while adapting to dynamic environments.  

	First, we develop a cloud-edge-end collaborative framework to balance resource allocation and learning performance. Unlike conventional task-oriented methods that rely heavily on deterministic datasets, we derive an expected learning loss that characterizes the fundamental relationship between the number of training samples and model performance. This provides a theoretical foundation for capturing learning dynamics under limited datasets.  

	Second, to support dynamic data environments, we design a stochastic online learning algorithm that progressively adjusts the dataset size during training. The algorithm leverages a convergence bound, derived via the majorization–minimization (MM) framework, to ensure that model updates remain stable as new mini-batches are incorporated. This enables efficient learning from small-scale data samples while preserving adaptability to real-time end-device conditions.  

	Third, for scalability in large-scale IoT networks, we propose a distributed optimization algorithm that eliminates redundant variables and fixes an optimization direction. This design significantly improves convergence probability compared with conventional ADMM-based approaches, achieving faster training and higher resource utilization.  

	Finally, to validate the effectiveness of our framework, we deploy the proposed algorithms on high-fidelity simulation platforms, including the Intelligent Robot Simulator (IR-SIM)~\cite{9645287} and Car Learning to Act (CARLA)~\cite{9982368}. Experimental results demonstrate that our approach achieves superior convergence speed and energy efficiency, highlighting the practical benefits of resource-aware federated edge learning in real-world autonomous systems.  

\subsection{Main Contributions}
	The main contributions of this paper are summarized below:  
	\begin{itemize}
 		\item[1)] Expected learning loss formulation: We derive an expected learning loss that rigorously characterizes the relationship between training dataset size and learning performance. This provides a principled approach to linking communication resources with model convergence, particularly in small-scale dataset scenarios.  
    
		\item[2)] Stochastic online learning algorithm: We design an adaptive online learning algorithm that dynamically adjusts the dataset size during training. By deriving a convergence bound through the MM framework, the algorithm ensures stable updates and efficient learning from streaming data.  
    
		\item[3)] Distributed optimization for large-scale IoT: We propose a distributed optimization algorithm that eliminates redundant variables and enforces a fixed optimization direction. Compared with conventional ADMM, this approach improves convergence probability, scalability, and resource utilization in large-scale IoT networks.  
    
		\item[4)] Practical validation: We implement the proposed framework on high-fidelity platforms, including IR-SIM~\cite{9645287} and CARLA~\cite{9982368}. Experimental results and case studies demonstrate faster convergence, reduced energy consumption, and improved performance in autonomous navigation and collision-avoidance tasks.  
	\end{itemize}
	
	The rest of this paper is organized as follows. Section~\ref{S2} introduces the system model and formulates the optimization problem. Section~\ref{S3} analyzes the relationship between edge learning performance and the training dataset size, and develops an adaptive learning model for online data collection. Section~\ref{Section-Centralized} proposes a centralized MM-based algorithm for small- and medium-sized IoT networks. For large-scale networks, Section~\ref{Section-Distributed} presents a distributed algorithm and an accelerated momentum-based variant. Section~\ref{S6} provides simulation results and a case study to evaluate the proposed scheme. Finally, Section~\ref{S7} concludes the paper.
	
{\it Notation}: Scalars, column vectors, and matrices are denoted by italic letters, bold lower-case letters, and bold upper-case letters, respectively. The vectors $\bm{\mathsf{1}}$ and $\bm{\mathsf{0}}$ denote all-ones and all-zeros column vectors, respectively. The superscripts $(\cdot)^T$ and $(\cdot)^H$ represent the transpose and Hermitian transpose, respectively. The $\ell_2$-norm of a vector $\bm{x}$ is denoted by $\|\bm{x}\|_2$. The notation $\mathcal{CN}(\bm{0},\varrho\bm{I})$ denotes a circularly symmetric complex Gaussian distribution with mean $\bm{0}$ and covariance matrix $\varrho\bm{I}$. For a set $\mathcal{X}$, $|\mathcal{X}|$ denotes its cardinality, and $\mathcal{Y}\setminus\mathcal{X}$ denotes the set difference. The operator $\mathbb{E}(\cdot)$ denotes expectation, and $\mathcal{O}(\cdot)$ denotes the order of arithmetic operations. The main symbols used throughout the paper are summarized in Table~\ref{S1-T1}.

\begin{table}[t!]
	\scriptsize
	\setstretch{1.1}
	\centering
	\caption{Summary of main notations.}
	\setlength{\tabcolsep}{2pt}
	\renewcommand{\arraystretch}{1}
	\begin{tabular}{!{\vrule width 1pt} c !{\vrule width 1pt} l !{\vrule width1pt}}
		\Xhline{1.2pt}
		\textbf{Symbol} & \textbf{Definition} \\
		\Xhline{1.0pt}
		$\mathcal{K}, \, \mathcal{K}_{i}$ &  User set; user set at edge node $i$ \\
		$\mathcal{X} (t), \, \mathcal{X}_{i} (t)$ & Collected dataset (all devices); dataset at node $i$ \\
		$\bm{w} (t), \, \bm{w}_{i} (t + 1)$ & Global model; local model at node $i$ \\
		$A (t), \, A_{i} (t)$ & Initial number of collected samples at epoch $t$ (all nodes; node $i$) \\
		$F (\bm{w}), \, F_{i} (\bm{w})$ & Global loss; local loss at node $i$ \\
		$\mathcal{D}, \, \mathcal{D}_{i}$ & The whole data distribution, data distribution at edge node $i$ \\
		$D, \, D_{i}$ & Total dataset size; dataset size at node $i$ \\
		$\alpha, \, \alpha_{i}$ & Data-deficit factor (all nodes; node $i$) \\
		$\bar{F}, \, \bar{F}_{i}$ & Expected learning loss (cloud); expected loss at node $i$ \\
		$\bm{p}(t), \, \bm{p}_{i}(t)$ & Power allocation vector; power vector at node $i$ \\
		$G_{k, \, l}$ & Channel gain from user $l$ when decoding user $k$ \\
		$P, \, B, \, T, \, \sigma^{2}$ & Power budget; bandwidth; transmission time; noise power \\
		$\bm{s}_{j}, \, \xi_{1}, \, \xi_{2}, \, L$ & Random sample; SGD noise constants; smoothness constant \\
		$r_{l}, \, r_{c}, \, r_{g}$ & Learning rate; collision rate; goal rate \\
		$T_{l}, \, N_{c}, \, N_{g}$ & Training time; number of collisions; number of goals \\
		$T_{o}, \, N_{a}, \, N_{u}$ & Optimization time; number of avoidances; failed goals \\	
		\Xhline{1.0pt}
	\end{tabular}
	\label{S1-T1}
\end{table}

\section{System Model and Problem Formulation} \label{S2}
	In this section, we first describe the system model of cloud-edge-end collaboration. Then, we formulate the learning-oriented resource allocation problem.	

\subsection{System Model}
	Fig.~\ref{S2-Fig0} illustrates a federated edge learning system consisting of a cloud server with an aggregated parameter vector $\bm{w} (t)$, $I$ edge servers each equipped with $N$ antennas serving user sets $\mathcal{K} \triangleq \{ \mathcal{K}_{1}, \, \mathcal{K}_{2}, \, \cdots, \, \mathcal{K}_{I} \}$, local learning parameters $\{ \bm{w}_{1} (t), \, \cdots, \, \bm{w}_{I} (t) \}$ and variable training datasets $\mathcal{X} (t) \triangleq \{ \mathcal{X}_{1} (t), \, \mathcal{X}_{2} (t), \, \cdots, \, \mathcal{X}_{I} (t) \}$, respectively. Here, edge node $i$ also has an edge server, user set $\mathcal{K}_{i}$, local parameter $\bm{w}_{i} (t)$, and training dataset $\mathcal{X}_{i} (t)$. In particular, each user set contains various IoT devices or sensors. 
	
	Fig.~\ref{S2-Fig0} also demonstrates the complete data collection pipeline, model training, and information exchange. The cloud server coordinates this federated edge learning framework through global parameter aggregation and dynamic power allocation. Specifically, the cloud server synthesizes model updates from edge nodes and optimizes wireless resource utilization. At the edge node, each server primarily collects heterogeneous training data from associated IoT devices, trains a local model using the collected datasets, and then transmits the model parameters to the cloud server. Each edge node is associated with various IoT devices that generate diverse training data through their interactions with the environment. These IoT devices should transmit sensory data to their designated edge servers, balancing communication overhead and data utility. During the iterative processes, edge servers continuously train local models using newly collected data while the cloud server aggregates these updates into an improved global model. This cyclic process continues until convergence, yielding a robust global model that is well-suited for distributed networks.

	In resource-constrained wireless networks, we perform a dynamic power allocation for efficient communications. To maximize the network utility function of long-term average data rates, by recalling the seminal Shannon formula, the achievable data rate of device $k$ can be expressed as \cite{9151375}
		\begin{equation} \label{S2-EQ4}
			R_{k} = \log_{2} \Bigg( 1 + \dfrac{ G_{k, k} p_{k} }{ \sum_{ \ell \in \mathcal{K} \setminus k } G_{k, \ell} p_{\ell} + \sigma^{2} } \Bigg), \, k \in \mathcal{K}_{i},
		\end{equation}
	where $\sigma^{2}$ denotes the variance of additive white Gaussian noise; $G_{k, \ell}$ represents the composite channel gain from the $\ell^{\rm th}$ device to the edge server when decoding data of the $k^{\rm th}$ device, computed as $G_{k, k} = \rho_{k} \| \bm{h}_{k} \|_{2}^{2}$ if $\ell = k$, and $G_{k, \ell} = { \rho_{\ell} | \bm{h}_{k}^{H} \bm{h}_{ \ell } |^{2} }/{ \| \bm{h}_{k} \|^{2}_{2} }$ if $\ell \neq k$, with $\bm{h}_{k} \in \mathbb{C}^{N \times 1}$ being the complex-valued channel fast-fading vector from the $k^{\rm th}$ device to the edge server and $\rho_{k}$ being the path loss of the $k^{\rm th}$ device. By \eqref{S2-EQ4}, the number of collected data samples at edge node $i$ is computed by
		\begin{equation} \label{S2-EQ5}
			|\mathcal{X}_{i}| = \sum_{k \in \mathcal{K}_{i}} \left\lfloor \dfrac{B T R_{k}}{V} \right\rfloor + A_{i} \approx \sum_{k \in \mathcal{K}_{i}} \dfrac{B T R_{k}}{V} + A_{i},
		\end{equation}
	where $B$ denotes the total bandwidth in \si{Hz}; $T$ represents the transmission time in seconds; $V$ is the bit number of each data sample, and $A_{i}$ is the initial number of historical data for the $i^{\rm th}$ edge node.

\begin{figure}[t!]
	\centering
	\includegraphics[width=0.475\textwidth]{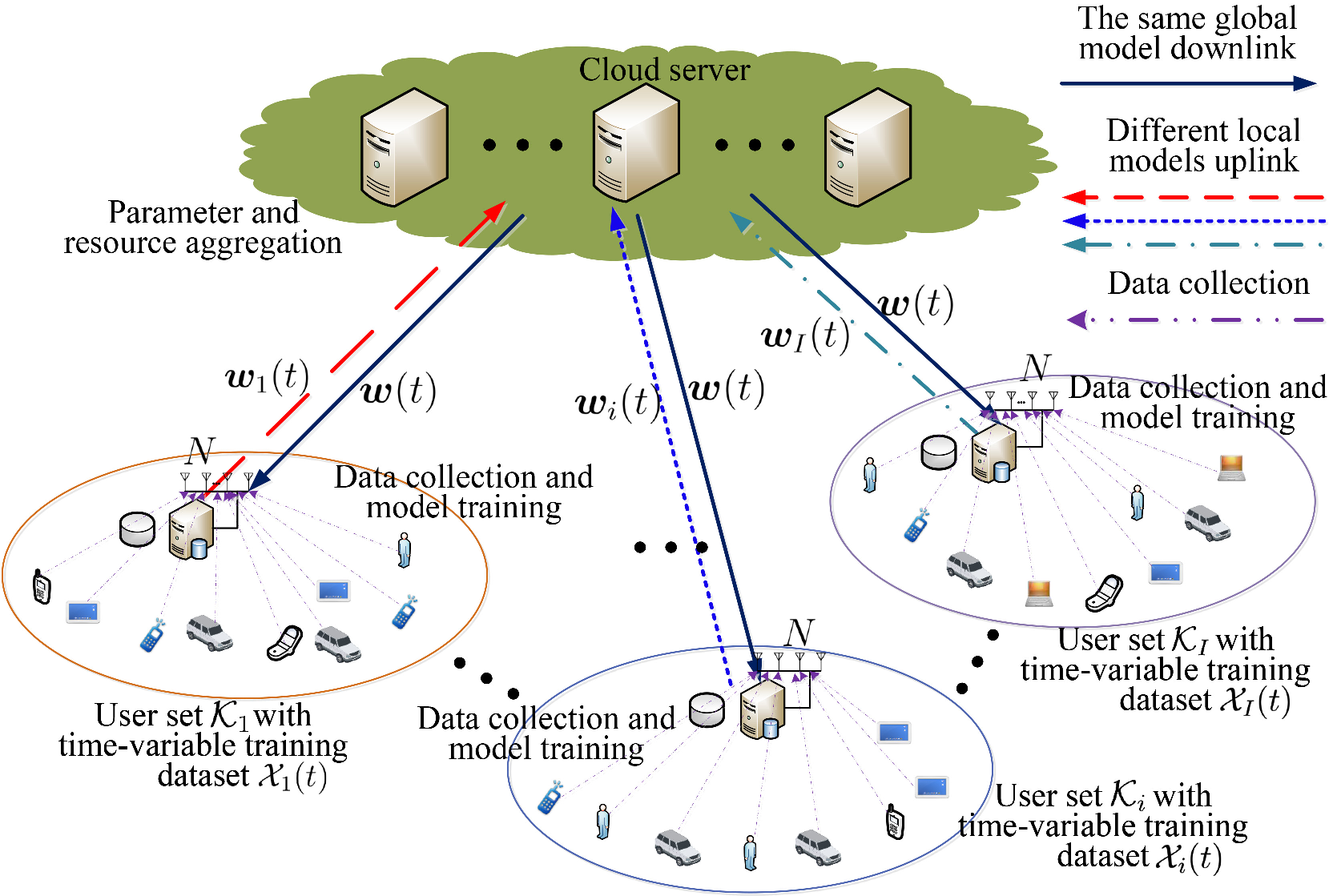}
	\caption{The cloud-edge-end collaborative system in resource-constrained large-scale IoT networks.}
	\label{S2-Fig0}
\end{figure}

\subsection{Problem Formulation}	
	This paper aims to design an energy-efficient federated edge learning model by jointly optimizing the expected loss function and wireless resource allocation strategies. Reinforcement learning typically optimizes a non-parametric learning loss under a parameterized distribution. Motivated by this principle, we introduce a learning loss formulation where the distribution of wireless resource parameters serves as the objective of optimization. To this end, we first derive an expected learning loss that captures the fundamental trade-off between the number of training data samples and the resulting learning performance.

Specifically, consider the federated learning system illustrated in Fig.~\ref{S2-Fig0}, the local loss function at edge server $i$ is defined as
		\begin{equation}
			F_{i} \left( \bm{w} \right) \triangleq \dfrac{1}{\left| \mathcal{D}_{i} \right|} \mathbb{E}_{\mathcal{D}_{i}} \left( \sum_{j \in \mathcal{D}_{i}} f (\bm{w} \mid \bm{s}_{j} ) \right), 
		\end{equation}
	where $\bm{w}$ is a global aggregated parameter; $\bm{s}_{j}$ denotes an arbitrary training data; $f (\cdot \mid \bm{s}_{j})$ is a loss function given the training data $\bm{s}_{j}$, and $\mathcal{D}_{i}$ sufficiently describes the data distribution at edge node $i$. Therefore, $\mathcal{D} \triangleq \cup_{i=1}^{I} \mathcal{D}_{i}$ describes the distribution of the whole dataset \cite{BottouCN18}. Moreover, suppose that $\mathcal{D}_{i} \cap \mathcal{D}_{i^{\prime}} = \emptyset$ for all $i \neq i^{\prime}$, then, the global loss can be expressed as
		\begin{equation}
			F (\bm{w}) \triangleq \frac{1}{D}\sum_{i = 1}^{I} D_{i} F_{i} (\bm{w}) = \sum_{i = 1}^{I} \alpha_{i} F_{i} (\bm{w}),
		\end{equation}
	where $D_{i} \triangleq |\mathcal{D}_{i}|$ and $D \triangleq |\mathcal{D}|$ denote the number of datasets, $\alpha_{i} \triangleq D_{i}/ D$ reflects the fraction of local models that are involved in aggregation.
	
	However, obtaining comprehensive datasets $\mathcal{D}_{i}$ for learning $F_{i} (\bm{w})$, for all $i = 1, \cdots, I$, is a significantly challenge. Therefore, we should employ the collected dataset $\mathcal{X}_{i} (t)$ to asymptotically optimize the true learning model $F_{i} (\bm{w})$. Accordingly, we formulate a resource optimization problem to minimize the learning loss concerning wireless resource constraints, given by 
		\begin{subequations} \label{P1}
			\begin{align}
				\mathcal{P}_{1}: \min_{\bm{p}, \, \{P_{i}\}_{i = 1}^{I}} \ & \dfrac{1}{D} \sum_{i = 1}^{I} D_{i} F_{i} (\bm{w} ( \mathcal{X}_{i} (t) ))  \label{S2-EQ6a} \\
				{\rm s.t.} \
				& |\mathcal{X}_{i} (t)| \leq D_{i}, \ i = 1, \, \cdots, \, I, \label{S2-EQ6b} \\
				& \sum_{i = 1}^{I} |\mathcal{X}_{i} (t)| \leq D, \label{S2-EQ6c} \\
				& \sum_{k \in \mathcal{K}_{i}} p_{k} \leq P_{i}, \, p_{k} \geq 0, \ \forall k \in \mathcal{K}, \label{S2-EQ6d} \\
				& \sum_{i = 1}^{I} P_{i} = P, \label{S2-EQ6e} 
			\end{align}
		\end{subequations}
	where \eqref{S2-EQ6a} implies that the global parameter $\bm{w}$ is determined by $\mathcal{X}_{i} (t)$ due to \eqref{S2-EQ5}; \eqref{S2-EQ6b} and \eqref{S2-EQ6c} denote the maximum local stored capacity of dataset at the edge node; \eqref{S2-EQ6d} denotes that the power allocation of devices adapts to the distributed framework; \eqref{S2-EQ6e} denotes that the power allocation of edge nodes should not exceed the total power budget~$P$ \cite{9151375}. 
	
	Compared with the optimization model with independent resource allocation \cite{9580583}, the proposed model implements the resource optimization more efficiently, since it is deployed at the cloud server, e.g., the gateway of a large-scale IoT network, which informs the devices of their transmit powers and other parameters through the downlink control channel, e.g., the narrow-band physical downlink control channel in NB-IoT networks \cite{zhang2024}. However, the explicit form of $F_{i} (\bm{w})$ related to $\mathcal{X}_{i} (t)$ is undetermined, due to the randomness of $\mathcal{X}_{i} (t)$ and the non-convexity of $F_{i} (\bm{w})$. This implies that finding an exact analytical expression for the power allocation $\bm{p}$ in $\mathcal{P}_{1}$ is mathematically intractable. Thus, we derive an analytical upper bound in Section~\ref{S3-A} to establish the connection between wireless resource allocation and the learning performance.
	
	Note that this work focuses on optimizing the communication energy via transmit power control for online data collection. The energy consumption of local computation (e.g., SGD updates) is not included in $\mathcal{P}_{1}$, since the learning architecture and training hyperparameters are fixed and the computational cost is largely independent of the transmit power decisions. Moreover, our formulation is quantity-oriented, assuming that uploaded samples from each node are representative of its local distribution. We acknowledge that, in heterogeneous IoT environments, computation energy and data importance may both be significant; incorporating these factors is feasible by augmenting the energy model with standard CPU-cycle-based formulations and introducing importance weights estimated from local training feedback (e.g., validation loss or gradient statistics), which is an interesting direction for future work.

\section{Convergence Analysis and Problem Transformation} \label{S3}
	This section first derives the relationship between learning performance and training datasets. Then, we design a learning algorithm that simultaneously collects data and trains a model. Next, we prove the convergence and derive the upper bound of learning algorithms. At last, we transform the original problem $\mathcal{P}_{1}$ into a more mathematical tractable one. 

	\subsection{Relationship Between Learning Performance and Training Datasets} \label{S3-A}	
	We begin by examining the relationship between the performance of the learning model and the training datasets $\mathcal{X}_i(t)$ used to optimize the model parameters $\bm{w}$. The datasets $\mathcal{X}_i(t)$ are collected from the $i^{\rm th}$ edge node in the network and are utilized to approximate the local learning models $F_i(\bm{w})$. However, the optimal performance of these models cannot be rigorously evaluated without knowledge of the underlying data distribution at this optimal state.

To address this, we first analyze the optimal state of the learning model under the conditions provided by $\mathcal{X}_i(t)$, and then establish a connection between the learning performance and the dataset characteristics, as presented in Lemma~\ref{S2-LM1}.
	
	\begin{lemma} \label{S2-LM1}
Let \( X \in \mathcal{D}_i \) be a training dataset and \( Y \in \mathbb{R} \) be the corresponding output (learning loss), where \( \mathcal{D}_i \) denotes the dataset associated with the \( i^{\rm th} \) edge node and \( F_i(\bm{w}) \) is the learning model. Let the joint probability density function of \( X \) and \( Y \) be denoted by \( f_{X, Y}(x, y) \). Then, the expected learning loss \( \bar{F}_i(\bm{w}) \) at the \( i^{\rm th} \) edge node is given by the conditional expectation:
\begin{equation} \label{S2-EQ6}
	\bar{F}_i(\bm{w}) = \mathbb{E}_{f_{X,Y}(x,y)} \left( Y \mid X \in \mathcal{D}_i \right).
\end{equation}	
\end{lemma}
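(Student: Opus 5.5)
The lemma is essentially a definitional identity: the local loss $F_i(\bm{w})$ is an expectation over the distribution $\mathcal{D}_i$. The plan is to rewrite that expectation in terms of the joint density $f_{X,Y}(x,y)$ of the training data $X$ and the induced loss $Y = f(\bm{w}\mid X)$. We then identify the result with the conditional expectation of $Y$ given $X\in\mathcal{D}_i$.

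First, we write the per-sample average in the definition of $F_i(\bm{w})$ as a single expectation over one random sample drawn from $\mathcal{D}_i$. By linearity of expectation and the normalization $1/|\mathcal{D}_i|$, this gives $F_i(\bm{w}) = \mathbb{E}\big(f(\bm{w}\mid X)\mid X\in\mathcal{D}_i\big)$, where $X$ is a sample and $Y=f(\bm{w}\mid X)$ is its loss.

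Second, we express this via the joint density. Using the conditional density of $Y$ given $X\in\mathcal{D}_i$,
\begin{equation*}
f_{Y\mid X\in\mathcal{D}_i}(y)=\frac{\int_{\mathcal{D}_i} f_{X,Y}(x,y)\,dx}{\int_{\mathcal{D}_i}\int_{\mathbb{R}} f_{X,Y}(x,y)\,dy\,dx},
\end{equation*}
we integrate $y$ against it. This yields
\begin{equation*}
\bar F_i(\bm{w}) = \frac{\int_{\mathcal{D}_i}\int_{\mathbb{R}} y\, f_{X,Y}(x,y)\,dy\,dx}{\Pr(X\in\mathcal{D}_i)},
\end{equation*}
which is exactly $\mathbb{E}_{f_{X,Y}}(Y\mid X\in\mathcal{D}_i)$. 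This is the claimed expression \eqref{S2-EQ6}.

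Third, we check that this coincides with the earlier $F_i(\bm{w})$. Since $Y$ is a deterministic function of $X$, the inner integral $\int y\,f_{Y\mid X}(y\mid x)\,dy$ equals $f(\bm{w}\mid x)$ by the tower property, so we recover the original average loss over $\mathcal{D}_i$.

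The main obstacle is making the setting measure-theoretically honest. $Y$ is a deterministic function of $X$, so a joint density $f_{X,Y}$ exists only in a generalized (degenerate, Dirac) sense. We would handle this either by allowing stochastic labels or noise, so that $Y$ has a genuine conditional law, or by interpreting $f_{X,Y}(x,y)=f_X(x)\,\delta\big(y-f(\bm{w}\mid x)\big)$. We also need $\Pr(X\in\mathcal{D}_i)>0$ and integrability of $Y$. With these standing assumptions, the remaining steps are routine applications of Fubini's theorem and the definition of conditional expectation.
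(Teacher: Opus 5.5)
Your argument is sound under your reading of the statement, but it takes a different route from the paper. You model $X$ as a single sample and $Y=f(\bm{w}\mid X)$ as its deterministic loss. You then show, via the joint density and Fubini, that conditioning on $\{X\in\mathcal{D}_i\}$ reproduces the earlier $F_i(\bm{w})$; in effect you set $\bar F_i:=F_i$ and verify a representation formula.

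The paper instead gives a variational characterization. It treats $Y$ as a genuinely random learning loss attached to the data $X$, and splits the squared-error risk $\mathbb{E}((Y-F_i(\bm{w}\mid X))^2)$ by the law of total expectation. It then defines $\bar F_i$ as the $\ell$ minimizing $\mathbb{E}((Y-\ell)^2\mid X=x)$ and concludes $\ell=\mathbb{E}(Y\mid X=x)$, i.e., the Bayes-optimal predictor under squared loss.

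Your route buys consistency with the paper's definitions: if $X$ is uniform over the pooled data, then $\Pr(X\in\mathcal{D}_i)=\alpha_i$ and total expectation returns $F=\sum_i\alpha_iF_i$. It also takes explicit care about integrability and $\Pr(X\in\mathcal{D}_i)>0$. The Dirac-density issue you flag is an artifact of making $Y$ a function of $X$, which the paper's model avoids.

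What your route does not buy is the optimality property the paper invokes right after the lemma (``$\bar F_i$ is the optimal prediction''). Nor does it give $\bar F_i$ any dependence on the training data: in your model the conditional mean is just $F_i(\bm{w})$ again. The paper's mean-squared-error reading is what makes it natural to estimate $\bar F$ by least-squares fitting of (dataset, loss) pairs to $a|\mathcal{X}(t)|^{-b}$ in Remark~\ref{S3-R2}.

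Relatedly, the paper ends with the pointwise object $\mathbb{E}(Y\mid X=x)$, a function of the data, while you condition on a set event and obtain a single number. For that event to be informative, you should state explicitly that $X$ is drawn from the pooled distribution $\mathcal{D}$ rather than from $\mathcal{D}_i$.
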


\begin{IEEEproof}
Given the dataset $X$, we define the squared error loss function as:
\begin{equation}
	L(Y, F_i(\bm{w} \mid X)) = (Y - F_i(\bm{w} \mid X))^2,
\end{equation}	
where $F_i(\bm{w} \mid X)$ is the model's prediction based on the dataset $X$.

Using the law of total expectation, we decompose the expected squared error as follows:
\begin{equation}
	\mathbb{E}\left((Y - F_i(\bm{w} \mid X))^2 \right) = \mathbb{E}_X \mathbb{E}_{Y|X}\left((Y - F_i(\bm{w} \mid X))^2 \mid X \right).
\end{equation}	
This expresses the total expected loss as the expectation over the input dataset \( X \) and the conditional expectation over the outputs \( Y \), given \( X \).

The goal is to minimize the expected squared error with respect to the model prediction \( F_i(\bm{w} \mid X) \). This minimization problem is equivalent to finding the value \( \ell \) that minimizes:
\begin{equation}
	\bar{F}_i(\bm{w}) = \underset{\ell}{{\rm argmin}} \, \mathbb{E}_{Y \mid X = x} \left( (Y - \ell)^2 \mid X = x \in \mathcal{D}_i \right),
\end{equation}	
which leads to the optimal prediction being the conditional expectation:
\begin{equation}
	\ell = \mathbb{E}[Y \mid X = x].
\end{equation}	
Thus, the expected learning loss is given by the conditional expectation \( \mathbb{E}[Y \mid X] \), which completes the proof.
\end{IEEEproof}

Lemma~\ref{S2-LM1} shows that the expected learning loss $\bar{F}_{i} (\bm{w})$ is the optimal prediction given by \eqref{S2-EQ6}, while using the collected dataset $\mathcal{X}_{i} (t)$ to optimize $\mathcal{P}_{1}$. Thus, we can use \eqref{S2-EQ6} to define the objective function for the training dataset, especially for small-scale datasets. Although \eqref{S2-EQ6} is an implicit mathematical expression, a fitting nonlinear model can be used to estimate its shape \cite{9151375, 10120724}. However, this fitting model typically relies on large-scale datasets, making it challenging to fit it accurately with the limited data available. To address this issue, Section~\ref{Section-Learning} introduces a framework where data collection and model training occur concurrently. Additionally, we derive an upper bound for \eqref{S2-EQ6} to optimize resource allocation under a learning-oriented principle.

\begin{remark}[Parameter Fitting of Expected Learning Losses] \label{S3-R2}
Consider a sequence of nested sample datasets $\mathcal{X}(t_1) \subseteq \mathcal{X}(t_2) \subseteq \cdots \subseteq \mathcal{X}(t_Q)$ collected over ordered time periods $t_1 \leq t_2 \leq \cdots \leq t_Q$. For each $\mathcal{X}(t_m)$, we train the model to convergence and record the learning loss $\ell_m$. Using the pairs $\{ \mathcal{X}(t_m), \ell_m \}_{m=1}^Q$, the parameters $(a,b)$ of the expected loss model \cite{9151375, 10120724}
\begin{equation} \label{S3-EQ6a}
\bar{F}(\mathcal{X}(t)\mid a,b)\approx a|\mathcal{X}(t)|^{-b}
\end{equation}
are fitted via nonlinear least squares:
\begin{equation} \label{S3-EQ6}
\min_{a>0,b>0}\ \frac{1}{Q}\sum_{m=1}^{Q}\left|\ell_m-\bar{F}\!\left(\mathcal{X}(t_m)\mid a,b\right)\right|^2.
\end{equation}

This fitting can be efficiently solved by grid search or gradient-based methods. The adopted power-law form serves as a surrogate diminishing-return model, which is consistent with classical generalization and optimization scaling laws, where the expected excess risk often decays with sample size as $\mathcal{O}(|\mathcal{D}|^{-1/2})$ or $\mathcal{O}(|\mathcal{D}|^{-1})$ under standard assumptions \cite{BottouCN18, Shalev14, Bubeck15}, and with SGD training where the stochastic gradient variance reduces as more samples become available \cite{BottouCN18}. Motivated by these scaling behaviors, we adopt $\bar{F}(\mathcal{X}(t)\mid a,b)\approx a|\mathcal{X}(t)|^{-b}$, where $(a,b)$ are task-dependent and fitted from data rather than assumed universal. The robustness of the proposed framework to learning-loss model mismatch is empirically validated in Figs.~\ref{S6-Fig3} and \ref{S6-Fig8}, where stable training behavior and consistent performance gains are observed under heterogeneous learning dynamics.

When the dataset turns large, the learning curve saturates, and the online optimization can be simplified (e.g., longer control intervals, fixed/quantized power, or early stopping). In extremely data-scarce regimes, the fitting of $(a, b)$ may be noisy; the proposed algorithm mainly relies on the monotonic diminishing-return structure and continuously refines the fitting online as more samples become available. A short warm-up (data-acquisition) phase or conservative default parameters can also be used to improve robustness in very small-sample regimes.
\end{remark}

\subsection{Learning Algorithm} \label{Section-Learning}
	In contrast to traditional static learning problems, $\mathcal{P}_{1}$, as given by \eqref{P1}, presents a stochastic learning problem with dynamic optimization objectives and datasets. Thus, we design a stochastic online learning algorithm to address this issue. In principle, this algorithm sequentially processes incoming data, making decisions or updates based on stochastic samples or noisy observations. Specifically, we regard $\mathcal{X}_{i} (t) \subseteq \mathcal{D}_{i}$ as a random mini-batch dataset,\footnote{ As the number of iterations increases, the collected sample sets grow progressively over time. For analytical tractability, we treat each sampled dataset as a mini-batch, thereby mitigating stochastic variations during training. In practice, even with a large-scale dataset, mini-batch learning remains an effective approach for efficiently training models while maintaining stability.} then employ the stochastic gradient descent to minimize $F (\bm{w})$:
		\begin{subequations}
			\begin{align}
				\bm{w}_{i} (t + 1) &{}= \bm{w}_{i} (t) - \dfrac{\eta}{|\mathcal{X}_{i} (t)|} \sum_{j \in \mathcal{X}_{i} (t)} f \left(\bm{w} (t) \mid \bm{s}_{j}\right), \label{S2-EQ3a} \\
				\bm{w} (t + 1) &{}= \sum_{i = 1}^{I} \alpha_{i} \bm{w}_{i} (t + 1), \label{S2-EQ3b}
			\end{align} 
		\end{subequations}
	where \eqref{S2-EQ3a} denotes that edge node $i$ updates the learning parameter $\bm{w}_{i} (t + 1)$ using the collected dataset $\mathcal{X}_{i} (t)$ at the $t^{\rm th}$ iteration, and \eqref{S2-EQ3b} represents that the cloud server aggregates learning parameters from $I$ edge nodes.
	
	As the training samples $\mathcal{X}_{i} (t)$ in \eqref{S2-EQ3a} are not dedicated and change over time, this learning process can supplement the dataset of small training samples periodically. To ensure the reliability and stability of the stochastic online gradient descent algorithm, we provide theoretical guarantees for convergence analysis in the following Section~\ref{Section-Convergence}.

\subsection{Convergence Analysis} \label{Section-Convergence}
Now, we analyze the convergence behavior of the learning algorithm as the mini-batch dataset size increases. For this, we make the following standard assumptions.
\begin{assumption} \cite[Assumption 4.1]{BottouCN18} \label{S3-AS1}
The global loss function $F: \mathbb{R}^{d} \rightarrow \mathbb{R}$ is continuously differentiable and has an $L$-Lipschitz continuous gradient, i.e.,
\begin{equation} \nonumber
\| \nabla F ( \bm{x} ) - \nabla F ( \bm{y} ) \|_{2} \leq L \| \bm{x} - \bm{y} \|_{2}, \quad \forall \bm{x},\bm{y}\in\mathbb{R}^d,
\end{equation}
where $L>0$ is the smoothness constant of $F$ (also referred to as the gradient Lipschitz constant), which characterizes the smoothness (curvature upper bound) of the loss landscape induced by the adopted model architecture and regularization.
\end{assumption}

\begin{assumption} \cite[Assumption 4.3]{BottouCN18} \label{S3-AS2}
There exist scalars $\xi_{1} > 0$ and $\xi_{2} > 0$ such that
\begin{equation} \nonumber
\mathbb{E}_{ {\bm s}_{j} } \!\left[ \| \nabla f \left(\bm{x} \mid \bm{s}_{j} \right) \|_{2}^{2} \right]
\leq \xi_{1} + \xi_{2} \| \nabla F ( \bm{x} ) \|_{2}^{2}, \quad \forall \bm{x}\in\mathbb{R}^d,
\end{equation}
where $\nabla f(\bm{x}\mid \bm{s}_{j})$ is the stochastic gradient evaluated on a random sample $\bm{s}_{j}$.
Here, $\xi_1$ quantifies the intrinsic stochastic gradient noise (i.e., a variance-related term that remains even near stationarity), while $\xi_2$ controls how the second moment of the stochastic gradient scales with the squared norm of the full gradient.
\end{assumption}

Assumption~\ref{S3-AS1} ensures the smoothness of $F(\bm{x})$, which is satisfied by many neural network losses, including convolutional neural networks \cite{VirmauxS18, LiHYWZ20}. Assumption~\ref{S3-AS2} captures the bias--variance tradeoff inherent in stochastic gradient descent \cite{10444714}. Both assumptions are widely adopted in convergence analysis, even for non-convex objectives \cite{9252927, 9252924}.

Under these assumptions, we derive an upper bound on the expected optimality gap $\mathbb{E}\!\left[ F(\bm{w}(t)) - F(\bm{w}^{*}) \right]$, as stated below. In particular, building upon Lemma~1, we incorporate the effect of \emph{online data collection} by explicitly linking the dataset size to time, i.e., $|\mathcal{X}|\rightarrow |\mathcal{X}(t)|$. By introducing the scaling factor $\alpha$ to characterize the data-deficit effect in small-scale dataset regimes, we obtain the $\alpha$-dependent convergence bound in Theorem~\ref{S3-TM1}, which directly couples learning progress with the number of collected samples and enables the subsequent learning-driven power allocation.

\begin{theorem} \label{S3-TM1}
Let $\bm{w}^*$ denote the optimal model parameters and set the learning rate to $\eta = 1/L$. Then, the expected optimality gap satisfies
\begin{align}
	&{}\mathbb{E} \left( F ( \bm{w} (t) ) - F ( \bm{w}^{*} ) \right) \nonumber \\
	&{}\leq \mathbb{E} \Bigg( ( 4 \xi_{2} \alpha )^{t} \left( F ( \bm{w} (0) ) - F ( \bm{w}^{*} ) + \dfrac{2 \alpha \xi_{1}}{ ( 1 - 4 \xi_{2} \alpha ) L } \right) \nonumber \\
	&\hspace{3em}  {}+ \dfrac{2 \alpha \xi_{1}}{( 1 - 4 \xi_{2} \alpha) L} \Bigg), \label{S3-EQ7}
\end{align}
where $\alpha \triangleq (D - |\mathcal{X}(t)|)^2 / D^2$ represents the squared fraction of uncollected data, serving as a data-deficit/statistical-efficiency factor, and $\mathcal{X}(t) \triangleq \cup_{i=1}^{I} \mathcal{X}_i(t)$ denotes the total dataset collected from all devices.
\end{theorem}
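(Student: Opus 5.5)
The plan is to follow the growing-batch (increasing sample size) analysis of inexact gradient methods. We view the mini-batch update \eqref{S2-EQ3a}--\eqref{S2-EQ3b} as a full-gradient step corrupted by an error that comes from the uncollected part of $\mathcal{D}$. The steps are: bound that error in terms of $\alpha$, plug it into the descent lemma from Assumption~\ref{S3-AS1}, obtain a one-step linear recursion for the optimality gap, and unroll it.

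\textbf{Step 1 (inexact-gradient decomposition).} We write the aggregated update as $\bm{w}(t+1)=\bm{w}(t)-\frac{1}{L}\big(\nabla F(\bm{w}(t))+\bm{e}(t)\big)$, with $\eta=1/L$. Here $\bm{e}(t)$ is the difference between the gradient averaged over the collected set $\mathcal{X}(t)$ and the full gradient over $\mathcal{D}$. Splitting the full sum into the part over $\mathcal{X}(t)$ and the part over $\mathcal{D}\setminus\mathcal{X}(t)$ gives
\begin{equation} \nonumber
\bm{e}(t)=\frac{D-|\mathcal{X}(t)|}{D}\Big(\frac{1}{|\mathcal{X}(t)|}\sum_{j\in\mathcal{X}(t)}\nabla f(\bm{w}\mid\bm{s}_j)-\frac{1}{D-|\mathcal{X}(t)|}\sum_{j\notin\mathcal{X}(t)}\nabla f(\bm{w}\mid\bm{s}_j)\Big).
\end{equation}
The triangle inequality and $(a+b)^2\le 2a^2+2b^2$ then yield $\|\bm{e}(t)\|_2^2\le 4\alpha\max_j\|\nabla f(\bm{w}\mid\bm{s}_j)\|_2^2$. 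Taking expectations and applying Assumption~\ref{S3-AS2}, we get $\mathbb{E}\|\bm{e}(t)\|_2^2\le 4\alpha\big(\xi_1+\xi_2\|\nabla F(\bm{w}(t))\|_2^2\big)$. This is the only place where the factor $\alpha=(D-|\mathcal{X}(t)|)^2/D^2$ enters.

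\textbf{Step 2 (descent lemma).} By Assumption~\ref{S3-AS1} with $\eta=1/L$, completing the square gives
\begin{equation} \nonumber
F(\bm{w}(t+1))\le F(\bm{w}(t))-\tfrac{1}{2L}\|\nabla F(\bm{w}(t))\|_2^2+\tfrac{1}{2L}\|\bm{e}(t)\|_2^2.
\end{equation}
Substituting Step 1 and subtracting $F(\bm{w}^*)$ leads to
\begin{equation} \nonumber
\mathbb{E}\,\delta_{t+1}\le \mathbb{E}\Big(\delta_t-\tfrac{1}{2L}\|\nabla F\|_2^2+\tfrac{2\alpha\xi_1}{L}+\tfrac{2\alpha\xi_2}{L}\|\nabla F\|_2^2\Big),
\end{equation}
where $\delta_t\triangleq F(\bm{w}(t))-F(\bm{w}^*)$.

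\textbf{Step 3 (closing the recursion; main obstacle).} To reach the stated rate $4\xi_2\alpha$, two things must happen. First, the term $\delta_t-\frac{1}{2L}\|\nabla F\|_2^2$ must be nonpositive. Second, the remaining gradient term must be converted back into $\delta_t$.

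For the second point, I will use the smoothness consequence $\|\nabla F(\bm{w})\|_2^2\le 2L\,\delta$, which follows from Assumption~\ref{S3-AS1} and the optimality of $\bm{w}^*$. This turns $\frac{2\alpha\xi_2}{L}\|\nabla F\|_2^2$ into $4\xi_2\alpha\,\delta_t$.

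The first point is the delicate step. It needs a gradient-domination (PL-type) inequality $\|\nabla F\|_2^2\ge 2L\,\delta$, which the stated assumptions do not supply on their own. My first attempt will invoke a Polyak--{\L}ojasiewicz condition with constant matched to $L$ under the stepsize $1/L$, or equivalently read the bound as valid in the regime where the full-gradient descent part is nonexpansive. I will then check whether a general PL constant $\mu$ can be absorbed into the stated constants. If it cannot, I will state the extra condition explicitly.

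\textbf{Step 4 (unrolling).} With the recursion $\mathbb{E}\,\delta_{t+1}\le 4\xi_2\alpha\,\mathbb{E}\,\delta_t+\frac{2\alpha\xi_1}{L}$ in hand, and assuming $4\xi_2\alpha<1$ (which holds once enough data has been collected), I unroll it to get
\begin{equation} \nonumber
\mathbb{E}\,\delta_t\le(4\xi_2\alpha)^t\delta_0+\frac{2\alpha\xi_1}{L}\sum_{s<t}(4\xi_2\alpha)^s.
\end{equation}
Bounding the geometric sum by $\frac{1}{1-4\xi_2\alpha}$ gives the second term of \eqref{S3-EQ7}. Adding the nonnegative slack $(4\xi_2\alpha)^t\frac{2\alpha\xi_1}{(1-4\xi_2\alpha)L}$ reproduces exactly the form of \eqref{S3-EQ7}, with the outer expectation accounting for the randomness of $\mathcal{X}(t)$ and hence of $\alpha$.
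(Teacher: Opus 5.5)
Your route is the paper's route. Your $\bm{e}(t)$ is exactly the appendix's $\bm{o}(t)$, and the completed-square descent inequality and the bound $\mathbb{E}\|\bm{e}(t)\|_2^2\le 4\alpha\big(\xi_1+\xi_2\|\nabla F(\bm{w}(t))\|_2^2\big)$ are the same. The paper also ends with a linear recursion unrolled around its fixed point.

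You were right to flag Step~3: it is a genuine gap, and the paper's own proof is unjustified at exactly that point. The appendix replaces $\big(\tfrac{2\xi_2\alpha}{L}-\tfrac{1}{2L}\big)\|\nabla F(\bm{w}(t))\|_2^2$ by $(4\xi_2\alpha-1)\big(F(\bm{w}(t))-F(\bm{w}^*)\big)$ without comment. For $4\xi_2\alpha<1$, that replacement is precisely $\|\nabla F(\bm{w}(t))\|_2^2\ge 2L\,\delta_t$.

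This does not follow from Assumptions~1--2, and no argument can supply it, because the statement is false under those assumptions alone. If $|\mathcal{X}(t)|=D$, then $\alpha=0$ and the right-hand side vanishes for every $t\ge1$. The theorem would then claim that one exact gradient step of size $1/L$ lands on the global minimum. Here is a concrete counterexample:
\begin{itemize}
\item Let every sample have loss $F(\bm{w})=\tfrac{L}{2}w_1^2+\tfrac{m}{2}w_2^2$ with $0<m<L$.
\item Assumption~1 holds, and Assumption~2 holds with $\xi_2=1$ and any $\xi_1>0$.
\item The update is exact gradient descent for every $\mathcal{X}(t)$, so your Steps~1--2 hold trivially with $\bm{e}(t)=\bm{0}$.
\item Start at $\bm{w}(0)=(0,1)^T$. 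Then $\delta_1=\tfrac{m}{2}(1-m/L)^2>0$.
\item The bound is violated for every $\alpha<(1-m/L)^2/4$ once $\xi_1$ is small enough.
\end{itemize}

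Hence your planned check, absorbing a general PL constant, comes out negative. Suppose $\|\nabla F\|_2^2\ge 2\mu\,\delta$ with $0<\mu\le L$. Merge the two gradient terms as the paper does rather than splitting them; splitting is looser and gives the factor $1-\mu/L+4\xi_2\alpha$. Merging gives $\mathbb{E}\,\delta_{t+1}\le\big(1-\tfrac{\mu}{L}(1-4\xi_2\alpha)\big)\mathbb{E}\,\delta_t+\tfrac{2\alpha\xi_1}{L}$. Its contraction factor exceeds $4\xi_2\alpha$ by $(1-\mu/L)(1-4\xi_2\alpha)$, and its floor is $\tfrac{2\alpha\xi_1}{\mu(1-4\xi_2\alpha)}$. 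Both match the stated constants only when $\mu=L$. Together with the smoothness bound you quote, $\mu=L$ forces $\|\nabla F(\bm{w})\|_2^2=2L\big(F(\bm{w})-F(\bm{w}^*)\big)$ identically, as for isotropic quadratics. So you must either assume this explicitly or prove the $\mu$-dependent bound instead.

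Two smaller repairs are needed, and the paper glosses over both as well:
\begin{itemize}
\item In Step~1, Assumption~2 bounds the second moment of a single random sample, not $\mathbb{E}\max_j\|\nabla f(\bm{w}\mid\bm{s}_j)\|_2^2$. Use Jensen, $\|\tfrac{1}{n}\sum_j\bm{g}_j\|_2^2\le\tfrac{1}{n}\sum_j\|\bm{g}_j\|_2^2$, and apply the assumption termwise.
\item In Step~4, $\alpha$ depends on $t$ through $|\mathcal{X}(t)|$. Since the collected set grows, $\alpha_s\ge\alpha_t$ for $s<t$. Unrolling with a single $\alpha$ is therefore valid only if $\alpha$ is taken as a bound on all past deficits, for instance the initial one. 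You also need to condition on the collection process so that $\alpha$ can be pulled through the expectation.
\end{itemize}
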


\begin{IEEEproof}
	See Appendix~\ref{App-Thm1}.
\end{IEEEproof}
	
	Theorem~\ref{S3-TM1} shows that the expected optimality gap at iteration $t$ is upper-bounded by three components: {\it i)} the initial gap $F(\bm{w}(0)) - F(\bm{w}^*)$, independent of the transmitted data size; {\it ii)} a descent term $(4\xi_2 \alpha)^t$, which decreases with more iterations but is only weakly affected by data size; and {\it iii)} a variance term ${2\alpha \xi_1} / ((1 - 4\xi_2 \alpha) L)$, which is strongly dependent on the number of transmitted samples. To tighten the bounds and improve convergence, it is essential to reduce both the descent and variance terms—primarily by increasing the size of the collected dataset $\mathcal{X}(t)$. Intuitively, the data-deficit factor $\alpha$ quantifies how far the currently available training dataset is from the ``statistically sufficient'' regime: a larger $\alpha$ corresponds to a more severe sample deficiency and thus slower effective convergence, whereas $\alpha \to 0$ indicates that the system approaches a data-sufficient regime where classical learning behavior is recovered.

It is noteworthy that Theorem~\ref{S3-TM1} builds on standard SGD convergence tools; the key novelty is that the factor $\alpha$ explicitly captures the data-deficit effect under small-scale datasets, thereby coupling learning progress with the online data-collection process. We also emphasize that the bound is worst-case and mainly used to reveal the monotonic diminishing-return dependence on the number of collected samples, rather than to provide a numerically tight prediction. The constants $\xi_1$ and $\xi_2$ are problem-dependent and vary with the model architecture, optimizer, and data distribution. In practice, they can be estimated during a short warm-up (data-acquisition) phase and further refined online. The proposed framework does not require their exact closed-form values; conservative estimates yield only conservative resource allocation.

To ensure convergence of the training process, we derive the following corollary based on Theorem~\ref{S3-TM1}.

\begin{corollary} \label{S3-C1}
Training convergence is guaranteed if the number of transmitted samples \( \mathcal{X}(t) \) satisfies the lower bound:
\begin{equation} \label{S3-EQ8}
    \left| \mathcal{X}(t) \right| > D - \frac{D}{2\sqrt{\xi_2}}.
\end{equation}
\end{corollary}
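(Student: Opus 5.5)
The plan is to read convergence directly off the bound of Theorem~\ref{S3-TM1}. The right-hand side of \eqref{S3-EQ7} is a geometric transient $(4\xi_2\alpha)^t(\cdot)$ plus a constant variance floor $2\alpha\xi_1/((1-4\xi_2\alpha)L)$. Training converges, meaning the expected optimality gap stays bounded and its transient vanishes as $t\to\infty$, exactly when the contraction factor satisfies $0\le 4\xi_2\alpha<1$. This same condition keeps the denominator $1-4\xi_2\alpha$ positive, so the variance term is finite and nonnegative. The whole proof therefore reduces to translating $4\xi_2\alpha<1$ into a condition on $|\mathcal{X}(t)|$.

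First, I note that $\alpha=(D-|\mathcal{X}(t)|)^2/D^2\ge 0$. By \eqref{S2-EQ6c}, $0\le |\mathcal{X}(t)|\le D$, so $D-|\mathcal{X}(t)|\ge 0$. Second, I impose $4\xi_2\alpha<1$, which reads
\begin{equation*}
\frac{(D-|\mathcal{X}(t)|)^2}{D^2}<\frac{1}{4\xi_2}.
\end{equation*}
Third, I take square roots. This is legitimate because both sides are nonnegative and $D-|\mathcal{X}(t)|\ge 0$, so no absolute value remains. The result is $D-|\mathcal{X}(t)|<D/(2\sqrt{\xi_2})$, which rearranges to \eqref{S3-EQ8}. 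Finally, I check the consequence under this condition. As $t\to\infty$, $(4\xi_2\alpha)^t\to 0$, and the bound tends to the finite floor $2\alpha\xi_1/((1-4\xi_2\alpha)L)$. This is the sense in which convergence is guaranteed: convergence to a neighborhood, as is standard for constant-step SGD.

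The only delicate point is that $\alpha$ depends on $t$ through $|\mathcal{X}(t)|$, so the factor $(4\xi_2\alpha)^t$ is really a product of per-iteration factors. I would handle this by noting that the collected set grows over time (as in Remark~\ref{S3-R2}), so $\alpha$ is nonincreasing in $t$. Hence, once \eqref{S3-EQ8} holds, every later per-step factor is at most the current one, which is below $1$. The product then decays at least geometrically, and the variance terms are uniformly bounded by their value at that time. I would state the corollary under this monotonicity, which matches the online collection model, and otherwise treat $\alpha$ as fixed at a given epoch, as in the statement of Theorem~\ref{S3-TM1}.
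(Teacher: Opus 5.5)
Your proposal is correct and follows the paper's own argument. The paper likewise requires the contraction factor $4\xi_2\bigl((D-|\mathcal{X}(t)|)/D\bigr)^2$ in \eqref{S3-EQ7} to be below one, so that $(4\xi_2\alpha)^t\to 0$, and then solves that inequality for $|\mathcal{X}(t)|$. The paper's one-line proof leaves two points implicit that you make explicit: you justify the square-root step via $0\le|\mathcal{X}(t)|\le D$, and you handle the time-varying $\alpha$ through monotone data collection.
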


\begin{IEEEproof}
From Theorem~\ref{S3-TM1}, convergence is ensured if the descent factor decays to zero as \( t \to \infty \), i.e.,
	\begin{align} \label{S3-EQ10}
		&\lim\limits_{t \rightarrow \infty}\left( 4 \xi_{2} \left( \dfrac{ D - |\mathcal{X} (t)| }{D} \right)^{2} \right)^{t} = 0 \nonumber \\
		& \Longrightarrow \, 4 \xi_{2} \left( \dfrac{ D - |\mathcal{X} (t)| }{D} \right)^{2} \ll 1.
	\end{align}
Solving this inequality yields the condition in \eqref{S3-EQ8}.
\end{IEEEproof}

Corollary~\ref{S3-C1} highlights that a minimum number of initial training samples is necessary to guarantee stable convergence. This emphasizes the importance of initializing each model with sufficiently informative samples \( A_i \) as in \eqref{S2-EQ5}. When a new node joins with very few samples ($A_i \approx 0$), it is placed into a short warm-up (data-acquisition) phase to collect sufficient local data. The edge orchestrator applies a simple admission control rule and only includes nodes with $A_i(t)\ge A_{\min}$ in the online power allocation and training loop. This ensures that the learning updates remain well-conditioned and that the assumptions underlying Theorem~\ref{S3-TM1} are satisfied for the active node set. Additionally, Theorem~\ref{S3-TM1} implies that a tighter upper bound on the convergence gap leads to greater algorithmic stability. Therefore, minimizing this bound becomes a key objective in subsequent Section~\ref{Section-Trans}.

Furthermore, Corollary~\ref{S3-C1} reveals a phase transition in the convergence condition based on the learning loss parameter \( \xi_2 \). When \( 0 < \xi_2 < \tfrac{1}{4} \), the right-hand side of \eqref{S3-EQ8} is negative, indicating a looser convergence requirement and tolerating small initial sample sizes. In contrast, if \( \xi_2 > \tfrac{1}{4} \), the lower bound becomes positive, enforcing a stricter requirement on initial data size for convergence.

To bridge these regimes, we propose an online learning framework that incrementally satisfies the convergence condition by progressively increasing the dataset size. While this approach may slow convergence, it ensures robustness across varying system parameters and model characteristics.

\begin{remark}[Dataset Distribution Shifts]
The analysis in this work assumes stationary local data distributions $\{D_i\}$. In dynamic IoT environments, local distributions may evolve over time, i.e., $D_i \rightarrow D_i(t)$, making the learning objective time-varying. The proposed algorithm remains applicable, as it updates the model using newly collected samples and online stochastic gradients; however, the theoretical guarantee should be interpreted in a tracking/piecewise-stationary sense rather than as convergence to a single fixed optimum. In practice, adaptivity to drift can be improved by periodically re-fitting the surrogate learning-loss model using a sliding window or an exponential forgetting factor, and refreshing the allocation state upon drift detection.
\end{remark}

\subsection{Problem Transformation} \label{Section-Trans}
	The convergence analysis in Theorem~\ref{S3-TM1} highlights the impact of the number of transmitted samples on learning performance. Combining \eqref{S3-EQ7} with \eqref{S2-EQ5}, it is evident that power control also affects learning outcomes. Accordingly, we relax problem \(\mathcal{P}_1\) to the following:
	\begin{align}
		\mathcal{P}_{2}: \min_{\bm{p}} \ & \left( 4 \xi_{2} \alpha \right)^{t} \left( C + \dfrac{2 \alpha \xi_{1}}{ ( 1 - 4 \xi_{2} \alpha \left( |\mathcal{X} (t)|, \bm{p} \right) ) L } \right)  \nonumber \\
			& \quad {}+ \dfrac{2 \alpha \xi_{1}}{ \left( 1 - 4 \xi_{2} \alpha \right) L}  \\
			{\rm s.t.} \ &\eqref{S2-EQ6b}\text{-}\eqref{S2-EQ6d}, \, \eqref{S2-EQ5}, \nonumber
	\end{align}
	where \( C \triangleq F(\bm{w}(0)) - F(\bm{w}^*) \) denotes the initial loss gap.
	
	Under the conditions of Theorem~\ref{S3-TM1} and Corollary~\ref{S3-C1} (i.e., \eqref{S3-EQ8}), \(\mathcal{P}_2\) can be further simplified to	 
	 {\small \begin{align}
		\mathcal{P}_{3}: \min_{\bm{p}} \ & \underbrace{ \left( \dfrac{B T}{V} \sum_{k \in \mathcal{K}} \log_{2} \Bigg( 1 + \dfrac{ G_{k, k} p_{k} }{ \sum_{ \ell \in \mathcal{K} \setminus k } G_{k, \ell} p_{\ell} + \sigma^{2} } \Bigg) + A - D \right)^{2}}_{\triangleq \, \Phi (\bm{p})}  \\
		{\rm s.t.} \ & \eqref{S2-EQ6b} \text{-} \eqref{S2-EQ6d}, \eqref{S2-EQ5} \nonumber
	\end{align}}
\hspace{-1em} where \( A \triangleq \sum_{i=1}^I A_i \) is the total number of initial samples. We assume \( A_i \geq D_i - {D_i}/{(2 \sqrt{\xi_2})} \), which suffices to ensure stable convergence according to Corollary~\ref{S3-C1}.

\section{Centralized MM-based Algorithm} \label{Section-Centralized}
	The problem $\mathcal{P}_{3}$ remains a nonlinear integer programming problem that is challenging to solve due to the presence of an exponential term. To address this issue, we propose an iterative MM-based algorithm that sequentially solves optimization sub-problems until convergence. Specifically, we construct a sequence of upper bounds $\{\widetilde{\Phi}\}$ on $\{\Phi\}$ and substitutes $\{\Phi\}$ in $\mathcal{P}_{3}$ with $\{\widetilde{\Phi}\}$ to obtain the surrogate problems. Given any feasible solution $\bm{p}^{*}$ to $\mathcal{P}_{3}$, we define surrogate functions $\widetilde{\Phi} (\bm{p} \mid \bm{p}^{*})$ as Eq.~\eqref{S4-EQ12} shown at the top of the next page, and the following proposition can be established.
	
 \begin{figure*}[!th]
		\begin{align}
			\widetilde{\Phi} (\bm{p} \mid \bm{p}^{*}) \triangleq {}&{} \left( \dfrac{BT}{V \ln 2} \sum_{k = 1}^{K} \left( \ln \left( \sum_{\ell = 1}^{K} \dfrac{G_{k, \ell} p_{\ell}}{\sigma^{2}} + 1 \right) - \ln \left( \sum_{\ell = 1, \, \ell \neq k}^{K} \dfrac{G_{k, \, \ell} p_{\ell}^{*}}{\sigma^{2}} + 1 \right)  + 1 \right. \right. \nonumber \\
			{}&{} - \left. \left. \left( \sum_{\ell = 1, \, \ell \neq k}^{K} \dfrac{G_{k, \, \ell} p_{\ell}^{*}}{\sigma^{2}} + 1 \right)^{-1} \times \left( \sum_{\ell = 1, \, \ell \neq k}^{K} \dfrac{G_{k, \, \ell} p_{\ell}}{\sigma^{2}} + 1 \right) \right) + A - D \right)^{2}, \label{S4-EQ12}
		\end{align}
\hrulefill
\end{figure*}	

\begin{figure*}[!t]
 	\vspace{-10pt}
	\begin{subequations} 
		\begin{align}
			\mathcal{P}_{4}: \min_{\bm{p}} \ & \widetilde{\Phi} (\bm{p} \mid \bm{p} (t)) \\ 
			{\rm s.t.} \ &\widetilde{\Phi} (\bm{p} \mid \bm{p} (t)) \leq \dfrac{1}{2} D^{2}, \\	
			& \dfrac{BT}{V \ln 2} \sum_{k \in \mathcal{K}_{i}} \left( \ln \left( \sum_{\ell = 1}^{K} \dfrac{G_{k, \ell} p_{\ell}}{\sigma^{2}} + 1 \right) - \ln \left( \sum_{\ell = 1, \, \ell \neq k}^{K} \dfrac{G_{k, \, \ell} p_{\ell} (t)}{\sigma^{2}} + 1 \right) + 1 \right. \nonumber \\
			{}&{} \left. - \left( \sum_{\ell = 1, \, \ell \neq k}^{K} \dfrac{G_{k, \, \ell} p_{\ell} (t)}{\sigma^{2}} + 1 \right)^{-1} \times \left( \sum_{\ell = 1, \, \ell \neq k}^{K} \dfrac{G_{k, \, \ell} p_{\ell}}{\sigma^{2}} + 1 \right) \right) + A_{i} - D_{i} \leq 0, \\
			&\eqref{S2-EQ6d}.
		\end{align}
	\vspace{-10pt}
	\end{subequations}
\hrulefill
\end{figure*}

\begin{proposition}[MM Surrogate: Majorization and Tangency] \label{S4-PR1} For any feasible expansion point $\bm{p}^{*}$, the surrogate function $\widetilde{\Phi}(\bm{p}\mid \bm{p}^{*})$ defined in \eqref{S4-EQ12} satisfies the standard MM conditions:
\begin{itemize}
	\item[(1)] Majorization: $\Phi(\bm{p}) \le \widetilde{\Phi}(\bm{p}\mid \bm{p}^{*})$ for all feasible $\bm{p}$;
	\item[(2)] Tangency: $\widetilde{\Phi}(\bm{p}^{*}\mid \bm{p}^{*}) = \Phi(\bm{p}^{*})$;
	\item[(3)] First-order consistency: $\nabla_{\bm{p}} \widetilde{\Phi}(\bm{p}\mid \bm{p}^{*})\big|_{\bm{p}=\bm{p}^{*}} = \nabla_{\bm{p}} \Phi(\bm{p}^{*})$.
\end{itemize}
\end{proposition}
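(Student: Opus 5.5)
The plan is to reduce all three MM properties to one scalar inequality for the inner sum, and then pass through the square using the sign information supplied by the capacity constraint \eqref{S2-EQ6c}. First I rewrite $\Phi(\bm{p})$ so that it has the same structure as \eqref{S4-EQ12}. Factor each SINR term as
\begin{equation*}
\log_2(1+\mathrm{SINR}_k)=\frac{1}{\ln 2}\Big(\ln\big(\textstyle\sum_{\ell}G_{k,\ell}p_\ell/\sigma^2+1\big)-\ln\big(x_k(\bm{p})\big)\Big),
\qquad x_k(\bm{p})\triangleq\textstyle\sum_{\ell\neq k}G_{k,\ell}p_\ell/\sigma^2+1 .
\end{equation*}
Write $S(\bm{p})$ for $\frac{BT}{V}\sum_k\log_2(1+\mathrm{SINR}_k)$ and $\widetilde S(\bm{p}\mid\bm{p}^*)$ for the corresponding inner sum in \eqref{S4-EQ12}. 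Then $\Phi=(S+A-D)^2$ and $\widetilde\Phi=(\widetilde S+A-D)^2$.

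The key step is the inner bound $\widetilde S(\bm{p}\mid\bm{p}^*)\le S(\bm{p})$. Since $\ln$ is concave, its tangent line at $x_k^*\triangleq x_k(\bm{p}^*)>0$ gives
\begin{equation*}
\ln x_k\le\ln x_k^*+x_k/x_k^*-1 .
\end{equation*}
Equivalently, $-\ln x_k\ge-\ln x_k^*+1-x_k/x_k^*$. The right-hand side is exactly the linearized interference term appearing in \eqref{S4-EQ12}. Summing over $k$ and multiplying by the positive constant $BT/(V\ln 2)$ gives $\widetilde S\le S$, with equality at $\bm{p}=\bm{p}^*$. As a by-product, $\widetilde S$ is concave in $\bm{p}$, which is what makes $\mathcal{P}_4$ tractable.

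Passing through the square is the step I expect to be the real obstacle. Since $u\mapsto u^2$ is not monotone, $\widetilde S\le S$ alone does not give $\widetilde\Phi\ge\Phi$. I will use feasibility. By \eqref{S2-EQ5}, in its continuous approximation, the total collected sample count is $|\mathcal{X}(t)|=S(\bm{p})+A$, and constraint \eqref{S2-EQ6c} gives $S(\bm{p})+A-D\le 0$ for every feasible $\bm{p}$. Hence
\begin{equation*}
\widetilde S+A-D\le S+A-D\le 0 .
\end{equation*}
On the nonpositive half-line the square is decreasing, so $(\widetilde S+A-D)^2\ge(S+A-D)^2$, which is majorization (1). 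I will state explicitly that (1) is claimed only on the feasible set, consistent with the wording ``for all feasible $\bm{p}$'', and that the integer floor in \eqref{S2-EQ5} is replaced by its approximation as in $\mathcal{P}_3$.

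Tangency (2) follows at once from the equality case of the tangent-line bound, since $\widetilde S(\bm{p}^*\mid\bm{p}^*)=S(\bm{p}^*)$. For first-order consistency (3), by the chain rule it suffices to show $\nabla\widetilde S(\bm{p}^*\mid\bm{p}^*)=\nabla S(\bm{p}^*)$, because the outer factor $2(\cdot+A-D)$ agrees at $\bm{p}^*$ by (2). The first logarithm is identical in $S$ and $\widetilde S$. For the interference term, the derivative of $-\ln x_k$ with respect to $p_\ell$ is $-G_{k,\ell}/(\sigma^2x_k)$, while the derivative of $-x_k/x_k^*$ is $-G_{k,\ell}/(\sigma^2x_k^*)$. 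These coincide at $\bm{p}=\bm{p}^*$, which completes the argument.
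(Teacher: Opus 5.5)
Your proof is correct and has the same skeleton as the paper's. The tangent-line bound on the concave $\ln$, applied to $x_k(\bm p)$, gives $\widetilde\psi(\bm p\mid\bm p^{*})\le\psi(\bm p)$, where $\psi=S+A-D$ and $\widetilde\psi=\widetilde S+A-D$ in your notation, with equality at $\bm p^{*}$. Tangency is the equality case, and first-order consistency follows from the chain rule. Your explicit comparison of $-G_{k,\ell}/(\sigma^2 x_k)$ with $-G_{k,\ell}/(\sigma^2 x_k^{*})$ is more complete than the paper's one-line appeal to the Taylor construction.

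The real difference is the step through the square, and there your version is the sound one. The paper asserts that $\psi(\bm p)\ge 0$ on the feasible set and that $\widetilde\psi(\bm p\mid\bm p^{*})\ge 0$, and then concludes $\psi^2\le\widetilde\psi^2$. But $\psi\ge\widetilde\psi\ge 0$ gives the opposite inequality $\Phi\ge\widetilde\Phi$. Moreover, $\psi\ge 0$ conflicts with \eqref{S2-EQ6c}, which forces $\psi(\bm p)=|\mathcal{X}(t)|-D\le 0$ in the data-deficit regime. Your chain $\widetilde\psi\le\psi\le 0$ is exactly what is needed: it comes from \eqref{S2-EQ6c} under the continuous approximation of \eqref{S2-EQ5}, combined with $u\mapsto u^2$ being decreasing on $(-\infty,0]$. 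You also correctly restrict (1) to $\mathcal{P}_3$-feasible $\bm p$; nothing about $\bm p^{*}$ is needed beyond $x_k^{*}>0$.

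Keep that restriction in mind where the proposition is used downstream. The per-node sample constraint in $\mathcal{P}_4$ is imposed on the surrogate, which lies below the true rate sum, so it relaxes rather than enforces $\psi\le 0$. An MM iterate can therefore leave the region where majorization is guaranteed.
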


\begin{IEEEproof}
	See Appendix~\ref{App-Prop1}.
\end{IEEEproof}

\begin{figure}[!t]
	\centering
	\includegraphics[width=0.475\textwidth]{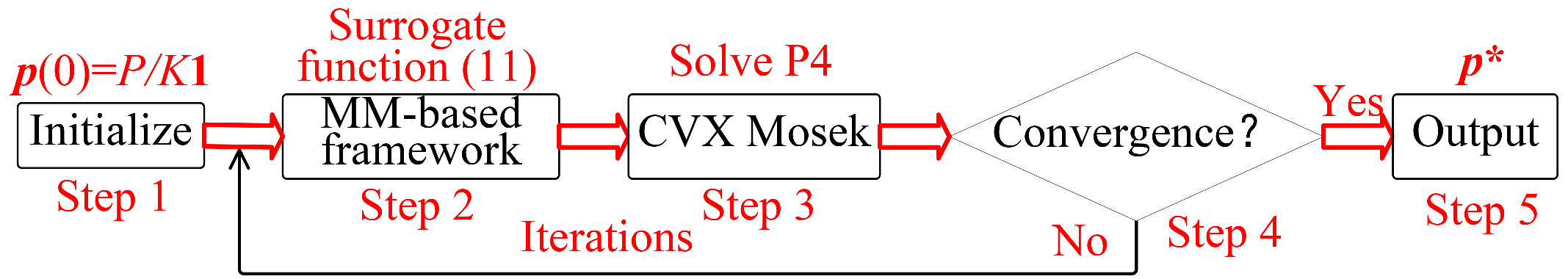}
	\caption{Block diagram of the centralized MM-based algorithm.}
	\label{S4-Fig2}
\end{figure}
								
	With part~$(1)$ of Proposition~\ref{S4-PR1}, an upper bound can be directly obtained by replacing the functions $\{\Phi\}$ by $\{\widetilde{\Phi}\}$ around a feasible point. However, a tighter upper bound can be achieved if we treat the obtained solution as another feasible point, which constructs the next-round surrogate function. Specifically, assuming that the solution at the $t^{\rm th}$ iteration is denoted as $\bm{p} (t)$, the problem $\mathcal{P}_{4}$ is formulated (shown at the top of the next page) and considered at the $(t + 1)^{\rm th}$ iteration. According to part~$(2)$ of Proposition~\ref{S4-PR1}, $\mathcal{P}_{4}$ is convex and can be efficiently solved using an iterative procedure. Part~$(3)$ of Proposition~\ref{S4-PR1} indicates that every limit point of $\{ \bm{p} (t) \}$ is a Karush-Kuhn-Tucker point of $\mathcal{P}_{3}$. 
								
	Fig.~\ref{S4-Fig2} describes the detailed workflow of the centralized MM-based algorithm. Specifically, Step~$1$ initializes the power allocation as $\bm{p} (0) = P / K \bm{1}$, where the symbol $\bm{1}$ represents a column vector with all elements being one. In Step~$2$, the algorithm iteratively constructs a surrogate function using the MM-based framework, as per \eqref{S4-EQ12}. Then, Step~$3$ utilizes standard software packages such as CVX and Mosek to solve $\mathcal{P}_{4}$, and Step~$4$ evaluates whether the solution has converged. Last, Step~$5$ outputs the final solution $\bm{p}^{*}$ while the convergence condition is satisfied.

	Regarding computational complexity, this algorithm exhibits medium or high time complexity, making it suitable for small- or medium-scale IoT networks. Specifically, $\mathcal{P}_{4}$ involves $K$ primal variables and $I + K + 2$ dual variables. The dual variables correspond to $I + K + 2$ constraints in $\mathcal{P}_{4}$, where $I + 1$ constraints come from the sample bounds, $K$ constraints come from non-negative power constraints, and one constraint is derived from the power budget. Consequently, the worst-case complexity for solving $\mathcal{P}_{4}$ is $\mathcal{O} \left( (I + 2K + 2)^{7/2} \right)$, where the number of iterations required for convergence is ignored.
	
	In real-world applications, the proposed algorithm can be deployed at the cloud server, such as the gateway of a small-scale edge network. The server can then inform devices of their transmit powers and other parameters through the downlink control channel, such as the narrowband physical downlink control channel in NB-IoT networks \cite{zhang2024}. This algorithm reduces communication delays during parameter transmission. However, as large-scale IoT networks require optimization for distributed parallelism, communication efficiency across multiple edge nodes tends to degrade. Moreover, the per-iteration complexity also remains high. Thus, in the next Section~\ref{Section-Distributed}, we design a distributed first-order method (FoM) to address these issues.

 \section{FoM-based Distributed Algorithm for Large-scale IoT networks}  \label{Section-Distributed}
    So far, we have addressed $\mathcal{P}_{3}$ using the centralized MM-based algorithm. However, the computational complexity of the solution increases as the number of IoT devices and edge nodes grows. Additionally, the proposed algorithm cannot resolve the coupling among different edge nodes, making distributed deployment challenging. To address these issues, we design a FoM-based distributed algorithm. Due to the dependence on power and co-channel interference terms in large-scale IoT networks, parallelization across edge nodes is difficult.  Leveraging Theorem~\ref{S3-TM1} and Proposition~\ref{S4-PR1}, we further relax $\mathcal{P}_{4}$ to handle these independently. This method yields various sub-problems through variable relaxation, facilitating an easier parallel solution \cite{7889039}.
    
    Now we begin extracting the relevant sub-problems. Given Theorem~\ref{S3-TM1} and Proposition~\ref{S4-PR1}, $\mathcal{P}_{4}$ can be readily relaxed as follows:
\begin{subequations} 
	\begin{align}
		\mathcal{P}_{5}: \min_{ \{ \bm{p}_{i}, \, P_{i} \}_{i =1}^{I} } \ & \dfrac{1}{2} \sum_{i = 1}^{I} \phi_{i} (\bm{p}_{i})^{2} \label{S5-EQ13a} \\
		{\rm s.t.} \ & \phi_{i} (\bm{p}_{i}) \leq 0, \, i = 1, \, \cdots, \, I, \label{S5-EQ13b} \\
			& \bm{1}^{T} \bm{p}_{i} \leq P_{i}, p_{k} \geq 0, \, k \in \mathcal{K}_{i}, \label{S5-EQ13c} \\
			& P_{i} \in \left\{ x_{i} \geq 0 \left| \sum_{i = 1}^{I} x_{i} = P \right\}, \, i = 1, \, \cdots, \, I, \right. \label{S5-EQ13d}
	\end{align}
	 where $\phi_{i} (\bm{p}_{i})$ is defined in \eqref{S5-EQ13e} at the top of the next page.
	 
\begin{figure*}[!t]
	\begin{align}
		\phi_{i} (\bm{p}_{i}) \triangleq {}
			&  \dfrac{BT}{V \ln 2} \sum_{k \in \mathcal{K}_{i}} \left( {} \ln \left( \sum_{\ell \in \mathcal{K}_{i}} \dfrac{G_{k, \ell} p_{\ell}}{\sigma^{2}} + \sum_{\ell \in \mathcal{K} \setminus \mathcal{K}_{i}} \dfrac{G_{k, \ell} p_{\ell} (t)}{\sigma^{2}} + 1 \right) + 1 - \ln \left( \sum_{\ell \in \mathcal{K}, \, \ell \neq k} \dfrac{G_{k, \, \ell} p_{\ell} (t)}{\sigma^{2}} + 1 \right) \right. \nonumber \\
			 &  \left. {}- \left( \sum_{\ell \in \mathcal{K}, \, \ell \neq k} \dfrac{G_{k, \, \ell} p_{\ell} (t)}{\sigma^{2}} + 1 \right)^{-1} \times \left( \sum_{\ell \in \mathcal{K}_{i}, \, \ell \neq k} \dfrac{G_{k, \, \ell} p_{\ell}}{\sigma^{2}} + \sum_{\ell \in \mathcal{K} \setminus \mathcal{K}_{i}, \, \ell \neq k} \dfrac{G_{k, \, \ell} p_{\ell} (t)}{\sigma^{2}} + 1 \right) {} \right) + A_{i} - D_{i}. \label{S5-EQ13e}
	\end{align}        
	\hrulefill
\end{figure*}
\end{subequations}

	Although $\mathcal{P}_{5}$ is a distributed model that can be deployed across different edge nodes, it is non-convex and challenging to solve due to its dependency on the total power across different edge nodes. Thus, we propose a first-order algorithm to solve it. Accordingly, we begin to write the augmented Lagrangian function (ALF) of $\mathcal{P}_{5}$ as follows: 
		\begin{align}
			& L \left( \{ \bm{p}_{i} \}_{i = 1}^{I}, \{ P_{i} \}_{i =1}^{I}; \{\lambda_{i}\}_{i = 1}^{I}, \{\beta_{i}\}_{i = 1}^{I} \right) \nonumber \\
			&= \dfrac{1}{2} \sum_{i = 1}^{I} \phi_{i} (\bm{p}_{i})^{2} + \sum_{i = 1}^{I} \lambda_{i} \phi_{i} (\bm{p}_{i}) \nonumber \\
			&\quad {}+ \sum_{i = 1}^{I} \beta_{i} \left( \bm{1}^{T} \bm{p}_{i} - P_{i} \right) + \dfrac{\mu}{2} \sum_{i = 1}^{I} \left( \bm{1}^{T} \bm{p}_{i} - P_{i} \right)^{2}, \label{S4-EQ14}
		\end{align}
	where $\mu$ is a penalty factor. As shown shortly, \eqref{S4-EQ14} enables the decomposition of sub-problems across different edge nodes. Based on \eqref{S4-EQ14}, we formalize the following proposition.
		\begin{proposition} \label{S5-PR3}
			For the ALF given by \eqref{S4-EQ14}, we can make the following iteration concerning variables $P_{i}$:
				\begin{equation} \label{S5-EQ15}
					P_{i} (t) \triangleq \bm{1}^{T} \bm{p}_{i} (t) - \dfrac{1}{I} \left( \bm{1}^{T} \bm{p} (t) - P \right).
				\end{equation}
			The relative dual variables $\{\beta_{i}\}_{i = 1}^{I}$ are updated by
				\begin{equation} \label{S5-EQ16}
					\beta (t + 1) = \max \left(\beta (t) + \dfrac{\mu}{I} \left( \bm{1}^{T} \bm{p} (t + 1) - P \right), \, 0 \right),
				\end{equation}
			and $\beta_{i} (t + 1) = \beta (t + 1)$, for all $i = 1, \, \cdots, \, I$.
		\end{proposition}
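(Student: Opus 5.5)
We derive both updates from one ADMM-type sweep on the ALF \eqref{S4-EQ14}: first an exact minimization over the block $\{P_i\}_{i=1}^{I}$, then a projected dual ascent step on $\{\beta_i\}$. The power vectors $\bm{p}_i(t)$ are held fixed while the $P$-block is updated.

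\textbf{Step 1: reduce the $P$-block subproblem.} With $\bm{p}_i(t)$ and $\beta_i(t)$ fixed, only the last two sums of \eqref{S4-EQ14} depend on $P_i$. The feasible set \eqref{S5-EQ13d} couples the $P_i$ only through $\sum_{i} P_i = P$; we treat nonnegativity as inactive at the point of interest and check it afterwards. The subproblem is therefore an equality-constrained quadratic. We introduce the residuals $r_i \triangleq \bm{1}^T\bm{p}_i(t) - P_i$. Since the $\beta_i$ are initialized equal, and by induction remain equal under \eqref{S5-EQ16}, we may write $\beta_i = \beta$. The linear term then becomes $\beta\sum_i r_i$. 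On the constraint set $\sum_i r_i = \bm{1}^T\bm{p}(t) - P$, which is a constant, so the linear term drops out.

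\textbf{Step 2: solve it in closed form.} What remains is to minimize $\frac{\mu}{2}\sum_i r_i^2$ subject to $\sum_i r_i = \bm{1}^T\bm{p}(t)-P$. This is the Euclidean projection onto a hyperplane. The KKT condition $\mu r_i = \nu$ forces all $r_i$ to be equal, so $r_i = \frac{1}{I}(\bm{1}^T\bm{p}(t)-P)$. Substituting back gives exactly \eqref{S5-EQ15}.

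\textbf{Step 3: the dual update.} The primal residual of constraint \eqref{S5-EQ13c} for node $i$ after the update is $\bm{1}^T\bm{p}_i(t+1) - P_i$. Plugging in the $P_i$ from \eqref{S5-EQ15}, evaluated at the new primal iterate, this residual equals $\frac{1}{I}(\bm{1}^T\bm{p}(t+1)-P)$ for every $i$. The standard method-of-multipliers step $\beta_i \leftarrow \beta_i + \mu\times(\text{residual})$ is therefore identical across nodes. Because \eqref{S5-EQ13c} is an inequality constraint, its multiplier must stay nonnegative, which is enforced by projecting onto $[0,\infty)$. This yields \eqref{S5-EQ16}, and the common value confirms the induction hypothesis $\beta_i(t+1)=\beta(t+1)$ used in Step 1.

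The main obstacle is justifying the treatment of the constraints in the $P$-block. First, nonnegativity of $P_i$ may fail when some node's power sum is small relative to the average excess. I would argue that in the relevant regime $\bm{1}^T\bm{p}_i(t) \ge \frac{1}{I}(\bm{1}^T\bm{p}(t)-P)$, or else present \eqref{S5-EQ15} as the unconstrained minimizer and note that a projection onto the simplex would be used otherwise. Second, the multiplier for the inequality \eqref{S5-EQ13c} appears in the ALF as an equality-style penalty; I would accept this as the paper's modeling convention, with the $\max(\cdot,0)$ step supplying the sign restriction.
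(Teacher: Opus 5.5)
Your proposal is correct and follows essentially the same route as the paper's proof. Like the paper, you fix the power vectors and take $\beta_i=\beta$, then solve the equality-constrained quadratic in $\{P_i\}$ via a Lagrange multiplier, and finish with a projected dual-ascent step on $\beta$. The paper folds the linear $\beta$-term into $\beta+\nu$, whereas you observe it is constant on $\sum_i P_i=P$; this is the same computation. Your Step~3 is slightly more explicit than the paper's. The paper simply asserts a step size $\mu/I$ on the aggregate residual $\bm{1}^T\bm{p}(t+1)-P$. You instead obtain it as the standard multiplier step $\mu$ times the common per-node residual $\frac{1}{I}(\bm{1}^T\bm{p}(t+1)-P)$, which also justifies $\beta_i(t+1)=\beta(t+1)$ rather than imposing it.
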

		\begin{IEEEproof}
			See Appendix~\ref{App-Prop2}.
		\end{IEEEproof}
	
	By Proposition~\ref{S5-PR3}, it is evident that we have efficiently dealt with \eqref{S5-EQ13d} and reduced the dimension of dual variables. Next, we decompose the ALF in \eqref{S4-EQ14} into a set of sub-functions, each representing the ALF for the $i^{\rm th}$ edge node, given by
		\begin{align} \label{S5-EQ17}
			L_{i} (\bm{p}_{i}; \lambda_{i}, \beta) 
			& = \dfrac{1}{2} \phi_{i} (\bm{p}_{i})^{2} + \lambda_{i} \phi_{i} (\bm{p}_{i}) \nonumber \\
			&\quad {}+  \beta \left( \bm{1}^{T} \bm{p}_{i} - P_{i} \right) + \dfrac{\mu}{2} \left( \bm{1}^{T} \bm{p}_{i} - P_{i} \right)^{2}.
		\end{align}	
	By \eqref{S5-EQ17}, we compute partial derivatives of the relevant variables and apply the gradient descent algorithms in distributed edge nodes, as detailed below.

\subsubsection{\underline{Update $\bm{p}_{i}$ and $P_{i}$ With Other Variables Fixed}} It is observed that $L_{i} (\bm{p}_{i}; \lambda_{i}, \beta)$ given by \eqref{S5-EQ17} is differentiable with respect to $\bm{p}_{i}$. Then, its gradient can be easily computed as 
		\begin{align}
			\nabla_{\bm{p}_{i}} L_{i} (\bm{p}_{i}; \lambda_{i}, \beta) 
			& = \phi_{i} (\bm{p}_{i}) \nabla_{\bm{p}_{i}} \phi_{i} (\bm{p}_{i}) + \lambda_{i} \nabla_{\bm{p}_{i}} \phi_{i} (\bm{p}_{i}) \nonumber \\
			& \quad {}+ \beta \bm{1} + \mu \left( \bm{1}^{T} \bm{p}_{i} - P_{i} \right) \bm{1}.
		\end{align}
	We then apply the gradient descent algorithm to update $\bm{p}_{i} (t + 1)$, yielding
		\begin{equation} \label{S5-EQ18}
			\bm{p}_{i} (t + 1) = \max (\bm{p}_{i} (t) - \eta \nabla_{\bm{p}_{i}} L_{i} (\bm{p}_{i} (t); \lambda_{i} (t), \beta (t)), \, \bm{0}), 
		\end{equation}
	where $\eta$ denotes the step-size. Also, by Proposition~\ref{S5-PR3}, $P_{i}$ can be efficiently computed using \eqref{S5-EQ15} and thus, $\bm{1}^{T} \bm{p}_{i} (t) - P_{i} (t)$ can be readily updated.

	\subsubsection{\underline{Update Relative Dual Variables With Others Fixed}} It is evident that the ALF given by \eqref{S5-EQ17} is linear with respect to all dual variables. Thus, we update the dual variables as follows:
		\begin{equation} \label{S5-EQ19}
			\lambda_{i} (t + 1) = \max (\lambda_{i} (t) + \eta \phi_{i} (\bm{p}_{i} (t + 1)), \, 0).
		\end{equation}
	Apart from these dual variables, $\beta (t+1)$ can be directly updated using \eqref{S5-EQ16}. In real-world applications, edge node $i$ updates \eqref{S5-EQ18} and \eqref{S5-EQ19} in parallel, and then transmits $\bm{p}_{i} (t + 1)$ and $\lambda_{i} (t + 1)$ to the cloud server. This cloud server updates \eqref{S5-EQ16} and broadcasts $\beta (t + 1)$ to all edge nodes. Compared with the dimensionality of learning parameters, the optimization parameters have a lower dimensionality, making their communication delay negligible.

    So far, we have developed a distributed algorithm to solve $\mathcal{P}_{5}$. However, the convergence rate slows down as multiple variables must be relaxed for parallelism. Therefore, we add momentum to accelerate this first-order algorithm. Specifically, $\bm{p}_{i} (t + 1)$ is updated by
        \begin{subequations}
            \begin{align}
                \bm{z}_{p_{i}} (t + 1) &= \max (\bm{p}_{i} (t) - \eta \nabla_{\bm{p}_{i}} L_{i} (\bm{p}_{i} (t); \lambda_{i} (t), \beta (t)), \, \bm{0}), \label{S5-EQ21a}\\
                \bm{p}_{i} (t + 1) &= (1 - \theta (t)) \bm{p}_{i} (t) + \theta (t) \bm{z}_{p_{i}} (t + 1), \label{S5-EQ21b}\\
                \theta (t + 1) &= \dfrac{1}{2} \left(-\theta^{2} (t) + \sqrt{\theta^{4} (t) + 4 \theta^{2} (t)}\right). \label{S5-EQ21c}
            \end{align}
        \end{subequations}

The procedure above is formalized in {\bf Algorithm~\ref{S5-AM1}}, which is implemented in a distributed manner across edge nodes. In Algorithm~\ref{S5-AM1}, the primal variables $\{\bm{p}_i\}$ are updated locally in parallel, while the auxiliary/dual variables (e.g., $\lambda_i$) and the acceleration-related parameters are updated iteratively based on the current primal iterates.

Regarding computational complexity, the number of decision variables scales linearly with the number of devices $K$. Since most updates are separable across edge nodes, the per-iteration computational cost is $\mathcal{O}(K/I)$ at each edge node, and $\mathcal{O}(K)$ in total across the network, up to constant factors. Moreover, when the surrogate subproblem is convex and smooth, the accelerated first-order updates achieve the standard convergence-rate improvement from $\mathcal{O}(1/k)$ to $\mathcal{O}(1/k^{2})$ with respect to the number of FoM inner iterations $k$.

Moreover, the convergence behavior of Algorithm~\ref{S5-AM1} can be characterized by the following proposition, which summarizes standard descent and rate results for MM with inexact first-order inner solvers.
\begin{proposition}[MM Descent and FoM Convergence {\cite{HunterL04, Nesterov04, BeckT09}}] \label{Prop:FoM}
Let $\{\bm{x}^{(t)}\}$ be generated by the MM framework with surrogate $g(\bm{x}\mid \bm{x}^{(t)})$ satisfying Proposition~\ref{S4-PR1}, and let $\bm{x}^{(t+1)}$ be obtained by approximately minimizing $g(\cdot\mid \bm{x}^{(t)})$ using a FoM until a prescribed accuracy $\epsilon$ is reached (e.g., as in Algorithm~\ref{S5-AM1}). If the inner solution satisfies the sufficient-decrease condition
\begin{equation}
g(\bm{x}^{(t+1)}\mid \bm{x}^{(t)}) \le g(\bm{x}^{(t)}\mid \bm{x}^{(t)}),
\end{equation}
then the MM outer objective is non-increasing:
\begin{equation}
f(\bm{x}^{(t+1)}) \le f(\bm{x}^{(t)}), \quad \forall t.
\end{equation}
Moreover, under standard regularity conditions for MM, every limit point of $\{\bm{x}^{(t)}\}$ is a stationary point of the original problem.

For each fixed outer iterate $t$, if the surrogate subproblem is convex and $L_g$-smooth, then the FoM inner iterates $\bm{x}^{(t,k)}$ satisfy
\begin{equation}
g(\bm{x}^{(t,k)}\mid \bm{x}^{(t)}) - g^\star(\bm{x}^{(t)}) = \mathcal{O}(1/k),
\end{equation}
and the accelerated variant achieves $\mathcal{O}(1/k^2)$ under the usual convexity and smoothness assumptions.
\end{proposition}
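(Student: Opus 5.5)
The proof splits into two parts: the outer MM descent claim and the inner FoM rate claim. Both are assembled from Proposition~\ref{S4-PR1} and standard first-order results \cite{HunterL04, Nesterov04, BeckT09}.

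\emph{Monotone descent.} Here $f$ plays the role of $\Phi$ and $g(\cdot\mid\bm{x}^{(t)})$ the role of $\widetilde{\Phi}(\cdot\mid\bm{p}(t))$. I will use the sandwich chain
\begin{equation}
f(\bm{x}^{(t+1)}) \le g(\bm{x}^{(t+1)}\mid \bm{x}^{(t)}) \le g(\bm{x}^{(t)}\mid \bm{x}^{(t)}) = f(\bm{x}^{(t)}),
\end{equation}
whose three links are justified as follows:
\begin{itemize}
\item the first inequality is majorization, part~(1) of Proposition~\ref{S4-PR1}, applied at the feasible point $\bm{x}^{(t+1)}$;
\item the middle inequality is exactly the assumed sufficient-decrease condition;
\item the final equality is tangency, part~(2).
\end{itemize}
Feasibility of $\bm{x}^{(t+1)}$ must be preserved by the inner solver. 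This holds because the projection $\max(\cdot,\bm{0})$ and the constraint handling in Algorithm~\ref{S5-AM1} keep the iterates in the feasible set, or it can simply be taken as part of the prescribed accuracy criterion.

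\emph{Stationarity of limit points.} Since $f=\Phi\ge 0$, the sequence $f(\bm{x}^{(t)})$ is monotone and bounded below, so it converges. The feasible set is compact because the powers are bounded by $P$, so a limit point $\bar{\bm{x}}$ exists along a subsequence. I will then use the ``standard regularity conditions'': continuity of $g$ in both arguments, and the inner solutions being (approximate) global minimizers of the convex surrogate. With these, passing to the limit gives that $\bar{\bm{x}}$ minimizes $g(\cdot\mid\bar{\bm{x}})$ over the feasible set. The first-order optimality condition for $g(\cdot\mid\bar{\bm{x}})$ at $\bar{\bm{x}}$ then transfers to $f$ via first-order consistency, part~(3) of Proposition~\ref{S4-PR1}, because the gradients coincide there. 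Hence the KKT conditions of the original problem hold at $\bar{\bm{x}}$.

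\emph{Main obstacle.} The delicate point in this step is justifying $g(\bm{x}\mid\bar{\bm{x}}) \ge g(\bar{\bm{x}}\mid\bar{\bm{x}})$ in the limit. I would argue it by contradiction: suppose some feasible $\bm{y}$ gives strict decrease at $\bar{\bm{x}}$. By continuity it also gives strict decrease at $\bm{x}^{(t)}$ for $t$ large along the subsequence. Combined with the sandwich chain, this forces $f$ below its limit, which is the contradiction. This is the standard Razaviyayn-type argument. When the inner solve is only $\epsilon$-accurate, the conclusion becomes $\epsilon$-stationarity instead, and that qualification would be stated explicitly.

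\emph{Inner rates.} For fixed $t$, the surrogate is convex and $L_g$-smooth by assumption, and projected gradient descent with step $1/L_g$ is run over a closed convex set. The descent lemma from Assumption~\ref{S3-AS1}-type smoothness, combined with the convexity inequality and telescoping, gives
\begin{equation}
g(\bm{x}^{(t,k)}\mid\bm{x}^{(t)}) - g^\star \le \frac{L_g\|\bm{x}^{(t,0)}-\bm{x}^\star\|_2^2}{2k} .
\end{equation}
For the momentum variant \eqref{S5-EQ21a}--\eqref{S5-EQ21c}, the $\theta$-recursion satisfies $(1-\theta(t+1))/\theta(t+1)^2 = 1/\theta(t)^2$, so $\theta(t)\le 2/(t+2)$. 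The standard FISTA estimate-sequence (Lyapunov) argument then yields the $\mathcal{O}(1/k^2)$ rate \cite{BeckT09}. These rate bounds are routine and will be cited rather than re-derived.
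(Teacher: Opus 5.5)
\textbf{Verdict.} The descent and stationarity parts are fine under the hypotheses you add, but the $\mathcal{O}(1/k^2)$ step fails, because \eqref{S5-EQ21a}--\eqref{S5-EQ21c} is not FISTA.

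Your descent chain and the Razaviyayn-type contradiction argument are the standard MM results the paper invokes only by citation; it gives no proof of its own. You are also right that sufficient decrease alone cannot yield stationarity, since $\bm{x}^{(t+1)}=\bm{x}^{(t)}$ satisfies it. So (asymptotically) exact inner minimization must be among the ``regularity conditions.''

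\textbf{The acceleration step.} Matching the $\theta$-recursion is necessary but not sufficient for FISTA. In \eqref{S5-EQ21a}--\eqref{S5-EQ21c}:
\begin{itemize}
\item the gradient is evaluated at $\bm{p}_i(t)$, not at an extrapolated point $(1-\theta(t))\bm{p}_i(t)+\theta(t)\bm{z}_{p_i}(t)$;
\item $\bm{z}_{p_i}(t+1)$ is recomputed from $\bm{p}_i(t)$, not propagated from $\bm{z}_{p_i}(t)$ with step $\eta/\theta(t)$.
\end{itemize}
Substituting \eqref{S5-EQ21a} into \eqref{S5-EQ21b}, the scheme is just $\bm{p}_i(t+1)=\bm{p}_i(t)+\theta(t)\big(\bm{z}_{p_i}(t+1)-\bm{p}_i(t)\big)$. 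This is a projected-gradient step damped by $\theta(t)\le 2/(t+2)$, with no memory of earlier iterates. The Lyapunov argument for this recursion uses convexity of $g$ and of $\|\cdot-\bm{x}^\star\|_2^2$ along the segment, the projected-gradient inequality, and monotonicity. It gives only
\[
g(\bm{x}^{(t,k)}\mid\bm{x}^{(t)})-g^\star(\bm{x}^{(t)})\le\frac{g(\bm{x}^{(t,0)}\mid\bm{x}^{(t)})-g^\star(\bm{x}^{(t)})+\frac{L_g}{2}\|\bm{x}^{(t,0)}-\bm{x}^\star\|_2^2}{\sum_{s<k}\theta(s)}=\mathcal{O}\!\left(\frac{1}{\log k}\right).
\]
Nothing much better is possible. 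Take $g(x)=\frac{\mu}{2}x^2$ with $x_0>0$, $\eta=1/L_g$ and $\mu\le L_g/2$. The iterates are $x_k=x_0\prod_{s<k}\big(1-\theta(s)\mu/L_g\big)\ge x_0(k+1)^{-4\mu/L_g}$. So the gap decays no faster than $(k+1)^{-8\mu/L_g}$, which is slower than $1/k$ once $\mu<L_g/8$. Undamped projected gradient converges linearly on the same example. Hence $\mathcal{O}(1/k^2)$ can only be asserted, via Beck--Teboulle, for a genuine Nesterov/FISTA inner loop. It cannot be derived for \eqref{S5-EQ21a}--\eqref{S5-EQ21c}, and you should not identify the two.

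\textbf{Feasibility of the iterates.} Algorithm~\ref{S5-AM1} does not keep its iterates feasible. The operation $\max(\cdot,\bm{0})$ enforces only $\bm{p}_i\ge\bm{0}$. The constraints $\bm{1}^T\bm{p}_i\le P_i$, $\sum_i P_i=P$ and $\phi_i(\bm{p}_i)\le 0$ enter only through $\lambda_i$, $\beta$ and the penalty $\mu$, i.e., asymptotically. Proposition~\ref{S4-PR1} asserts majorization only on the feasible set. So feasibility of $\bm{x}^{(t+1)}$, and the boundedness behind your compactness step, must be an explicit hypothesis (your fallback), not a property of the algorithm. For the same reason, your $\mathcal{O}(1/k)$ bound covers a projected-gradient inner solver on $g$. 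It does not cover the primal--dual iteration \eqref{S5-EQ18}, \eqref{S5-EQ19}, \eqref{S5-EQ16}.

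\textbf{Fixed feasible set.} Your limit-point argument presumes one fixed closed feasible set for all subproblems. In $\mathcal{P}_4$ the sample constraints are replaced by surrogates built at $\bm{p}(t)$, so the constraint set moves with the iterate. By Step~1 in the proof of Proposition~\ref{S4-PR1}, the surrogate sample count lower-bounds the true one. The surrogate set is therefore an outer, not inner, approximation, and a subproblem minimizer need not be feasible for $\mathcal{P}_3$. Covering that case requires the successive-convex-approximation analysis with surrogate constraints, plus a constraint qualification at $\bar{\bm{x}}$, before the KKT conclusion follows.
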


\begin{algorithm}[!t]
	\small
	\caption{The FoM-based Distributed Algorithm with an Acceleration.}
	\label{S5-AM1}
	\renewcommand{\algorithmicrequire}{\textbf{Input:}}
	\renewcommand{\algorithmicensure}{\textbf{Output:}}
	\begin{algorithmic}[1]
	\REQUIRE Setting $( I, N, K, P, T, B, V, \sigma^{2}, D_{i}, A_{i} )$, channels $\{ \bm{h}_{k} \}$, user sets $\{ \mathcal{K}_{i} \}$, learning rate $\eta$, optimization parameter $\mu$, and error tolerance $\epsilon$.
	\ENSURE The optimal solution $\bm{p}^{*}$.
	\STATE Initialize $t = 0, \, \bm{p}_{i} (t) = P / I \bm{1}, \, \lambda_{i} (t) = 1, \, \beta (t) = 1$;
	\REPEAT
		\STATE Update $\bm{z}_{p_{i}} (t + 1)$ as per \eqref{S5-EQ21a};
		\STATE Update $\bm{p}_{i} (t + 1)$ as per \eqref{S5-EQ21b};
		\STATE Update $\lambda_{i} (t + 1)$ as per \eqref{S5-EQ19};
		\STATE Update $\beta (t + 1)$ as per \eqref{S5-EQ16};
		\STATE Update $\theta (t + 1)$ as per \eqref{S5-EQ21c};
		\STATE $t = t + 1$;
	\UNTIL{$\| \bm{p}_{i} (t) - \bm{p}_{i} (t - 1) \|_{2} \leq \epsilon$}. 
	\end{algorithmic}
\end{algorithm}
  
\begin{figure*}[t!]
	\centering
	\subfloat[Weather classification.]{\includegraphics[width=0.32\linewidth]{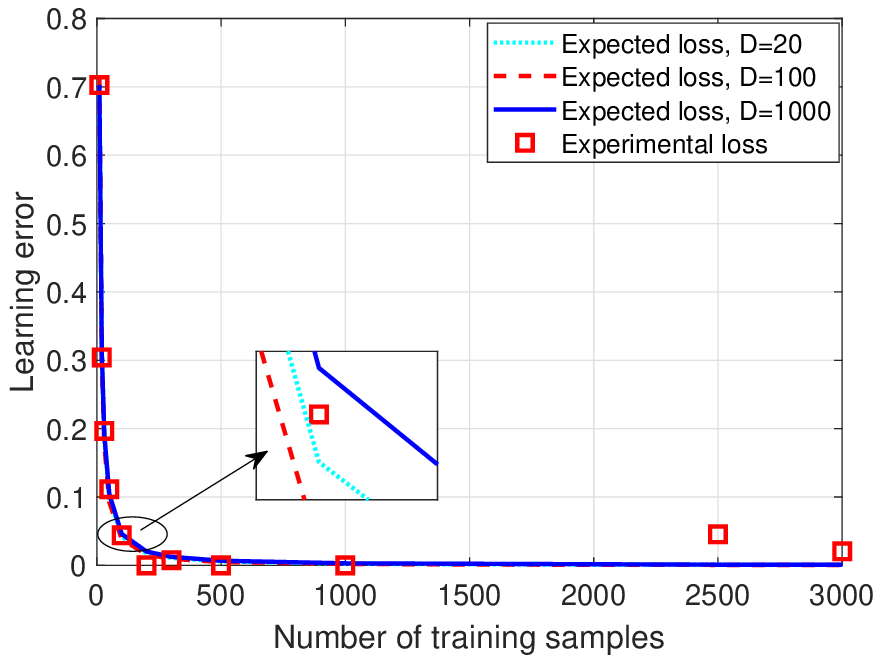} \label{S6-Fig3a}}
	\hfil
	\subfloat[Traffic sign recognition.]{\includegraphics[width=0.32\linewidth]{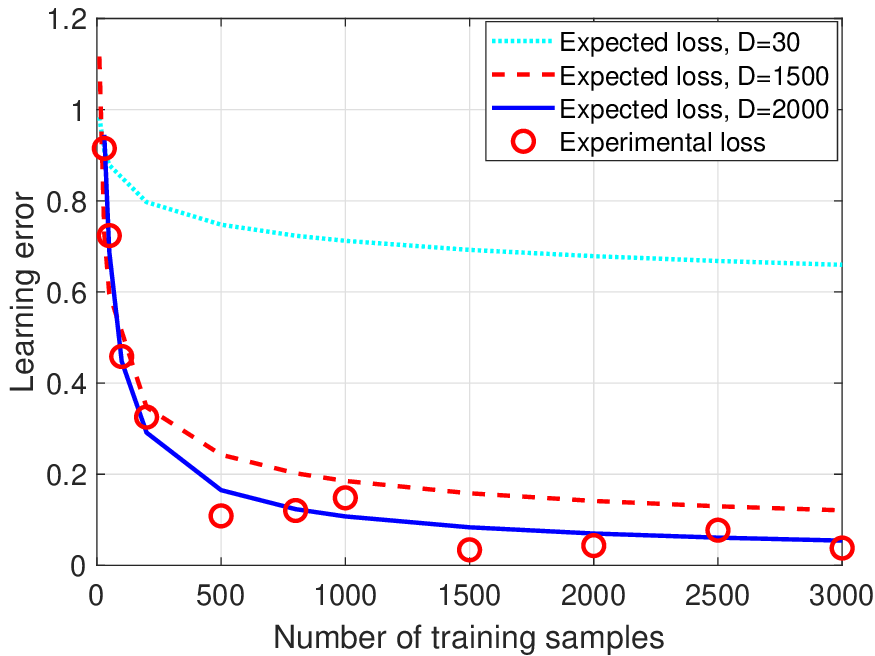} \label{S6-Fig3b}}
	\hfil
	\subfloat[Object detection.]{\includegraphics[width=0.32\linewidth]{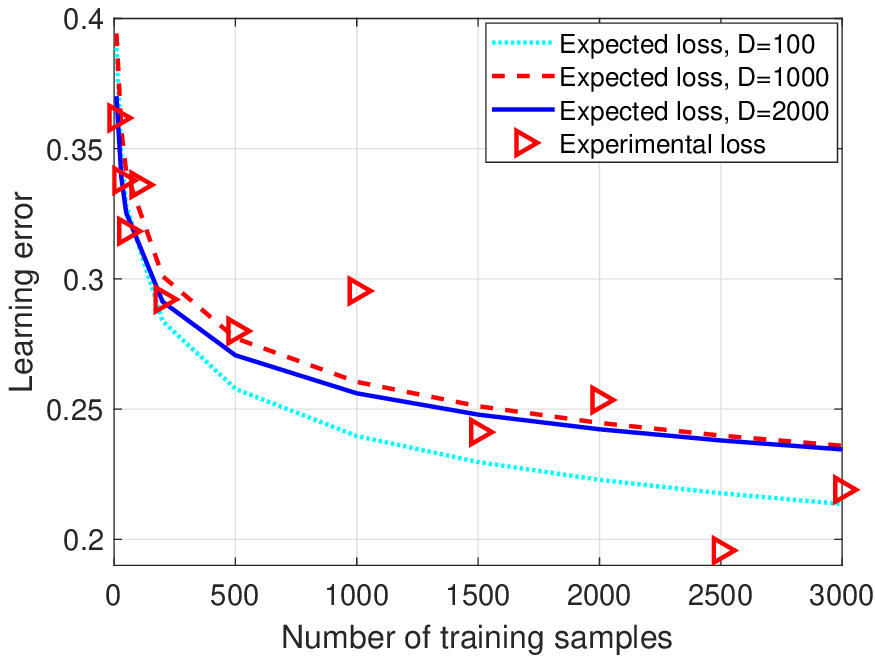} \label{S6-Fig3c}}
	\hfil
	\caption{Learning loss vs. the number of training samples.}
	\label{S6-Fig3}
\end{figure*}
		
\begin{figure*}[t!]
	\centering
	\subfloat[Weather classification.]{\includegraphics[width=0.32\linewidth]{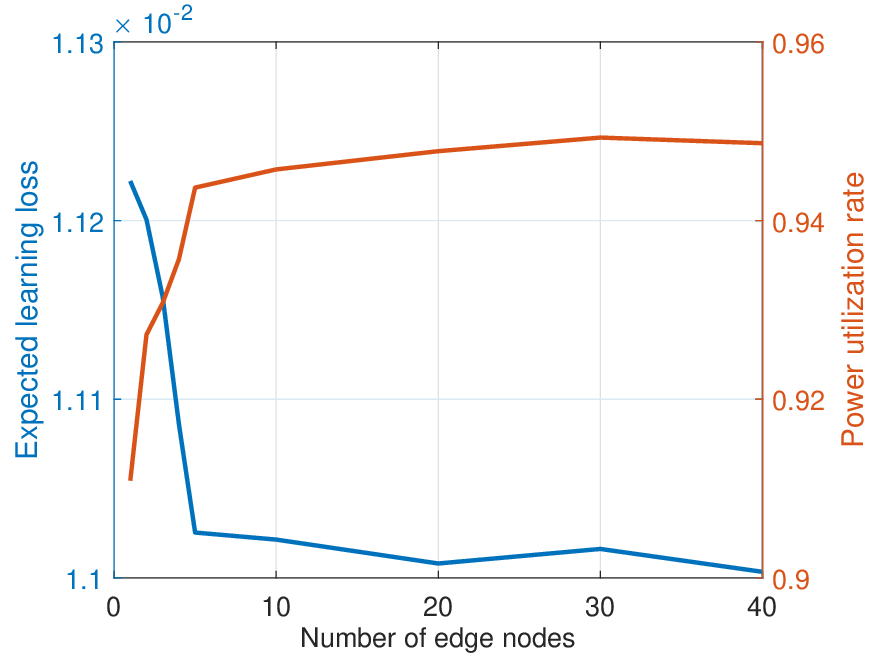} \label{S6-Fig4a}}
	\hfil
	\subfloat[Traffic sign recognition.]{\includegraphics[width=0.32\linewidth]{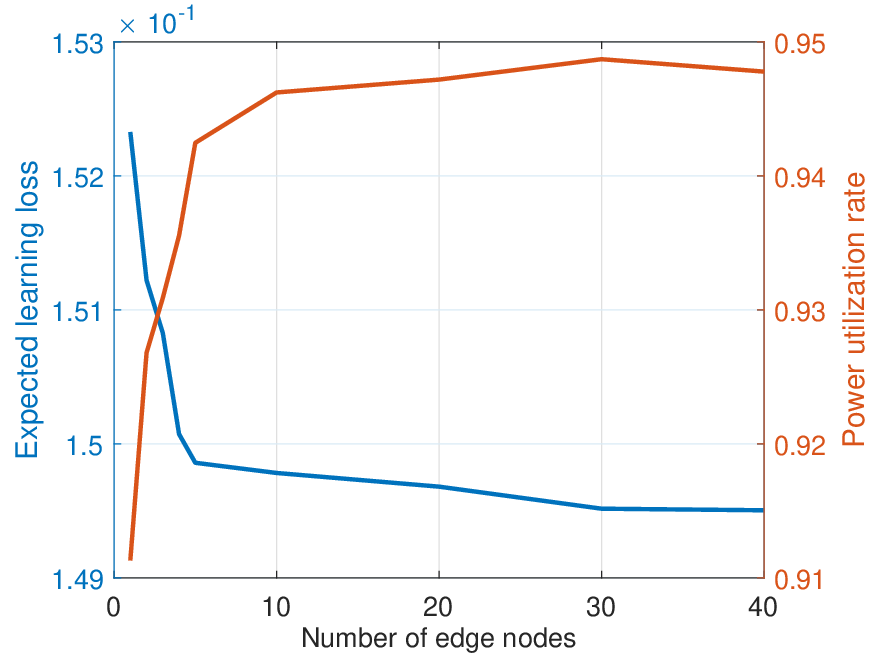} \label{S6-Fig4b}}
	\hfil
	\subfloat[Object detection.]{\includegraphics[width=0.32\linewidth]{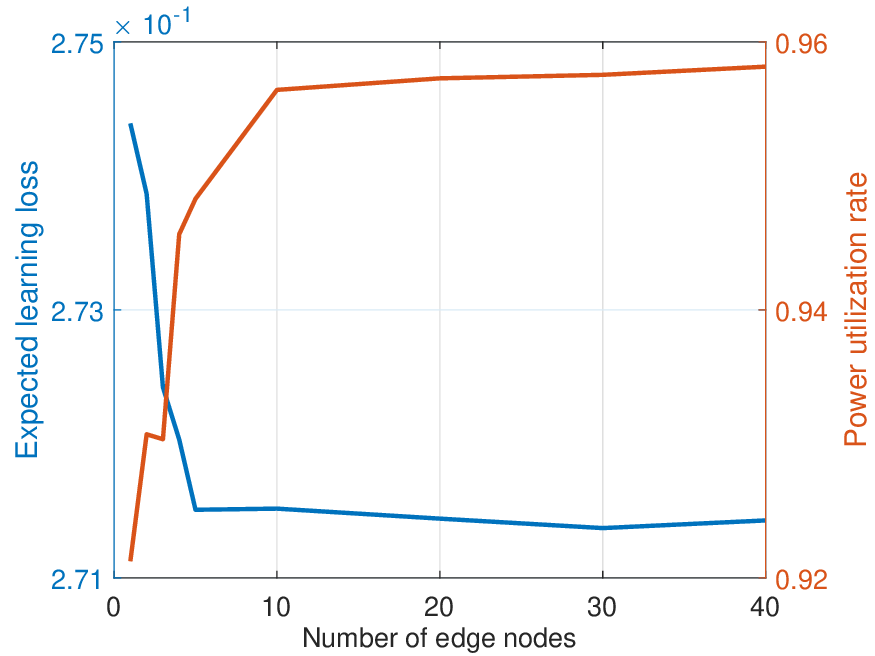} \label{S6-Fig4c}}
	\hfil
	\caption{Expected learning loss/Power utilization ratio vs. the number of edge nodes.}
	\label{S6-Fig4}
\end{figure*}

\section{Simulation Results and Discussion}  \label{S6}
    This section presents simulation results evaluating the performance of the designed algorithms, comparing them with state-of-the-art benchmarks. We primarily consider three perception tasks in autonomous driving \cite{9982368, KouWZ0CNW23}: weather classification from RGB images, traffic sign recognition from RGB images, and object detection from point cloud data. The CarlaFLCAV framework, an open-source autonomous driving simulation platform available online at \url{https://github.com/SIAT-INVS/CarlaFLCAV}, generates all the datasets. The simulation parameter settings are as follows, unless otherwise specified. The size of each RGB image sample is $V_{1} = V_{2} = 0.7$ \si{MB} and that of each point cloud sample is $V_{3} = 1.6$ \si{MB}. The number of historical data samples is $A_{1} = A_{2} = A_{3} = 50$; the number of total IoT devices is $\left| \mathcal{K} \right| = 20$, and the number of edge nodes is $I = 10$. As for the wireless parameters, the Tx time $T = 200$~\si{s}, the number of antennas $N = 4$, the total Tx power $P = 50$ \si{mW}, the noise power $\sigma^{2} = -77$ \si{dBm}, the communication bandwidth $B = 4$ \si{MHz}, the path loss of the $k^{\rm th}$ device $\varrho_{k} = -90$ \si{dB}, and the channel $\bm{h}_{k}$ generated by $\mathcal{C} \mathcal{N} (\bm{0}, \, \varrho_{k} \bm{I})$ are set in this simulation experiments. 
	
We consider two typical benchmark schemes: SRM \cite{10026196} and QoT-Max \cite{10084349}. In particular, the SRM scheme only considers the wireless channel information, ignoring the learning factors. We also simulate two proposed schemes: the MM- and FoM-based distributed algorithms. To evaluate convergence performance, we further define a mean squared error (MSE) as
\begin{align} \label{Eq-MSE}
	{\rm MSE} \triangleq & \left\| \bm{p} (t) - \bm{p} (t - 1) \right\|_{2} + \left\| P_{i} (t) - P_{i} (t - 1) \right\|_{2} \nonumber \\
		&{}+ \left\| \lambda_{i} (t) - \lambda_{i} (t - 1) \right\|_{2} + \left\| \beta (t) - \beta (t - 1) \right\|_{2}.
\end{align}
Next, we present and discuss simulation results on learning error, convergence, and complexity, followed by a case study of autonomous navigation with collision avoidance.

Due to the high computational cost of the end-to-end training--optimization loop, the curves in this section report representative averaged results to illustrate the qualitative energy--learning tradeoff. In additional tests with different random seeds, we observed that the relative performance ordering remains consistent. The proposed framework also supports standard statistical reporting (e.g., mean $\pm$ standard deviation or confidence intervals) when additional computational budget is available.

        \begin{figure*}[t!]
            \centering
            \subfloat[Weather classification.]{\includegraphics[width=0.32\linewidth]{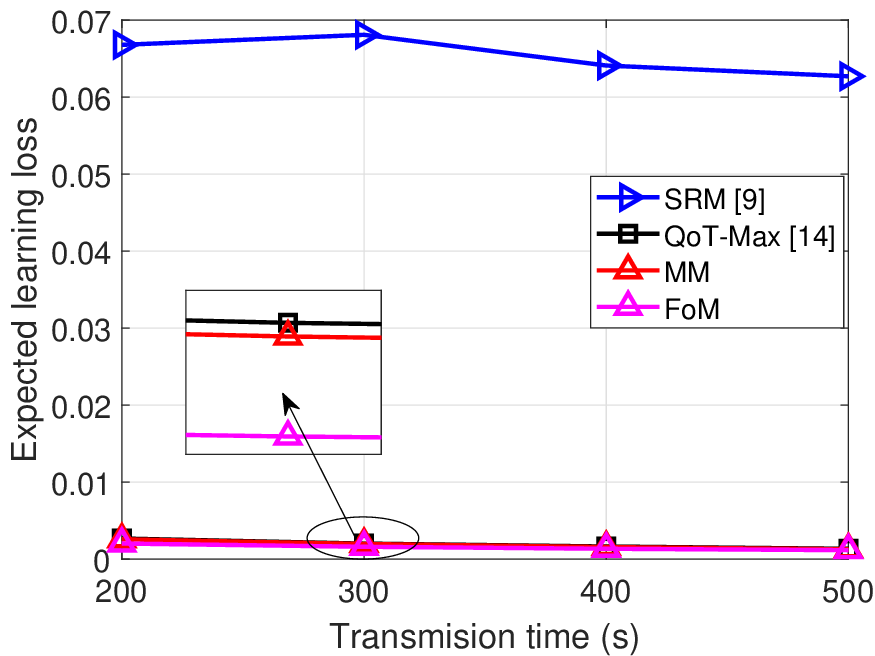} \label{S6-Fig5a}}
            \hfil
            \subfloat[Traffic sign recognition.]{\includegraphics[width=0.32\linewidth]{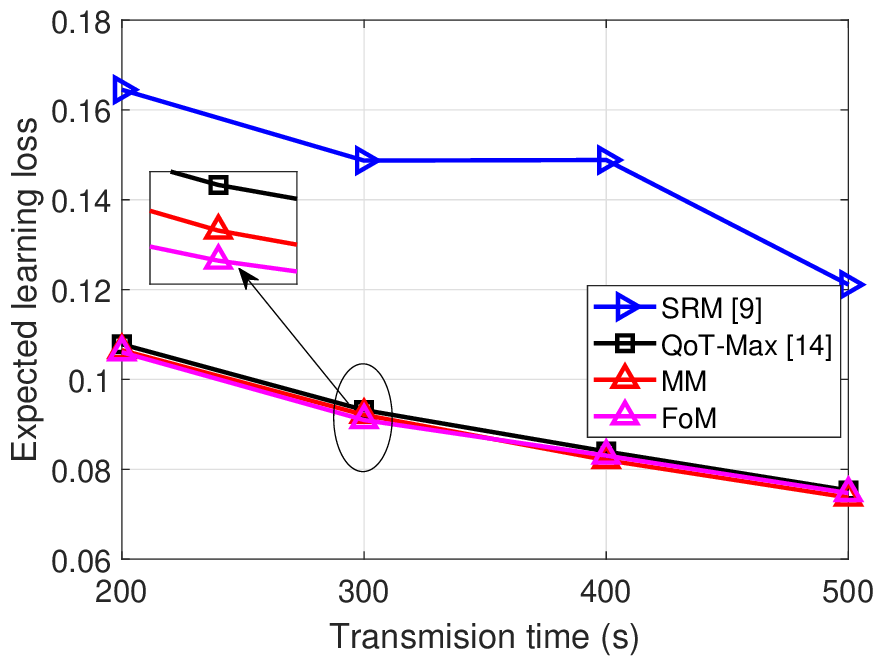} \label{S6-Fig5b}}
            \hfil
            \subfloat[Object detection.]{\includegraphics[width=0.32\linewidth]{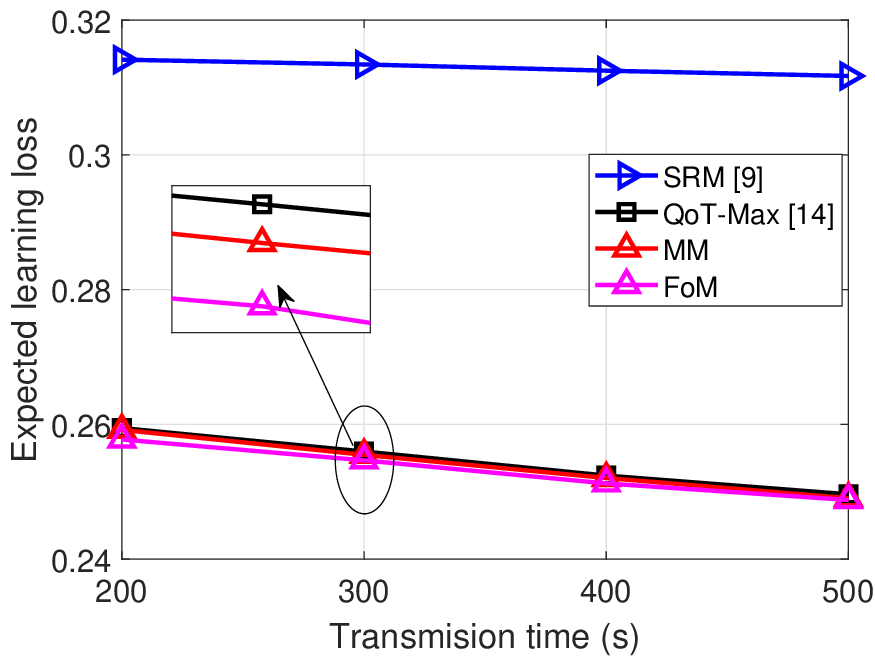} \label{S6-Fig5c}}
            \hfil
            \caption{Learning loss vs. the transmission time.}
            \label{S6-Fig5}
        \end{figure*}

        \begin{figure*}[t!]
            \centering
            \subfloat[Weather classification.]{\includegraphics[width=0.32\linewidth]{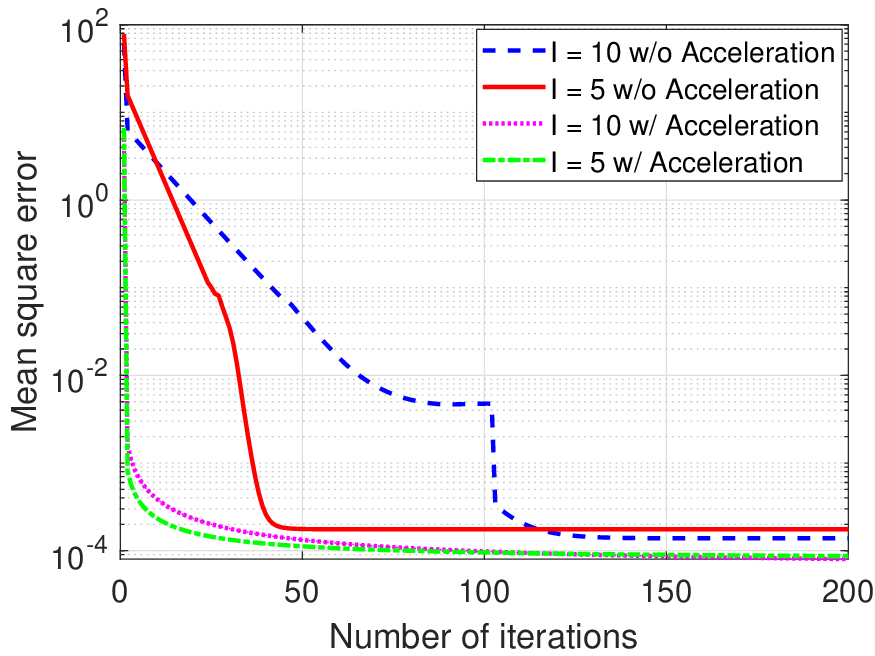} \label{S6-Fig6a}}
            \hfil
            \subfloat[Traffic sign recognition.]{\includegraphics[width=0.32\linewidth]{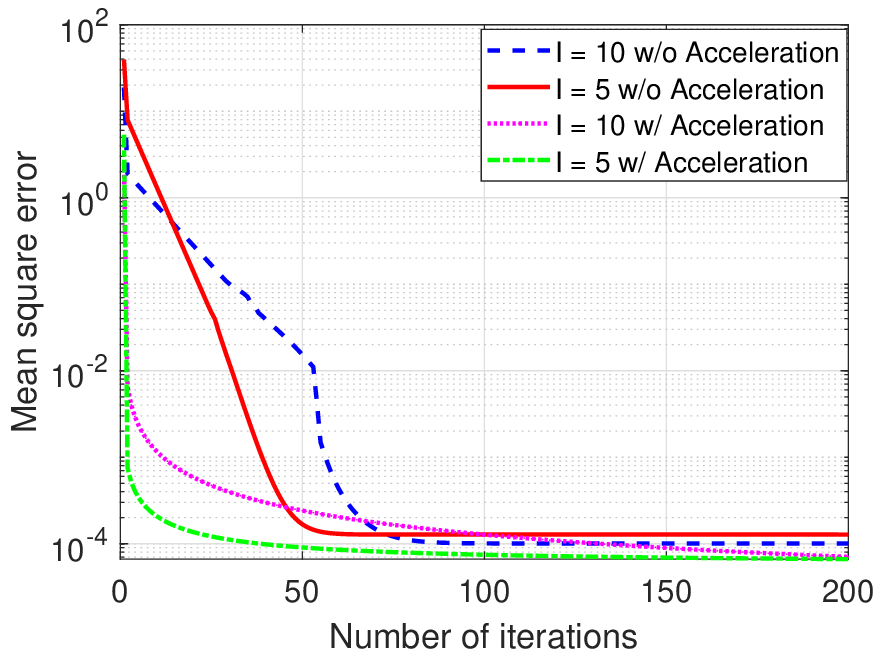} \label{S6-Fig6b}}
            \hfil
            \subfloat[Object detection.]{\includegraphics[width=0.32\linewidth]{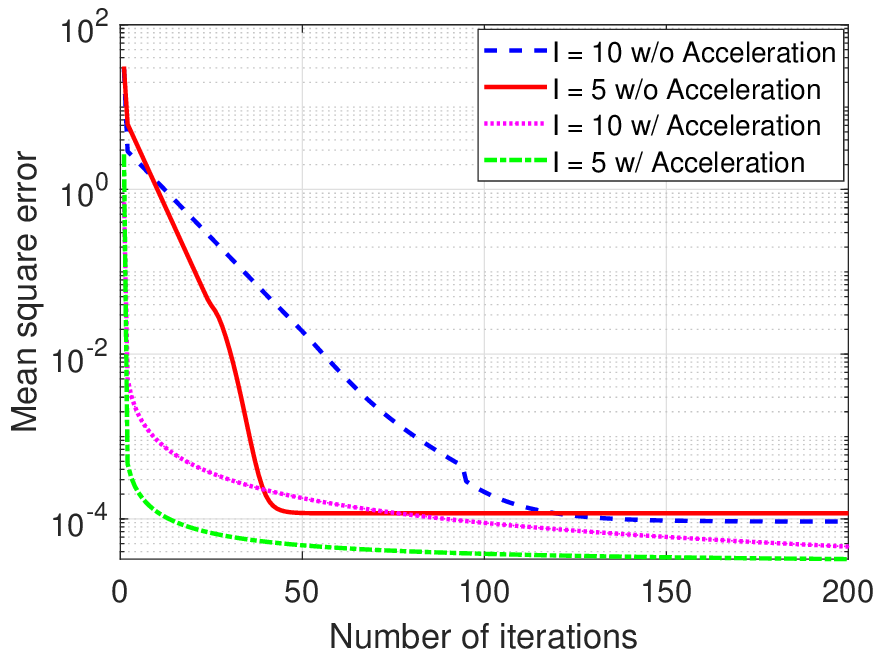} \label{S6-Fig6c}}
            \hfil
            \caption{Mean squared errors (MSE) vs. the number of iterations, with the learning step $\eta = 1 \times 10^{-3}$.}
            \label{S6-Fig6}
        \end{figure*}

\subsection{Learning Error Performance}
    Fig.~\ref{S6-Fig3} depicts the experimental learning loss versus the number of training samples, where the expected learning losses with various training sample sizes are also plotted. Notably, we approximate the expected learning curves across different datasets using the fitting metrics described in Remark~\ref{S3-R2}. Fig.~\ref{S6-Fig3a} (Weather classification) shows that the experimental and expected learning losses decrease with the training samples, and they agree very well. This is because the expected loss converges after a sufficient number of iterations, even when a dedicated training dataset is provided. Moreover, three expected learning curves are approximately similar when the number of samples is set to $D = 20, 100, 1000$. Since the learning model attains competitive performance with small-scale datasets, scaling up the datasets yields diminishing marginal improvements. On the other hand, Figs.~\ref{S6-Fig3b} (Traffic sign recognition) and \ref{S6-Fig3c} (Object detection) show that the expected learning loss significantly deviates from the actual data distribution when the number of training samples is relatively small. The reason is that a small-scale training sample results in high overfitting, and the learning model fails to capture the distribution of the actual data. 
    
    Fig.~\ref{S6-Fig4} demonstrates that the expected learning loss decreases as the number of edge nodes increases, while the power utilization ratio rises. This trend holds across various tasks, including weather classification, traffic sign recognition, and object detection. The primary reason for this improvement is the collaborative resource allocation framework, which enhances the efficiency of wireless resource utilization as more edge nodes are added. Additionally, increasing the number of edge nodes yields a larger sample set, thereby improving learning performance. Notably, the curves converge even with only a few edge nodes, highlighting the effectiveness of our collaborative resource allocation framework.
    
    Fig.~\ref{S6-Fig5} compares the proposed MM- and FoM-based algorithms against benchmark schemes. The results demonstrate that the proposed algorithms outperform traditional SRM algorithms in terms of expected learning loss performance. This advantage arises from the fact that the proposed algorithms focus on minimizing expected learning loss and convergence bounds while optimizing wireless resource allocation based on a learning-oriented principle. This finding aligns with other task-oriented resource allocation schemes \cite{9982368, KouWZ0CNW23, 9606667, 10217150, 9151375}. However, the SRM algorithm exhibits a weak connection between wireless resource allocation and edge learning models. Additionally, the two proposed algorithms show slightly better performance compared to the QoT-Max algorithm. It is noteworthy that the MM-based algorithm performs somewhat less effectively than the FoM-based algorithm. The reason is that the proposed FoM-based algorithm is adaptive to distributed learning and optimization, whereas the MM-based algorithm is designed for centralized learning and optimization.

		\begin{figure*}[t!]
			\centering
			\subfloat[Weather classification.]{\includegraphics[width=0.32\linewidth]{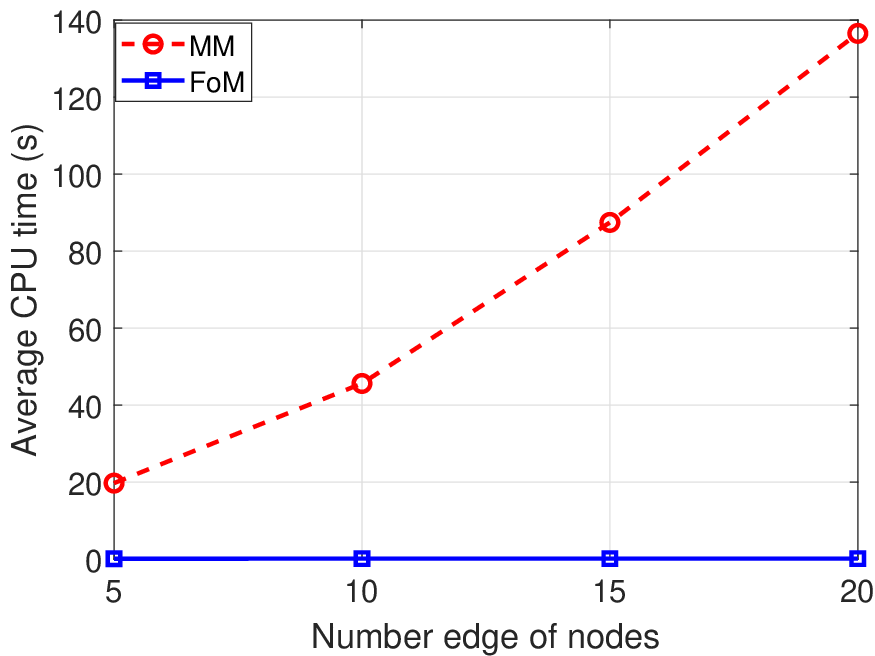} \label{S6-Fig7a}}
			\hfil
			\subfloat[Traffic sign recognition.]{\includegraphics[width=0.32\linewidth]{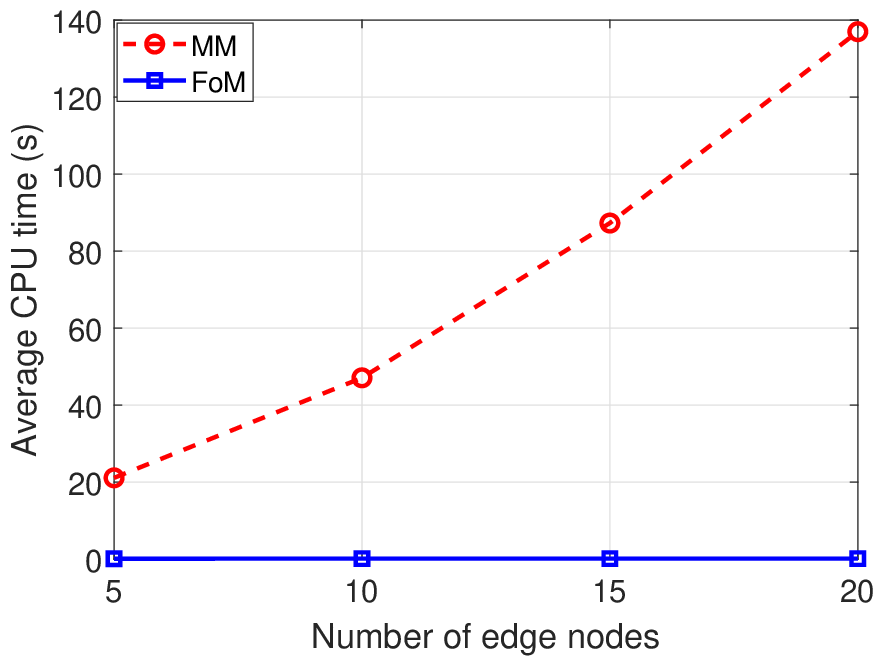} \label{S6-Fig7b}}
			\hfil
			\subfloat[Object detection.]{\includegraphics[width=0.32\linewidth]{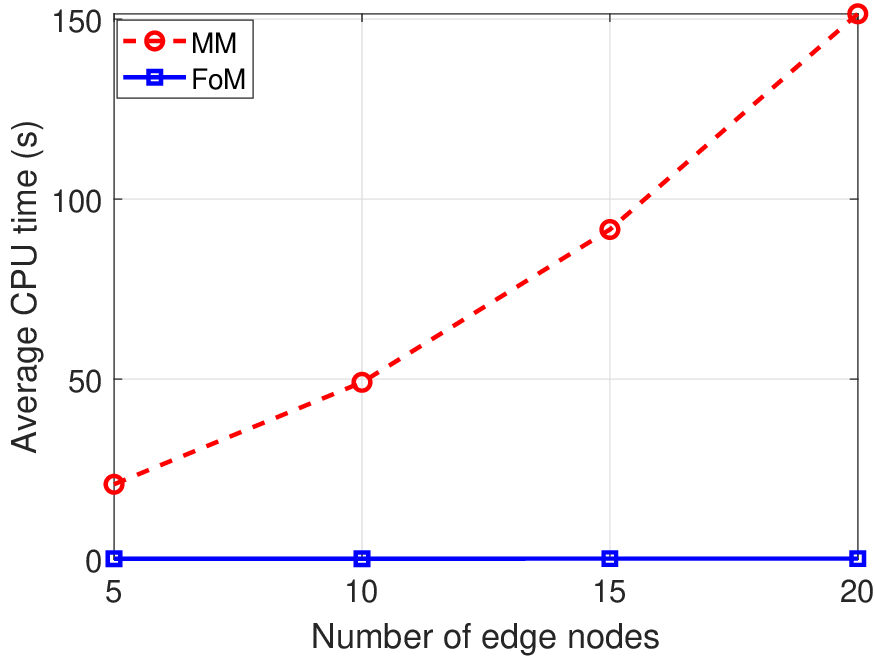} \label{S6-Fig7c}}
			\hfil
			\caption{Average CPU time vs. the number of edge nodes, with $|\mathcal{K}_{1}| = |\mathcal{K}_{2}| = \cdots = |\mathcal{K}_{I}| = 10$.}
			\label{S6-Fig7}
		\end{figure*}

\subsection{Convergence Performance and Complexity Analysis}    
    To evaluate the convergence performance of the algorithms, Fig.~\ref{S6-Fig6} shows the MSE, defined by \eqref{Eq-MSE}, as a function of the number of iterations. On the one hand, we observe from Fig.~\ref{S6-Fig6} that the proposed algorithm without a momentum acceleration term converges faster when $I = 5$ than when $I = 10$. This trend is consistent across various tasks, including weather classification, traffic sign recognition, and object detection. The difference in convergence rates can be attributed to the greater constraint relaxation required when deploying distributed networks with $I = 10$ rather than $I = 5$. As the number of edge nodes increases, the convergence rate of the proposed algorithm gradually decreases, making it more challenging to deploy large-scale distributed networks.
        	
    On the other hand, the proposed algorithm, incorporating a momentum acceleration term, exhibits a steeper decrease, as evident in Fig.~\ref{S6-Fig6}. This improvement occurs because the momentum acceleration term reduces the convergence rate from $\mathcal{O}(1/t)$ to $\mathcal{O}(1/t^{2})$. Furthermore, this term helps prevent the algorithm from becoming trapped in local oscillations. Therefore, adding momentum makes the algorithm more suitable for deploying large-scale distributed networks. Overall, we conclude that an increase in the dynamic growth of edge nodes leads to a decrease in the convergence of the collaborative model, as shown in Figs.~\ref{S6-Fig6}(a)-(c). This decline is attributed to reduced distributed information and heterogeneous communication, leading to significant fluctuations in the accuracy of the collaborative model.
    
    Fig.~\ref{S6-Fig7} illustrates the relationship between average CPU time and the number of edge nodes. The computation time of the distributed FoM-based algorithm remains relatively stable as the number of edge nodes increases. Specifically, Figs.~\ref{S6-Fig7a}-\ref{S6-Fig7c} indicate that the average CPU times for three different tasks are approximately $0.15$, $0.1$, and $0.18$ seconds, respectively. This stability is attributed to the FoM algorithm's parallel nature, which enables easy deployment across distributed networks and thereby reduces computational complexity. In contrast, the average CPU time for the centralized MM-based algorithm increases significantly as the number of edge nodes grows.

	Regarding computational complexity, the centralized MM-based algorithm involves $I + 2K + 2$ optimization variables and requires the CVX or Mosek package to solve the convex optimization problem. As a result, the worst-case complexity of this algorithm can be computed as $\mathcal{O} \left( (I + 2K + 2)^{3.5} \right)$. On the other hand, the first-order FoM algorithm involves $2I + 2K + 1$ optimization variables and has a linear complexity of $\mathcal{O}(2I + 2K + 1)$. Additionally, since most of the variables can be updated in a distributed manner, the complexity of the first-order FoM algorithm can be approximated as $\mathcal{O} (K / I)$. Overall, we present the complexity results of three benchmark algorithms and two proposed algorithms in the first column of Table~\ref{Table2}.
    
	In addition to examining computational complexity, Table~\ref{Table2} compares three other factors: distributed capability, online capability, and learning performance. The proposed FoM-based algorithm is effective for distributed networks and online learning due to its low computational complexity, strong distribution, and online capabilities. It also demonstrates high learning performance, as illustrated in Fig.~\ref{S6-Fig5}. In contrast, the MM-based algorithm is suited for small- to medium-scale IoT networks, thanks to its high learning performance.    
	
	In practice, the proposed online optimization is executed at the edge orchestrator once per control interval (spanning multiple SGD steps) and can be warm-started across consecutive intervals. The control signaling for distributing power plans consists of only a few scalars per node and can be piggybacked on existing control messages. In our simulations, we focus on the energy--learning tradeoff and thus do not explicitly model the solver runtime and signaling cost; nevertheless, these overheads are typically negligible compared with training and payload communication in large-scale IoT networks, and can be further reduced by using longer control intervals or quantized power levels in small-scale deployments.

   		\begin{figure*}[t!]
		  	\centering
		  	\subfloat[The expected loss.]{\includegraphics[width=0.24\linewidth]{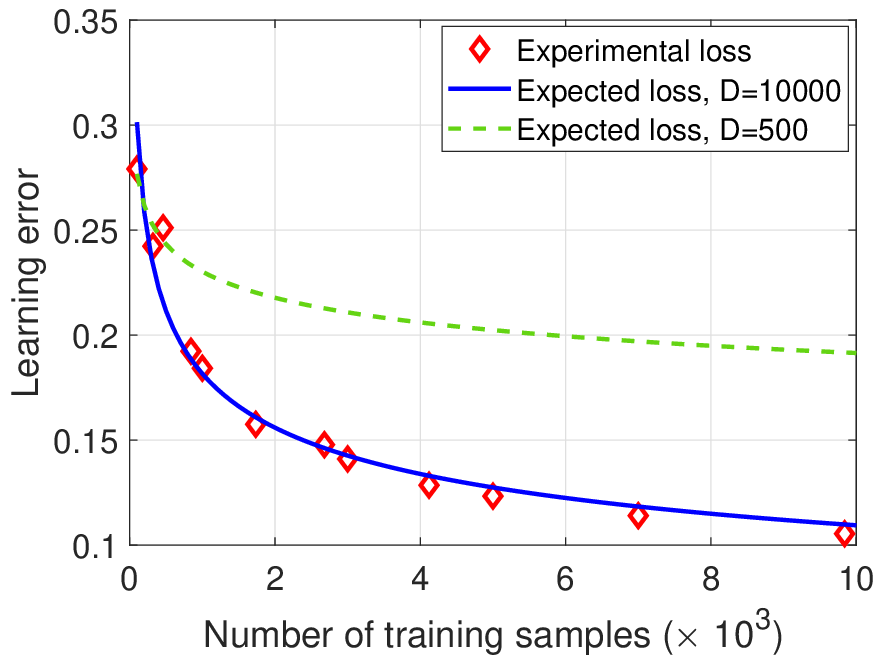} \label{S6-Fig8a}}
		  	\hfil
		  	\subfloat[The learning rate.]{\includegraphics[width=0.24\linewidth]{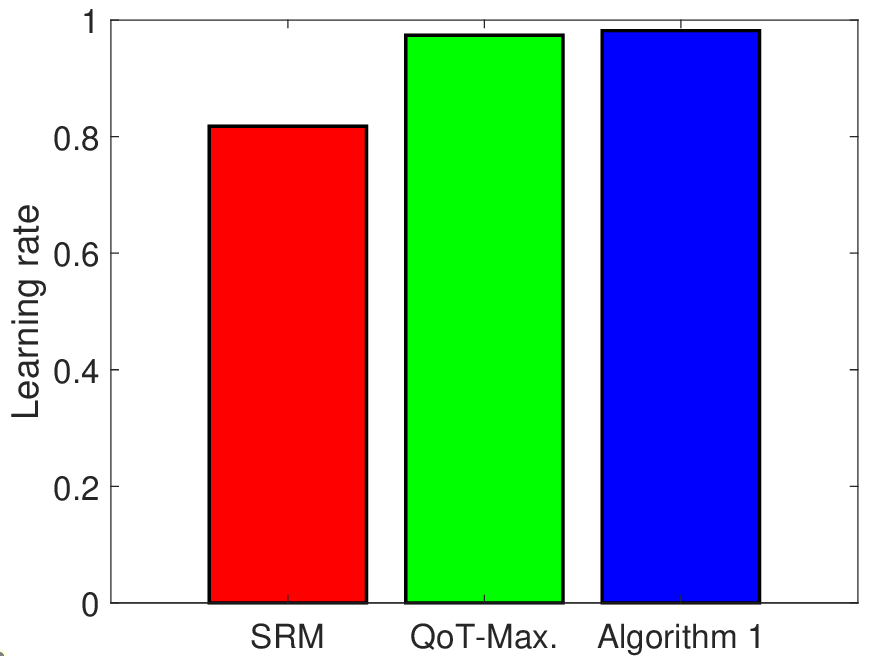} \label{S6-Fig8b}}
		  	\hfil
		  	\subfloat[The average collision rate.]{\includegraphics[width=0.24\linewidth]{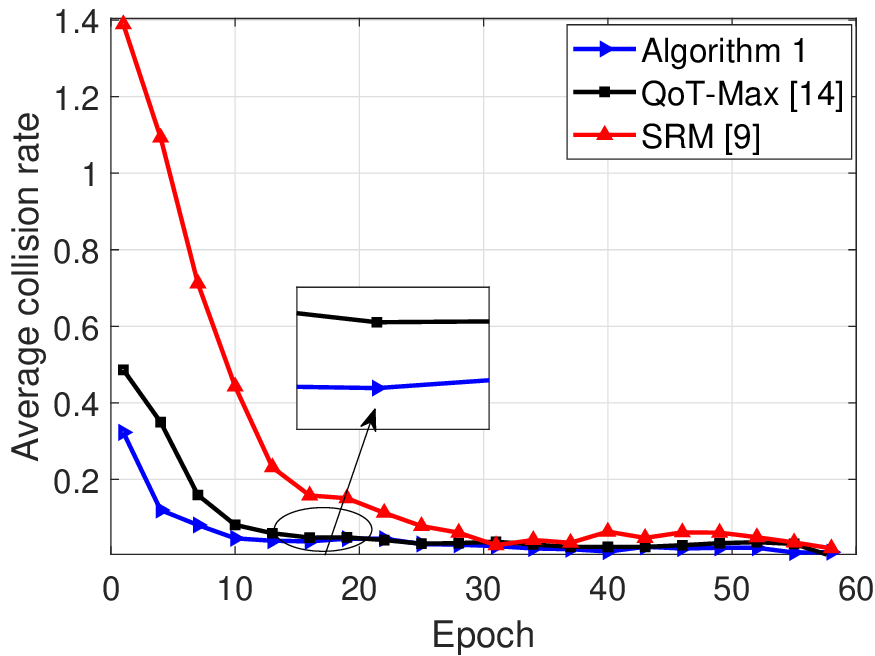} \label{S6-Fig8c}}
		  	\hfil
		  	\subfloat[The average goal rate.]{\includegraphics[width=0.24\linewidth]{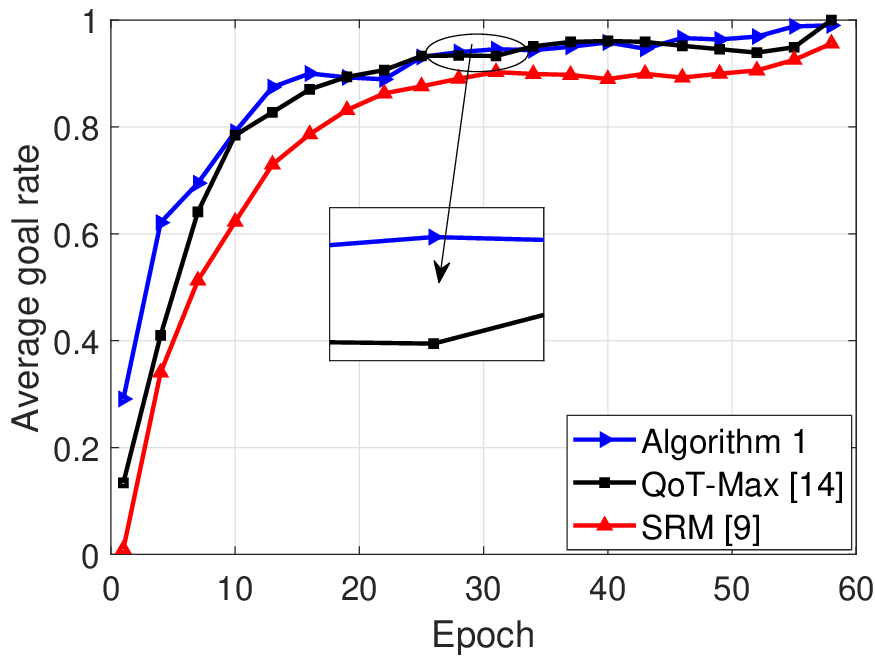} \label{S6-Fig8d}}
		  	\hfil
		  	\caption{The training behavior in autonomous navigation.}
		  	\label{S6-Fig8}
		  \end{figure*}   

\begin{table}[t!]
	\tiny
	\setstretch{1.0}
	\centering
	\caption{Performance comparison.}
	\vspace{-5pt}
	\setlength{ \arraycolsep }{-0.2em}
	\begin{threeparttable}[!t]
		\begin{tabular}{!{\vrule width1.2pt}c !{\vrule width1.2pt} c !{\vrule width1.2pt}c !{\vrule width1.2pt}c !{\vrule width1.2pt}c !{\vrule width1.2pt}}
			\Xhline{1.2pt}
			\textbf{Algorithm} & \textbf{Complexity} & \textbf{Distribution}\tnote{a} & \textbf{Online} & \begin{tabular}[c]{@{}c@{}}\textbf{Learning} \\ \textbf{Performance} \end{tabular}\\
			\Xhline{1.0pt}
			SRM \cite{10026196} & $\mathcal{O} \left( ( K - 1 )^{7} \right)$ &  \XSolidBrush & \XSolidBrush & low \\
			\hline
			QoT-Max \cite{10084349} & $\mathcal{O} \left( K  / I \right)$ & \Checkmark & \XSolidBrush & high \\
			\hline
			ADMM-based LCPA \cite{10120724} & $\mathcal{O} \left( ( K^{2} + K ) / I \right)$ & \Checkmark & \XSolidBrush & high \\
			\hline
			MM-based Proposal & $\mathcal{O} \left( ( I + K^{2} + K )^{3.5} \right)$ &  \XSolidBrush & \XSolidBrush & high \\
			\hline
			FoM-based Proposal & $\mathcal{O} \left( K / I \right)$ & \Checkmark & \Checkmark & high \\
			\Xhline{1.0pt}
		\end{tabular}
		\label{Table2}
		{\scriptsize
			\begin{tablenotes}
				\item[a] The ``\Checkmark'' indicates a functionality supported, whereas the ``\XSolidBrush" not supported.
			\end{tablenotes}}
	\end{threeparttable}
\end{table}

    \subsection{Case Study: Autonomous Navigation with Collision Avoidance}

	We evaluate the proposed algorithms through a case study on vehicle navigation with shape-based collision avoidance \cite{10036019, 10758752}. Simulations are conducted on the IR-SIM platform \cite{9645287}, an open-source navigation environment (\url{https://github.com/reiniscimurs/DRL-robot-navigation}). Unless otherwise specified, we deploy $\left|\mathcal{K}\right|=20$ sensors on a single edge server with wireless settings consistent with earlier experiments. Each LiDAR packet has a size of $V = 1.6$ \si{MB}, and the initial allocation is $ A = 500$. LiDAR point-cloud datasets are generated via CarlaFLCAR and used for training within IR-SIM, while the trained models are validated in CARLA for visualizing obstacle avoidance.

	To quantify training performance, in addition to learning loss, we define three metrics. The learning rate $r_{l}$ reflects the ratio of learning time to total time, the average collision rate $r_{c}$ measures collisions per obstacle, and the average goal rate $r_{g}$ counts successful arrivals per trial:
\begin{equation} \label{S6-Eq24}
	r_{l} = \dfrac{T_{l}}{T_{l} + T_{o}}, \quad 
	r_{c} = \dfrac{N_{c}}{N_{c} + N_{a}}, \quad
	r_{g} = \dfrac{N_{g}}{N_{g} + N_{u}}, 
\end{equation}
where $r_{l}, r_{c}, r_{g}$ denote the learning rate, collision rate, and goal rate, respectively; $T_{l}, N_{c}, N_{g}$ are training time, number of collisions, and number of goals, while $T_{o}, N_{a}, N_{u}$ are optimization time, number of avoidances, and failed goals.

Fig.~\ref{S6-Fig8} illustrates the training behavior. In Fig.~\ref{S6-Fig8a}, the expected learning loss decreases with more samples, but small-scale fitting ($500$ samples) diverges from actual LiDAR data, while large-scale fitting ($10000$ samples) aligns well. Fig.~\ref{S6-Fig8b} shows that Algorithm~\ref{S5-AM1} and QoT-Max maintain learning rates above $0.95$, outperforming SRM ($<0.85$) due to their lower computational complexity. Fig.~\ref{S6-Fig8c} compares collision rates: Algorithm~\ref{S5-AM1} consistently achieves the lowest, benefiting from its task-oriented optimization that collects more samples. Fig.~\ref{S6-Fig8d} shows goal rates, where Algorithm~\ref{S5-AM1} again surpasses QoT-Max and SRM, which suffer from insufficient~data.  

In the CARLA navigation task, the learning loss measures the policy’s training objective (e.g., prediction/behavioral-cloning error) and serves as a proxy for control quality. As training proceeds, lower loss generally leads to more stable driving behavior, consistent with the improved task-level outcomes in Fig.~8, such as a higher goal rate $r_g$ and a lower collision rate $r_c$.

Fig.~\ref{S6-Fig9} visualizes autonomous navigation outcomes across different algorithms. The top-right subfigures show the final navigation states, while the main panels display the corresponding trajectories. Black and red vehicles in the top-right subfigures denote autonomous agents and obstacles, respectively. In the main panels, gray regions represent roads, red dashed lines indicate trajectories, blue polygons denote obstacles, and white circular markers mark the positions of autonomous vehicles.

Performance comparison across algorithms reveals distinct behaviors. For Algorithm~\ref{S5-AM1} (Fig.~\ref{S6-Fig9a}), the left subfigure shows stable navigation, and the right demonstrates successful goal completion under disturbances, enabled by efficient resource utilization and extensive training sample collection. For QoT-Max (Fig.~\ref{S6-Fig9b}), the left subfigure depicts an unstable but partially effective trajectory, while the right shows a right-turn failure due to divergence between small-scale initial data and the accurate LiDAR distribution. SRM (Fig.~\ref{S6-Fig9c}) exhibits collisions and right-turn failures, as its non-task-oriented design limits data collection and learning performance.

	   \begin{figure}[t!]
	    	\centering
	    	\subfloat[The proposed Algorithm~\ref{S5-AM1}.]{
		\includegraphics[width=0.5\linewidth]{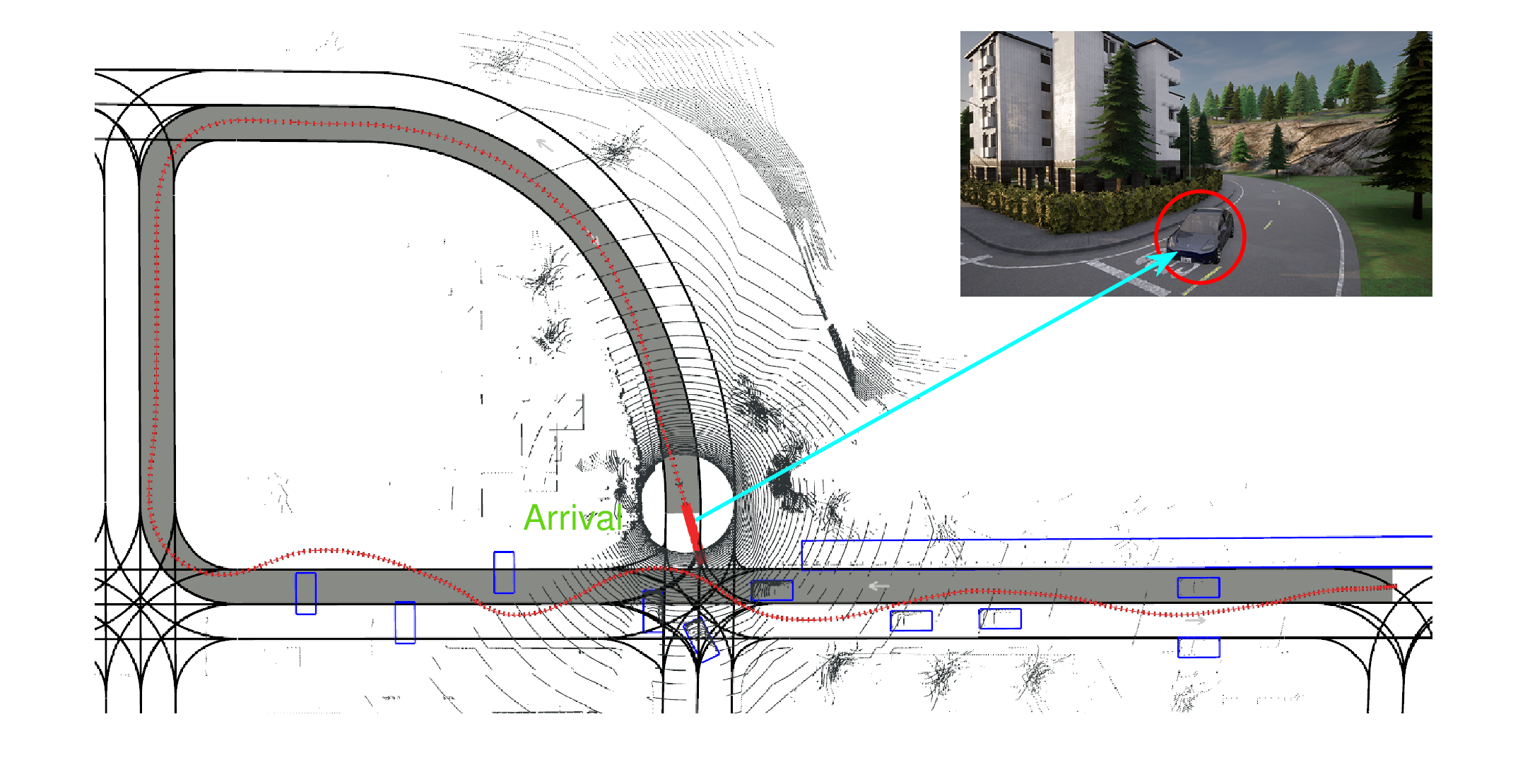} 
		\includegraphics[width=0.5\linewidth]{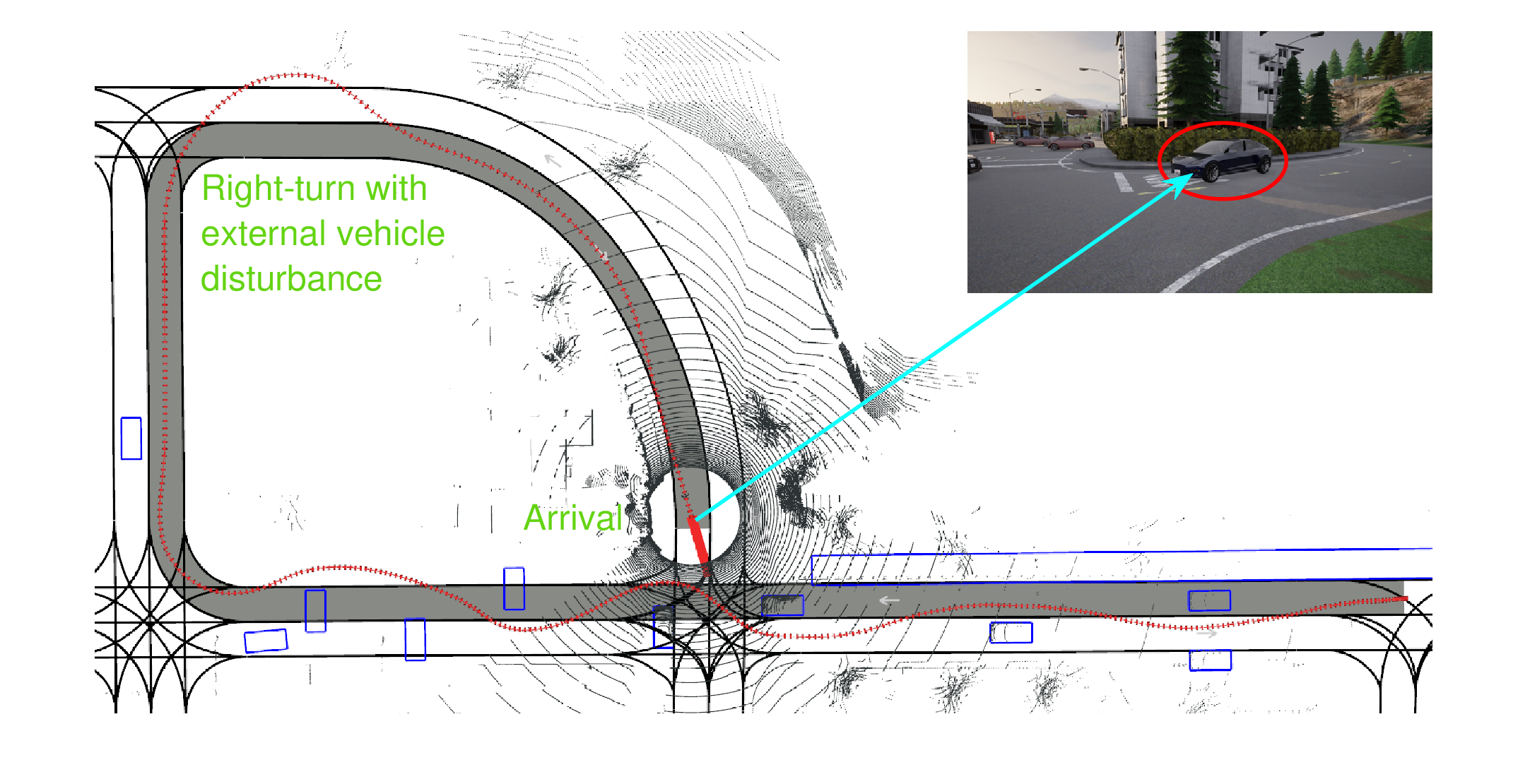} \label{S6-Fig9a}}
	    	\hfil	
	    	\subfloat[The QoT-Max algorithm \cite{10084349} with small-scale initialized data.]{
		\includegraphics[width=0.5\linewidth]{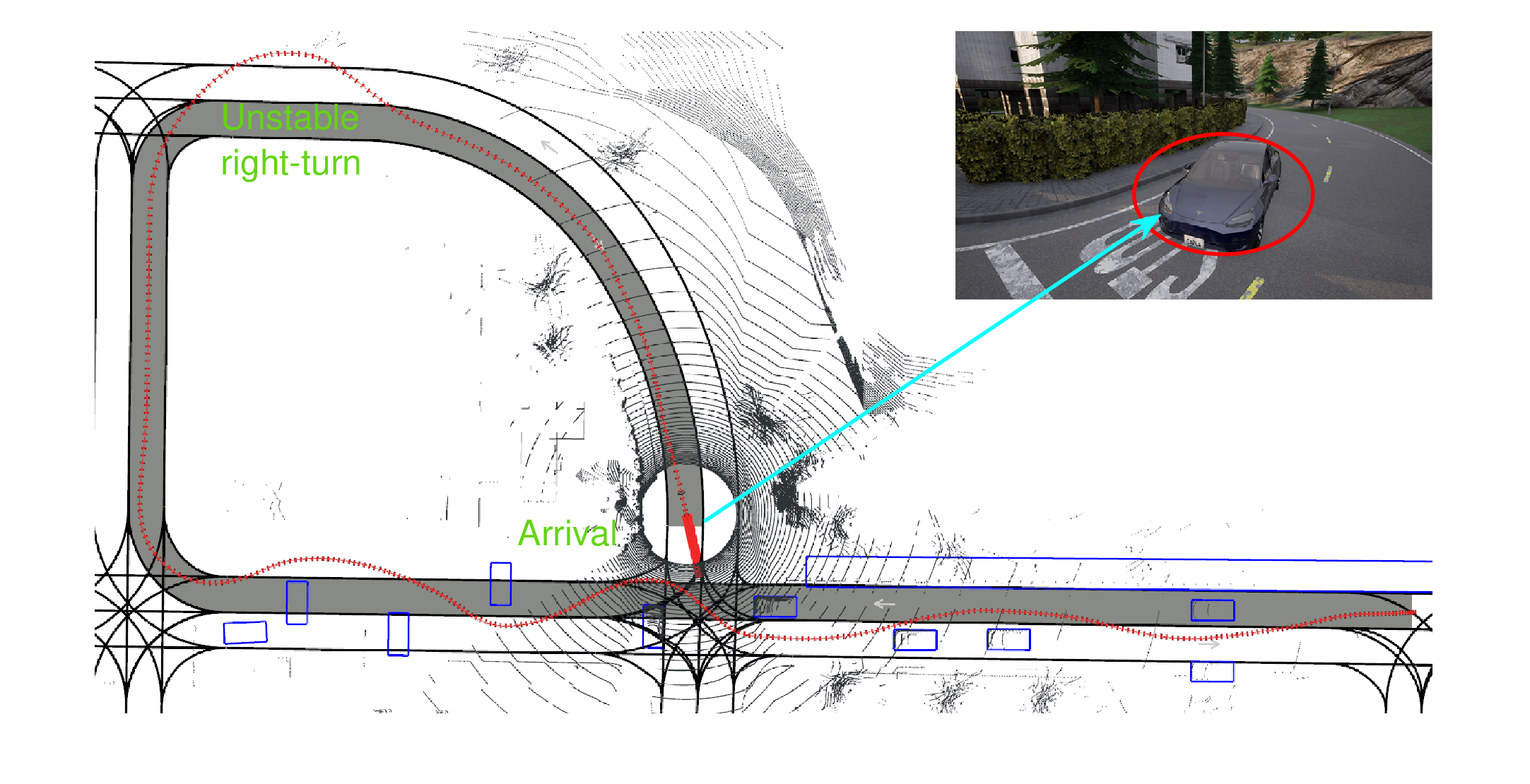} 
	    	\includegraphics[width=0.5\linewidth]{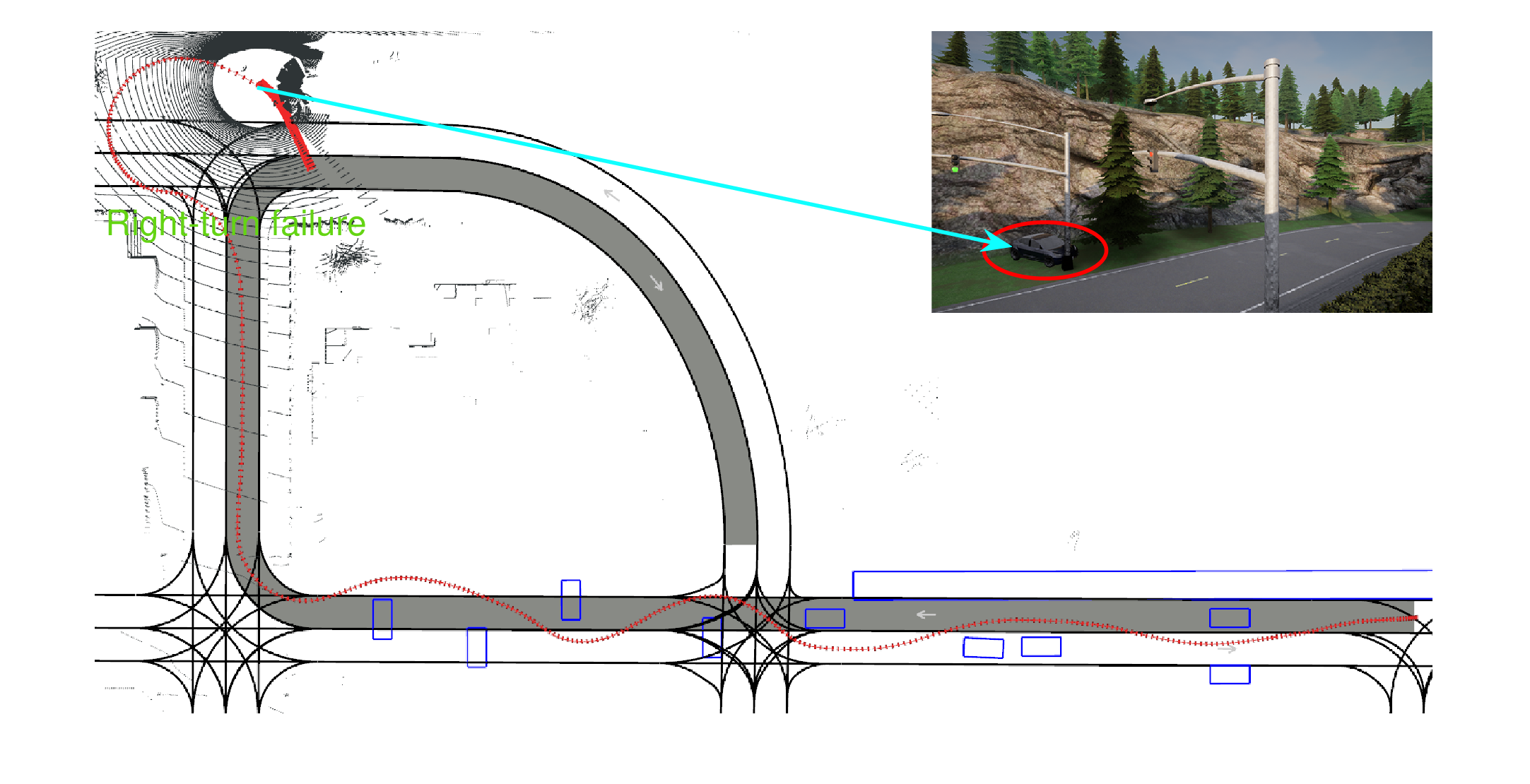} \label{S6-Fig9b}}
	    	\hfil		
	    	\subfloat[The SRM algorithm \cite{10026196} without task-oriented communication.]{
		\includegraphics[width=0.5\linewidth]{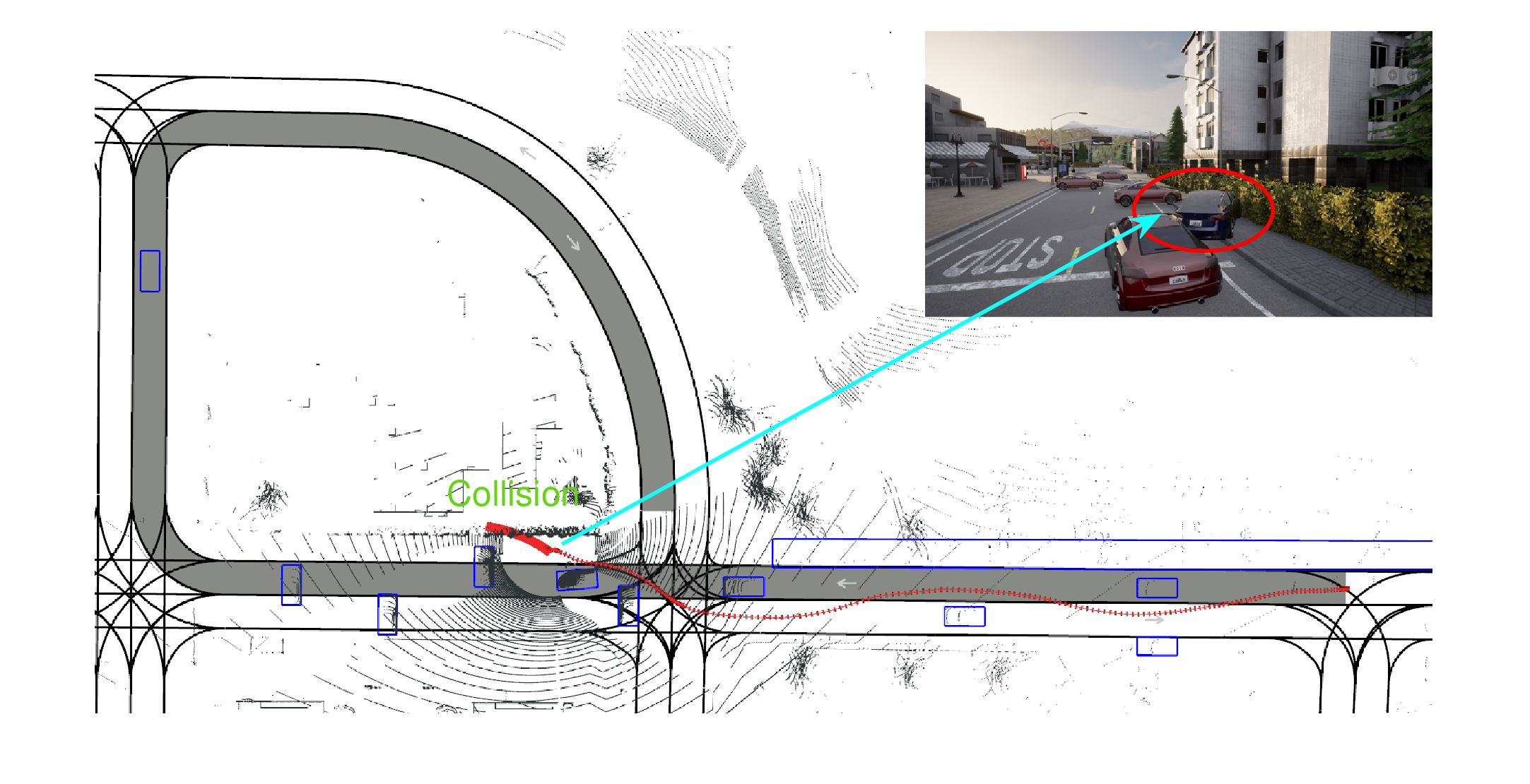} 
	    	\includegraphics[width=0.5\linewidth]{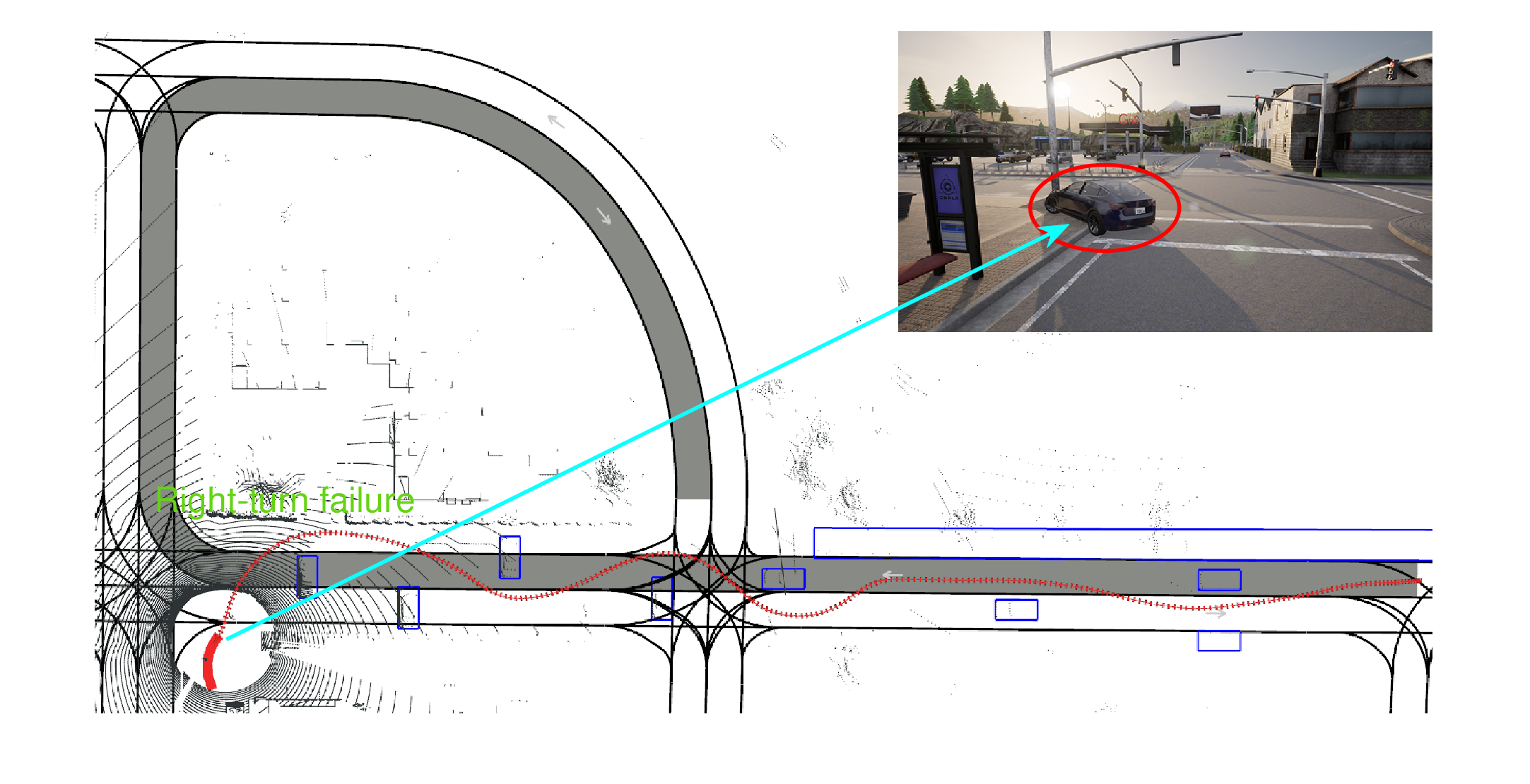} \label{S6-Fig9c}}
	    	\hfil
	\caption{Visualization of autonomous navigation in the CARLA simulator. Each subfigure pair displays the navigation trajectories (main panels) and final states (top-right panels) of the autonomous vehicles. Black and red vehicles indicate agents and obstacles, gray regions represent roads, red dashed lines show trajectories, blue polygons denote obstacles, and white circular markers indicate vehicle positions.}
	    	\label{S6-Fig9}
	    \end{figure}

\section{Conclusion} \label{S7}	
	This paper has proposed an energy-efficient federated edge learning model for processing small-scale datasets in large-scale IoT networks. To address low resource utilization, a collaborative power allocation framework was developed to align with distributed network structures. An expected learning loss and a stochastic online learning algorithm were introduced to establish the relationship between dataset size and learning performance, enabling effective small-scale data collection. Additionally, a highly distributed algorithm was designed to solve large-scale optimization problems efficiently. Extensive experiments and case studies demonstrated that the proposed model significantly improves resource utilization and learning performance, supporting efficient distributed processing with low computational complexity. The model also showed scalability and effectiveness in autonomous driving and navigation scenarios. Future work includes extending the framework to multi-task learning in multi-modal environments.

\appendices
\section{Proof of Theorem~\ref{S3-TM1}} \label{App-Thm1}

We begin by relaxing $F (\bm{w} (t + 1))$ as follows:
		\begin{align} \label{SA-EQA1}
			F ( \bm{w} (t + 1) ) 
			& \leq F ( \bm{w} (t) ) + \nabla F ( \bm{w} (t) )^{T} ( \bm{w} (t + 1) - \bm{w} (t) ) \nonumber \\
			& \quad {}+ \dfrac{L}{2} \| \bm{w} (t + 1) - \bm{w} (t) \|^{2},
		\end{align}
	where \eqref{SA-EQA1} is derived from Assumption~\ref{S3-AS1}. Taking the expectation on both sides of \eqref{SA-EQA1} yields
		\begin{align}
			& \mathbb{E} \left( F ( \bm{w} (t + 1) ) \right) \nonumber \\
			& \leq \mathbb{E} \Big( F ( \bm{w} (t) ) + ( \bm{w} (t + 1) - \bm{w} (t) )^{T} \nabla F ( \bm{w} (t) ) \nonumber \\
			& \quad  {}+ \dfrac{L}{2} \| \bm{w} (t + 1) - \bm{w} (t + 1) \|^{2} \Big) \nonumber \\
			& \leq \mathbb{E} \left( F ( \bm{w} (t) ) \right) - \dfrac{1}{ 2 L } \| \nabla F ( \bm{w} (t) ) \|^{2} + \dfrac{1}{ 2 L } \mathbb{E} ( \| \bm{o} (t) \|^{2} ), \label{SA-EQA2b}
		\end{align}
	where \eqref{SA-EQA2b} is obtained by \eqref{S2-EQ3a}-\eqref{S2-EQ3b}. Specifically, $\bm{o} (t)$ is defined as
		\begin{equation} \label{SA-EQA2c}
			\bm{o} (t) \triangleq \dfrac{ D - | \mathcal{X} (t) | }{ D | \mathcal{X} (t) | } \sum_{ i \in \mathcal{X} (t) } \nabla f_{i} ( \bm{w} (t) ) - \dfrac{1}{D} \sum_{ i \in \mathcal{X}^{C} (t) } \nabla f_{i} ( \bm{w} (t) ),
		\end{equation}
	where $\mathcal{X}^{C} (t)$ denote the complement of $\mathcal{X} (t)$. By \eqref{SA-EQA2c}, we can derive $\mathbb{E} ( \| \bm{o} (t) \|^{2} )$ as follows
		\begin{align}
			\mathbb{E} ( \| \bm{o} (t) \|^{2})
			& \leq \mathbb{E} \left( \left[ \left( \dfrac{ D - | \mathcal{X} (t) | }{ D | \mathcal{X} (t) | } \right) \sum_{ i \in \mathcal{X} (t) } \| \nabla f_{i} ( \bm{w} (t) ) \| \right. \right. \nonumber \\
			&\qquad \left. \left. {}+ \dfrac{1}{ | \mathcal{X}^{C} (t) | } \sum_{ i \in \mathcal{X}^{C} (t) } \| \nabla f_{i} ( \bm{w} (t) ) \| \right]^{2} \right)  \nonumber \\
			& \leq 4 \, \mathbb{E} \left( \left( \dfrac{D - | \mathcal{X} (t) |}{D} \right)^{2} ( \xi_{1} + \xi_{2} \| \nabla F ( \bm{w} (t) ) \|^{2} ) \right),  \nonumber
		\end{align}
	where the triangle inequality is applied twice, and the resulting terms are simplified. Substituting this result into \eqref{SA-EQA2b} gives:
		\begin{equation} \nonumber
			\begin{aligned}
				\lefteqn{\mathbb{E} \left( F ( \bm{w} (t + 1) ) \right)} \nonumber \\
				& \leq \mathbb{E} \left( F ( \bm{w} (t) ) \right) - \dfrac{1}{ 2 L } \| \nabla F ( \bm{w} (t) ) \|^{2} + \dfrac{1}{ 2 L } \mathbb{E} ( \| \bm{o} \|^{2} ) \\
				& \leq \mathbb{E} \left( F ( \bm{w} (t) ) - \dfrac{1}{ 2 L } \| \nabla F ( \bm{w} (t) ) \|^{2} + \dfrac{ 2 \alpha }{ L } ( \xi_{1} + \xi_{2} \| \nabla F ( \bm{w} (t) ) \|^{2} ) \right) \\
				& \leq \mathbb{E} \left( F ( \bm{w} (t) ) + \left( \dfrac{ 2 \xi_{2} A }{L} - \dfrac{1}{ 2 L } \right) \| \nabla F ( \bm{w} (t) ) \|^{2} + \dfrac{ 2 \xi_{1} A }{L} \right) \\
				& \leq \mathbb{E} \left( F ( \bm{w} (t) ) + \left( 4 \xi_{2} \alpha^{2} - 1 \right) ( F ( \bm{w} (t) ) - F ( \bm{w}^{*} ) ) + \dfrac{ 2 \xi_{1} A }{L} \right),
			\end{aligned}
		\end{equation}
	where $\alpha \triangleq \left( {D - | \mathcal{X} (t) |} / {D} \right)^{2}$. Therefore, we obtain
		\begin{equation} \nonumber
			\begin{aligned}
				\lefteqn{\mathbb{E} \left( F ( \bm{w} (t + 1) ) - F ( \bm{w}^{*} ) \right)} \nonumber \\
				& \leq \mathbb{E} \left( F ( \bm{w} (t) ) - F ( \bm{w}^{*} ) + \left( 4 \xi_{2} \alpha^{2} - 1 \right) ( F ( \bm{w} (t) ) - F ( \bm{w}^{*} ) ) \right.  \nonumber \\
				& \quad \left. {}+ \dfrac{ 2 \alpha \xi_{1} }{L} \right) \\
				& \leq \mathbb{E} \left( 4 \xi_{2} \alpha^{2} ( F ( \bm{w} (t) ) - F ( \bm{w}^{*} ) ) + \dfrac{ 2 \alpha \xi_{1} }{L} \right) .
			\end{aligned}
		\end{equation}
	Additionally, we can further derive:
		\begin{equation} \nonumber
			\begin{aligned}
				& \mathbb{E} \left( F ( \bm{w} (t + 1) ) - F ( \bm{w}^{*} ) - \dfrac{2 \alpha \xi_{1}}{ ( 1 - 4 \xi_{2} A ) L } \right) \\
				& \leq \mathbb{E} \left( 4 \xi_{2} A \left( F ( \bm{w} (t) ) - F ( \bm{w}^{*} ) - \dfrac{2 \alpha \xi_{1}}{ ( 1 - 4 \xi_{2} A ) L } \right) \right) \\
				& \leq \mathbb{E} \left( ( 4 \xi_{2} A )^{t + 1} \left( F ( \bm{w} (0) ) - F ( \bm{w}^{*} ) - \dfrac{2 \alpha \xi_{1}}{ ( 1 - 4 \xi_{2} A ) L } \right) \right),
			\end{aligned}
		\end{equation}
	which concludes \eqref{S3-EQ7} in Theorem~\ref{S3-TM1}. This completes the proof.

\section{Proof of Proposition~\ref{S4-PR1}} \label{App-Prop1}

Recalling the objective function of $\mathcal{P}_{3}$, define the (unsquared) learning-progress term
\begin{equation}\label{S4-EQpsi-def}
\psi(\bm p)\triangleq
\dfrac{BT}{V\ln 2}\sum_{k=1}^{K}
\ln\!\Bigg(1+\dfrac{G_{k,k}p_k}{\sum_{\ell\neq k}G_{k,\ell}p_\ell+\sigma^2}\Bigg)
+A-D,
\end{equation}
so that $\Phi(\bm p)=\big(\psi(\bm p)\big)^2$. Likewise, let $\widetilde{\psi}(\bm p\mid \bm p^*)$ denote the expression inside the square in \eqref{S4-EQ12}, so that
$\widetilde{\Phi}(\bm p\mid \bm p^*)=\big(\widetilde{\psi}(\bm p\mid \bm p^*)\big)^2$.

\emph{Step 1: A lower bound on each log-SINR term.}
For each $k$, rewrite the log-SINR term as a difference of logarithms:
\begin{align}\label{S4-EQlog-diff}
\lefteqn{\ln\!\Bigg(1+\dfrac{G_{k,k}p_k}{\sum_{\ell\neq k}G_{k,\ell}p_\ell+\sigma^2}\Bigg)} \nonumber \\
& =
\ln\!\Bigg(1+\sum_{\ell=1}^{K}\dfrac{G_{k,\ell}p_\ell}{\sigma^2}\Bigg)
-
\ln\!\Bigg(1+\sum_{\ell\neq k}\dfrac{G_{k,\ell}p_\ell}{\sigma^2}\Bigg).
\end{align}
Let
\[
u_k(\bm p)\triangleq 1+\sum_{\ell\neq k}\dfrac{G_{k,\ell}p_\ell}{\sigma^2},\qquad
u_k(\bm p^*)\triangleq 1+\sum_{\ell\neq k}\dfrac{G_{k,\ell}p_\ell^*}{\sigma^2}.
\]
Since $\ln(x)$ is concave on $x>0$, its first-order Taylor expansion at $x'=u_k(\bm p^*)$ yields the global \emph{upper} bound
\begin{equation}\label{S4-EQlog-UB}
\ln\big(u_k(\bm p)\big)
\le
\ln\big(u_k(\bm p^*)\big)
+\frac{1}{u_k(\bm p^*)}\big(u_k(\bm p)-u_k(\bm p^*)\big),
\end{equation}
with equality at $\bm p=\bm p^*$. Multiplying \eqref{S4-EQlog-UB} by $-1$ gives the corresponding global \emph{lower} bound on $-\ln(\cdot)$:
\begin{equation}\label{S4-EQminuslog-LB}
-\ln\big(u_k(\bm p)\big)
\ge
-\ln\big(u_k(\bm p^*)\big)
-\frac{1}{u_k(\bm p^*)}\big(u_k(\bm p)-u_k(\bm p^*)\big),
\end{equation}
which is tight at $\bm p=\bm p^*$. Substituting \eqref{S4-EQminuslog-LB} into \eqref{S4-EQlog-diff} yields, for each $k$,
\begin{align}\label{S4-EQterm-LB}
\lefteqn{\ln\!\Bigg(1+\dfrac{G_{k,k}p_k}{\sum_{\ell\neq k}G_{k,\ell}p_\ell+\sigma^2}\Bigg)} \nonumber \\
& \ge
\ln\!\Bigg(1+\sum_{\ell=1}^{K}\dfrac{G_{k,\ell}p_\ell}{\sigma^2}\Bigg)
-\ln\big(u_k(\bm p^*)\big)
+1-\frac{u_k(\bm p)}{u_k(\bm p^*)},
\end{align}
which is exactly the inner surrogate term used in \eqref{S4-EQ12}.

\emph{Step 2: Lower bound on $\psi(\bm p)$ and majorization of $\Phi(\bm p)$.}
Summing \eqref{S4-EQterm-LB} over $k$ and scaling by $\frac{BT}{V\ln 2}$ yields
\begin{equation}\label{S4-EQpsi-LB}
\psi(\bm p)\ \ge\ \widetilde{\psi}(\bm p\mid \bm p^*),\quad \forall \bm p.
\end{equation}
In the considered data-deficit operating regime, $\psi(\bm p)\ge 0$ holds over the feasible set, and the surrogate construction also ensures $\widetilde{\psi}(\bm p\mid \bm p^*)\ge 0$. Therefore, the pointwise inequality \eqref{S4-EQpsi-LB} implies
\[
\Phi(\bm p)=\psi(\bm p)^2 \ \le\ \widetilde{\psi}(\bm p\mid \bm p^*)^2=\widetilde{\Phi}(\bm p\mid \bm p^*),
\]
which establishes the majorization property in Proposition~\ref{S4-PR1}.

\emph{Step 3: Tangency and first-order consistency.}
The bound \eqref{S4-EQlog-UB}--\eqref{S4-EQminuslog-LB} is tight at $\bm p=\bm p^*$ because it is the first-order Taylor expansion at $u_k(\bm p^*)$. Hence \eqref{S4-EQterm-LB} holds with equality at $\bm p=\bm p^*$ for every $k$, which implies
$\widetilde{\psi}(\bm p^*\mid \bm p^*)=\psi(\bm p^*)$ and thus
$\widetilde{\Phi}(\bm p^*\mid \bm p^*)=\Phi(\bm p^*)$.
Moreover, since the surrogate is constructed via first-order Taylor expansion at $\bm p^*$, the gradients coincide at the expansion point, i.e.,
$\nabla_{\bm p}\widetilde{\Phi}(\bm p\mid \bm p^*)\big|_{\bm p=\bm p^*}=\nabla_{\bm p}\Phi(\bm p^*)$.
This completes the proof.

\section{Proof of Proposition~\ref{S5-PR3}} \label{App-Prop2}
Fix $\{\bm p_i\}_{i=1}^I$ and consider minimizing the ALF \eqref{S4-EQ14} w.r.t. $\{P_i\}_{i=1}^I$ under the coupling constraint $\sum_{i=1}^I P_i = P$ (and $P_i \ge 0$). To obtain a distributed-friendly form, we enforce the consensus structure $\beta_i=\beta$ for all $i$ (this is standard in dual decomposition / augmented-Lagrangian treatments).

Introduce a Lagrange multiplier $\nu$ for the equality $\sum_i P_i=P$. The partial derivatives of the augmented Lagrangian w.r.t. $P_i$ give the optimality conditions
\begin{equation}
	\frac{\partial}{\partial P_i}\Big[ \sum_{j=1}^I \beta( \mathbf{1}^T\mathbf{p}_j - P_j)+\frac{\mu}{2}\sum_{j=1}^I(\mathbf{1}^T\mathbf{p}_j-P_j)^2 \Big] - \nu =0,
\end{equation}
which reduces to
\begin{equation}
	-\,\beta -\mu(\mathbf{1}^T\mathbf{p}_i-P_i) -\nu =0
		\, \Longrightarrow \,
	P_i = \mathbf{1}^T\mathbf{p}_i - \frac{\beta+\nu}{\mu}.
\end{equation}
Summing this identity over $i=1, \cdots, I$ and using $\sum_i P_i=P$ yields
\begin{equation} 
	P = \sum_{i=1}^I \mathbf{1}^T\mathbf{p}_i - I\frac{\beta+\nu}{\mu}
	\, \Longrightarrow \,
\frac{\beta+\nu}{\mu} = \frac{\sum_{i}\mathbf{1}^T\mathbf{p}_i - P}{I}.
\end{equation}
Substituting back gives the intended \eqref{S5-EQ15}. Finally, treating $\beta$ as the dual variable associated with the (global) equality constraint, a projected gradient (dual ascent) step for $\beta$ on the augmented Lagrangian with step-size $\mu/I$ yields the desired \eqref{S5-EQ16}. This completes the proof.

\bibliographystyle{IEEEtran}
\bibliography{MyRef.bib}

\begin{IEEEbiography}
	[{\includegraphics[width=1in, height=1.25in, clip, keepaspectratio]{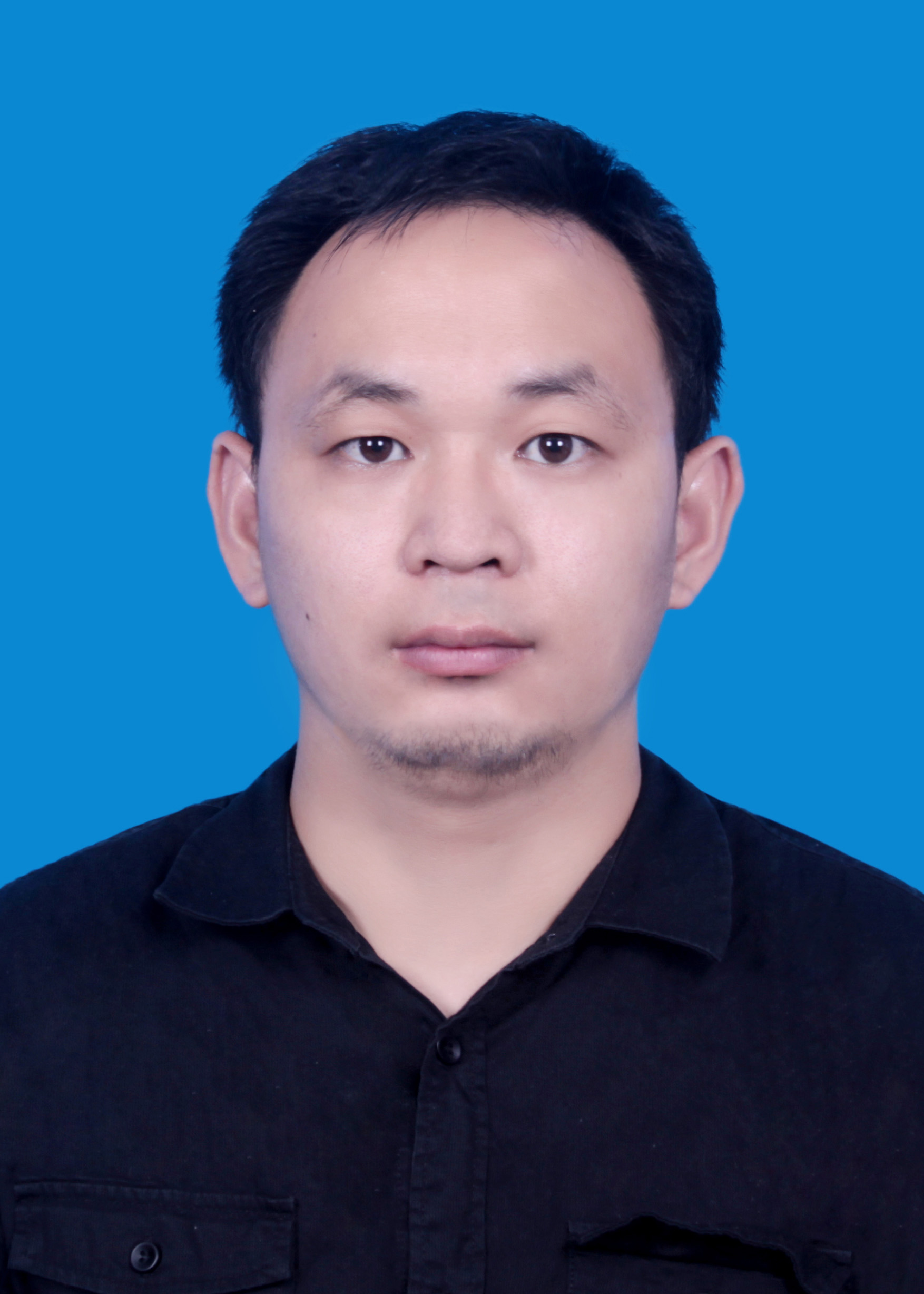}}]{Haihui Xie} received the B.S. degree and the M.S. degree in photonic and electronic engineering from Fujian Normal University, Fuzhou, China, in 2014 and 2016, respectively, and the Ph.D. degree in information and communication engineering from Sun Yat-sen University, Guangzhou, China, in 2023. He is currently a Lecturer at the School of Fujian Agriculture and Forestry University (FAFU), Fuzhou. His research interests include edge learning, optimization, and wireless communications.
\end{IEEEbiography}

\begin{IEEEbiography}
	[{\includegraphics[width=1in, height=1.25in, clip, keepaspectratio]{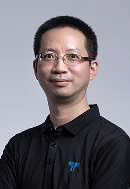}}]{Wenkun Wen} (Member, IEEE) received the Ph.D. degree in Telecommunications and Information Systems from Sun Yat-sen University, Guangzhou, China, in 2007. Since 2020, he has been with Techphant Technologies Co. Ltd., Guangzhou, China, as Chief Engineer.

From 2008 to 2009, he was with the Guangdong-Nortel R\&D center in Guangzhou, China, where he worked as a system engineer for 4G systems. From 2009 to 2012, he worked at the LTE R\&D center of New Postcom Equipment Co. Ltd., Guangzhou, China, where he served as the 4G standard team manager. From 2012 to 2018, he was with the 7th Institute of China Electronic Technology Corporation (CETC) as an expert in wireless communications. From 2018 to 2020, he served as Deputy Director of the 5G Innovation Center at CETC. His research interests include 5G/B5G mobile communications, machine-type communications, narrow-band wireless communications, and signal processing.
\end{IEEEbiography}	

\vspace{-10pt}

\begin{IEEEbiography}
	[{\includegraphics[width=1in, height=1.25in, clip, keepaspectratio]{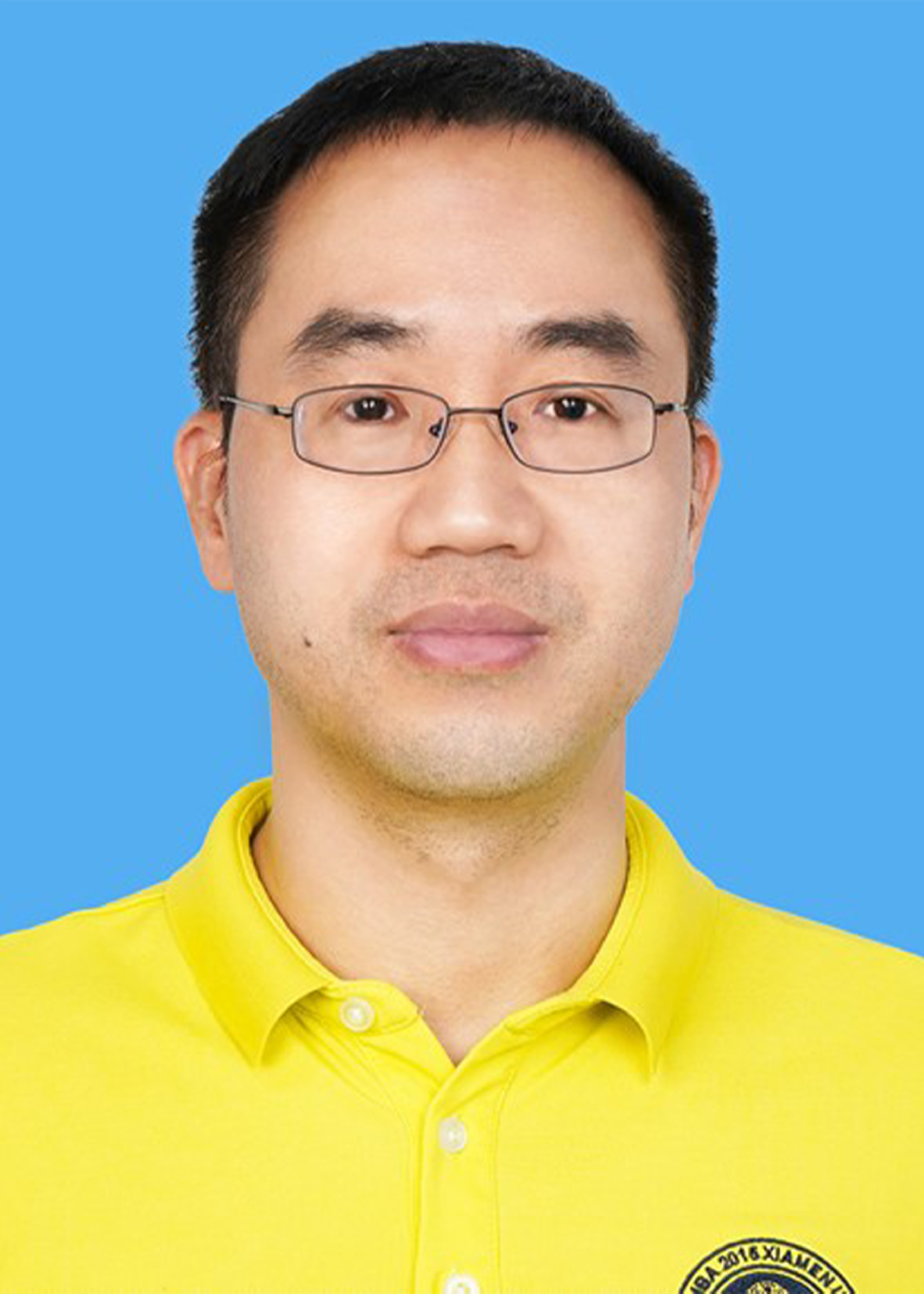}}]{Shuwu Chen} received his M.S. degree in Radio Physics from Xiamen University in 2003. He is currently a Professor at the College of Computer and Information Sciences, Fujian Agriculture and Forestry University. He also serves as Director of the Engineering Research Center of Smart Sensing and Agricultural Chip Technology, Fujian Province University. His research interests include the Internet of Things, edge computing, and artificial intelligence algorithms.
\end{IEEEbiography}

\vspace{-10pt}

\begin{IEEEbiography}
	[{\includegraphics[width=1in, height=1.25in, clip, keepaspectratio]{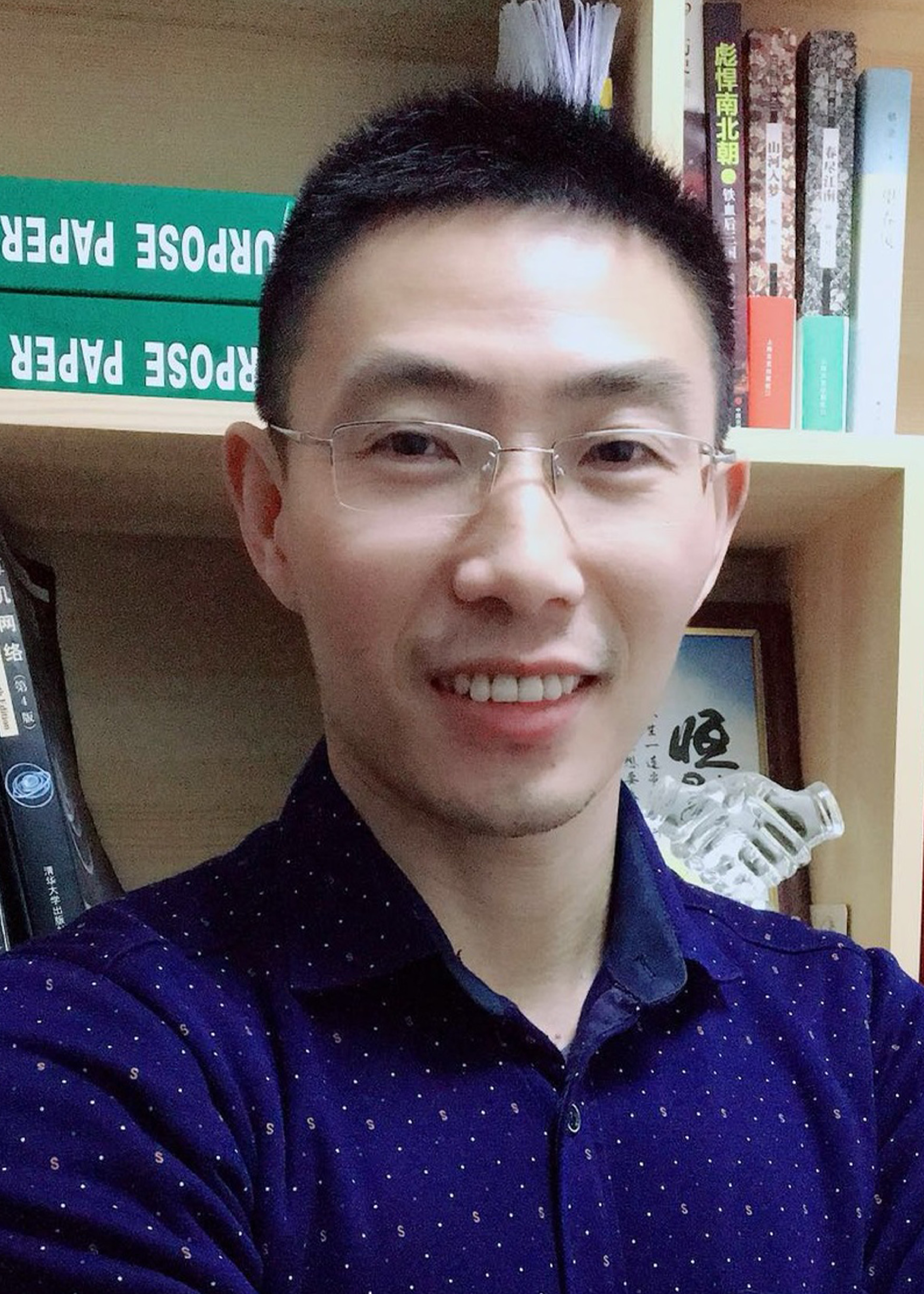}}]{Zhaogang Shu} (Senior Member, IEEE) received his B.S. and M.S. degrees in computer science from Shantou University, China, in 2002 and 2005, respectively,  a Ph.D. degree from South China University of Technology, Guangzhou, China, in 2008. 
	
	He is currently an Associate Professor in the College of Computer and Information Science at Fujian Agriculture and Forestry University, Fuzhou, China, and the director of the Network System and Cloud Computing Lab at the university. He is a senior member of the China Computer Federation and the Fujian Computer Society. From Sept. 2008 to July 2012, he served as a senior engineer and project manager at Ruijie Network Corporation in Fuzhou, China. From Oct. 2018 to Oct. 2019, he worked as a visiting professor in the MOSIAC lab at the Department of Communications and Networking, School of Electrical Engineering, Aalto University, Finland. He led more than 10 research projects and authored more than 40 papers and 10 patents. His research interests include software-defined networks, network function virtualization, 5G networks, next-generation network architecture, network security, machine learning-based network optimization, cloud computing, and edge computing. 
\end{IEEEbiography}


\vspace{-10pt}

\begin{IEEEbiography}
	[{\includegraphics[width=1in, height=1.25in, clip, keepaspectratio]{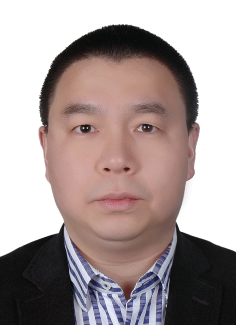}}]{Minghua Xia} (Senior Member, IEEE) received the Ph.D. degree in Telecommunications and Information Systems from Sun Yat-sen University, Guangzhou, China, in 2007.
	
	From 2007 to 2009, he was with the Electronics and Telecommunications Research Institute (ETRI) of South Korea, Beijing R\&D Center, Beijing, China, where he worked as a member and then as a senior member of the engineering staff. From 2010 to 2014, he was in sequence with The University of Hong Kong, Hong Kong, China; King Abdullah University of Science and Technology, Jeddah, Saudi Arabia; and the Institut National de la Recherche Scientifique (INRS), University of Quebec, Montreal, Canada, as a Postdoctoral Fellow. Since 2015, he has been a Professor at Sun Yat-sen University. Since 2019, he has also served as an Adjunct Professor at the Southern Marine Science and Engineering Guangdong Laboratory (Zhuhai). His research interests are in the general areas of wireless communications and signal processing.
\end{IEEEbiography}

\vfill
\end{document}